\documentclass{amsart}

\usepackage{amsthm}

\usepackage{thm-restate}

\usepackage{amsmath,amsfonts,bm}









\def\eqref#1{Equation~\ref{#1}}









\def\1{\bm{1}}








\def\va{{\bm{a}}}
\def\vb{{\bm{b}}}

\def\vx{{\bm{x}}}



\def\mC{{\bm{C}}}

\def\mG{{\bm{G}}}

\def\mO{{\bm{O}}}

\def\mLambda{{\bm{\Lambda}}}

\DeclareMathAlphabet{\mathsfit}{\encodingdefault}{\sfdefault}{m}{sl}
\SetMathAlphabet{\mathsfit}{bold}{\encodingdefault}{\sfdefault}{bx}{n}











\newcommand{\R}{\mathbb{R}}



\DeclareMathOperator*{\argmax}{arg\,max}
\DeclareMathOperator*{\argmin}{arg\,min}

\newcommand{\N}{\mathbb{N}}

\newcommand{\NM}{\mathcal{N}}
\newcommand{\veps}{\varepsilon}
\newcommand{\eps}{\veps}
\newcommand{\M}{\mathcal{M}}
\newcommand{\E}{\mathbf{E}}

\newcommand{\eee}{e_1}
\newcommand{\e}{e_2}
\newcommand{\ee}{e_3}
\newcommand{\x}{\mathbf{x}}

\newcommand{\I}{\mathbf{I}}

\newcommand{\ma}{\mathbf{A}}
\newcommand{\mb}{\mathbf{B}}
\newcommand{\mz}{\mathbf{Z}}
\newcommand{\mSigma}{\boldsymbol{\Sigma}}
\newcommand{\mxs}{\mathbf{X}^*}
\newcommand{\tx}{\mathbf{\mathcal{A}_n}}
\newcommand{\my}{\mathbf{Y}}
\newcommand{\mys}{\mathbf{Y}^*}
\newcommand{\mU}{\mathbf{U}}
\newcommand{\mD}{\mathbf{D}}
\newcommand{\mV}{\mathbf{V}}

\newcommand{\F}{\mathcal{F}}
\newcommand{\X}{\mathcal{X}}

\newcommand{\0}{\mathbf{0}}

\newcommand{\nc}{\normalcolor}

\newcommand{\mv}{\mathbf{v}}

\newcommand{\rF}{\mathrm{F}}

\global\long\def\defeq{\stackrel{\textrm{def}}{=}}%

\newtheorem{proposition}{Proposition}
\newtheorem{theorem}{Theorem}
\numberwithin{equation}{section}
\newtheorem{assumption}{Assumption}
\newtheorem{lemma}{Lemma}
\newtheorem{remark}{Remark}
\newtheorem{definition}[theorem]{Definition}
\newtheorem{corollary}{Corollary}

\numberwithin{equation}{section}
\numberwithin{lemma}{section}
\numberwithin{assumption}{section}
\numberwithin{remark}{section}

\usepackage{graphicx}
\usepackage{tcolorbox}
\usepackage{hyperref}
\usepackage{url}
\usepackage{subcaption}

\usepackage{booktabs, makecell} 
\usepackage{enumitem} 
\usepackage[noabbrev,capitalize]{cleveref} 
\newlist{propenum}{enumerate}{1} 
\setlist[propenum]{label=\alph*), ref=\theproposition~(\alph*)}
\crefalias{propenumi}{proposition} 
\crefformat{eq}{Eq.~(#2#1#3)}
\crefformat{sec}{Section~#2#1#3}
\crefformat{def}{Definition~#2#1#3}
\crefformat{opt}{~(#2#1#3)}
\crefformat{ineq}{Inequality~(#2#1#3)}
\crefformat{lemma}{Lemma~#2#1#3}
\crefformat{cor}{Corollary~#2#1#3}
\crefformat{theorem}{Theorem~#2#1#3}
\crefformat{alg}{Algorithm~#2#1#3}
\crefformat{claim}{Claim~#2#1#3}
\crefformat{prop}{Proposition~#2#1#3}

\usepackage{hyperref}

\usepackage{cleveref}
\usepackage{enumitem}
\makeatletter
\newcommand{\mylabel}[2]{#2\def\@currentlabel{#2}\label{#1}}
\makeatother

\usepackage{amsfonts}
\usepackage{amssymb}%
\usepackage{mathtools}
\usepackage{thmtools, thm-restate}

\numberwithin{theorem}{section}

 \usepackage{natbib}

\title[SNN: Approximation Theory and Optimization Landscape]{Spectral Neural Networks: Approximation Theory and Optimization Landscape}

\date{\today}

\author{Chenghui Li}
\address{Department of Statistics\\
University of Wisconsin Madison\\
Madison, Wisconsin, 53701, USA \\}
\email{cli539@stat.wisc.edu}

\author{Rishi Sonthalia}
\address{Department of Mathematics \\
University of California, Los Angeles \\
Los Angeles, California, 90095, USA}
\email{rsonthal@math.ucla.edu}

\author{Nicol\'as Garc\'ia Trillos}
\address{Department of Statistics\\
University of Wisconsin Madison\\
Madison, Wisconsin, 53701, USA }
\email{garciatrillo@.wisc.edu}
\thanks{\textbf{Acknowledgements:} The authors would like to thank Abiy Tasissa and Yuetian Luo for enlightening discussions on the topics covered in this paper. This material is based upon work supported by the National Science Foundation under Grant Number
DMS 1641020 and was started during the summer of 2022 when the authors participated in the AMS-MRC program: \textit{Data
Science at the Crossroads of Analysis, Geometry, and Topology.} NGT was supported by the NSF
grants DMS-2005797 and DMS-2236447. CL and NGT would like to thank the IFDS at UW-Madison and NSF through TRIPODS grant 2023239 for their support. }

%

\makeatletter
\renewcommand{\paragraph}{%
  \@startsection{paragraph}{4}%
  {\z@}{0ex \@plus 0ex \@minus .2ex}{-1em}%
  {\normalfont\normalsize\bfseries}%
}
\makeatother

\begin{document}

\maketitle

\begin{abstract}

There is a large variety of machine learning methodologies that are based on the extraction of spectral geometric information from data. However, the implementations of many of these methods often depend on traditional eigensolvers, which present limitations when applied in practical online big data scenarios. To address some of these challenges, researchers have proposed different strategies for training neural networks as alternatives to traditional eigensolvers, with one such approach known as Spectral Neural Network (SNN). In this paper, we investigate key theoretical aspects of SNN. First, we present quantitative insights into the tradeoff between the number of neurons and the amount of spectral geometric information a neural network learns. Second, we initiate a theoretical exploration of the optimization landscape of SNN's objective to shed light on the training dynamics of SNN. Unlike typical studies of convergence to global solutions of NN training dynamics, SNN presents an additional complexity due to its non-convex ambient loss function.

\end{abstract}

\section{Introduction}

In the past decades, researchers from a variety of disciplines have studied the use of spectral geometric methods to process, analyze, and learn from data. These methods have been used in supervised learning \cite{ando2006learning,belkin2006manifold,smola2003kernels}, clustering \cite{ng2001spectral,von2007tutorial}, dimensionality reduction \cite{belkin2001laplacian,coifman2005geometric}, and contrastive learning \cite{haochen2021provable}. 
While the aforementioned methods have strong theoretical foundations, their algorithmic implementations often depend on traditional eigensolvers. These eigensolvers tend to underperform in practical big data scenarios due to high computational demands and memory constraints. Moreover, they are particularly vulnerable in online settings since the introduction of new data typically necessitates a full computation from scratch.




To overcome some of the drawbacks of traditional eigensolvers, new frameworks for learning from spectral geometric information that are based on the training of neural networks have emerged. A few examples are Eigensolver net (See in Appendix \ref{appendix:other SNN}), Spectralnet \cite{shaham2018spectralnet}, and Spectral Neural Network (SNN) \cite{haochen2021provable}. In all the aforementioned approaches, the goal is to find neural networks that can approximate the spectrum of a large target matrix, and the differences among these approaches lie mostly in the specific loss functions used for training; here we focus on SNN, and provide some details on Eigensolver net and Spectralnet in Appendix \ref{appendix:other SNN} for completeness. To explain the training process in SNN, consider a data set $\X=\{x_1, \dots, x_n \}$ in $\R^d$ and a $n \times n$ adjacency matrix $\tx$ describing similarity among points in $\X_n$. A NN is trained by minimizing the \textit{spectral constrastive} loss function:
\begin{align}\label{eq:Appr}
    \min _{\theta \in \Theta}\,\, L(\theta)\defeq \ell(\my_\theta),\quad \text{ where } \quad \ell(\my) \defeq \left\|\my \my^{\top} - \tx \right\|_{\mathrm{F}}^2, \quad \my \in \R^{n \times r},
\end{align}
through first-order optimization methods; see more details in Appendix \ref{Appendix:Supplementary:Advantage of SNN}. In the above and in the sequel, $\theta$ represents the vector of parameters of the neural network $f_\theta: \R^d \rightarrow \R^r$, here a multi-layer ReLU neural network --see a detailed definition in Appendix \ref{app:MultilayerReLU}--, which can be interpreted as a feature or representation map for the input data; the matrix $\my_{\theta}$ is the $n \times r$ matrix whose rows are the outputs $f_\theta(x_1), \dots, f_\theta(x_n)$;  $\lVert \cdot \rVert_{\mathrm{F}}$ is the Frobenius norm. 

Compared with plain eigensolver approaches, SNN has the following advantages:
    \begin{enumerate}[nosep, leftmargin=*]
        \item \textbf{Training:} the spectral contrastive loss $\ell$ lends itself to minibatch training. Moreover, each iteration in the mini-batch training is cheap and only requires knowing the local structure of the adjacency matrix around a given point, making this approach suitable for online settings; see Appendix \ref{Appendix:Supplementary:Advantage of SNN} for more details.
        \item \textbf{Memory:} when the number data points is large, storing an eigenvector of $\tx$ may be costly, while SNN can trade-off between accuracy and memory by selecting the dimension of the space of parameters of the neural network. 
        \item \textbf{Out-of-sample extensions:} A natural out-of-sample extension is built by simple evaluation of the trained neural network at an arbitrary input point.
        
    \end{enumerate}

Motivated by these algorithmic advantages, in this paper we investigate some of SNN's theoretical underpinnings. In concrete terms, we explore the following three questions:
\begin{tcolorbox}
    \begin{enumerate}[nosep, leftmargin=*, label=\textnormal{(\arabic*)}]
        \item[\mylabel{question:1}{Q1}] Is it possible to approximate the eigenvectors of a large adjacency matrix with a neural network? How large does the neural network need to be to achieve a certain degree of approximation? 
        \item[\mylabel{question:2}{Q2}] Is it possible to use \eqref{eq:Appr} to build an approximating neural network? 
        \item[\mylabel{question:3}{Q3}] What can be said about the landscape of the objective function in \ref{eq:Appr}? 
    \end{enumerate}
\end{tcolorbox}

\paragraph{Contributions} We provide answers to the above three questions in a specific setting to be described shortly. We also formulate and discuss open problems that, while motivated by our current investigation, we believe are of interest in their own right.  

To make our setting more precise, through our discussion we adopt the \textit{manifold hypothesis} and assume the data set $\X=\{x_1, \dots, x_n \}$ to be supported on a low dimensional manifold $\M$ embedded in $\R^d$; see precise assumptions in Assumptions \ref{assum:data}. We also assume that $\X$ is endowed with a similarity matrix $\mG^\varepsilon$ with entries
\begin{align}\label{eq:eps-graph}
    \mG^\eps_{i j}=\eta\left(\frac{\left\|x_{i}-x_{j}\right\|}{\varepsilon}\right),
\end{align}
where $\|x-y\|$ denotes the Euclidean distance between $x$ and $y$, $\varepsilon$ is a proximity parameter, and $\eta$ is a decreasing, non-negative function. In short, $\mG^\veps$ measures the similarity between points according to their proximity. From $\mG^\veps$ we define the adjacency matrix $\tx$ appearing in \eqref{eq:Appr} by
\begin{align}\label{eq-def:Graph Laplacian}
    \tx\defeq \mD_\mG^{-\frac{1}{2}}\mG\mD_\mG^{-\frac{1}{2}}+a\I,
\end{align}
where $\mD_\mG$ is the degree matrix associated to $\mG$ and $a> 1$ is a fixed quantity. Here we distance ourselves slightly from the choice made in the original SNN paper \cite{haochen2021provable}, where $\tx$ is taken to be $\mG$ itself, and instead consider a normalized version. This is due to the following key properties satisfied by our choice of $\tx$ (see also Remark \ref{rem:AnVsDeltan} in Appendix \ref{app:PropertiesAn}) that make it more suitable for theoretical analysis.
\begin{proposition}\label{prop:An positive definite}
The matrix $\tx$ defined in \eqref{eq:Appr} satisfies the following properties:
\begin{enumerate}
    \item $\tx$ is symmetric positive definite.\label{prop:property1}
    \item  $\tx$'s $r$ \textit{top} eigenvectors (the ones corresponding to the $r$ largest eigenvalues) coincide with the eigenvectors of the $r$ \textit{smallest} eigenvalues of the symmetric normalized graph Laplacian matrix (see \cite{von2007tutorial}):
    \begin{equation}
      \Delta_n \defeq  \I - \mD_\mG^{-1/2}\mG \mD_\mG^{-1/2}.
      \label{eqn:GraphLaplacian}
    \end{equation}   \label{prop:property2}
\end{enumerate}
\end{proposition}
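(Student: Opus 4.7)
The plan is to establish both claims by relating $\tx$ to the normalized graph Laplacian $\Delta_n$ via a simple affine transformation, after which spectral properties transfer essentially for free.

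First I would observe that because $\eta$ is a scalar function applied to the symmetric distances $\|x_i - x_j\|$, the similarity matrix $\mG$ is symmetric, and so is the conjugation $\mD_\mG^{-1/2}\mG\mD_\mG^{-1/2}$ (note $\mD_\mG$ is diagonal with strictly positive entries provided every vertex has at least one neighbor with $\eta > 0$, which I would state as a standing nondegeneracy assumption). Adding the scalar multiple $a\I$ preserves symmetry, yielding claim~\ref{prop:property1}'s symmetry part. For positive definiteness, I would invoke the standard fact that the symmetric normalized Laplacian $\Delta_n = \I - \mD_\mG^{-1/2}\mG\mD_\mG^{-1/2}$ has spectrum contained in $[0,2]$ (see, e.g., \cite{von2007tutorial}). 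Rearranging gives that $\mD_\mG^{-1/2}\mG\mD_\mG^{-1/2}$ has spectrum in $[-1,1]$, and hence
\begin{equation*}
\mathrm{spec}(\tx) \subset [a-1,\, a+1].
\end{equation*}
Since $a > 1$, every eigenvalue of $\tx$ is strictly positive, which together with symmetry gives positive definiteness.

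For claim~\ref{prop:property2}, the key identity is
\begin{equation*}
\tx \;=\; \mD_\mG^{-1/2}\mG\mD_\mG^{-1/2} + a\I \;=\; (a+1)\I - \Delta_n.
\end{equation*}
Thus $\tx$ and $\Delta_n$ share the same eigenvectors, and if $\lambda$ is an eigenvalue of $\Delta_n$ with eigenvector $v$, then $\tx v = (a+1-\lambda) v$. The map $\lambda \mapsto a+1-\lambda$ is strictly decreasing, so ordering the eigenvalues of $\Delta_n$ as $0 \le \lambda_1 \le \lambda_2 \le \cdots \le \lambda_n$ corresponds to ordering those of $\tx$ as $a+1-\lambda_1 \ge a+1-\lambda_2 \ge \cdots \ge a+1-\lambda_n$. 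In particular, the eigenspace associated with the $r$ largest eigenvalues of $\tx$ coincides with the eigenspace associated with the $r$ smallest eigenvalues of $\Delta_n$, which is exactly claim~\ref{prop:property2}.

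There is no real obstacle here; the only subtle point worth flagging explicitly is the well-definedness of $\mD_\mG^{-1/2}$, which requires every row-sum of $\mG$ to be strictly positive. I would either add a one-line remark that this holds under the data/kernel assumptions in Assumption~\ref{assum:data} (in particular, $\eta(0) > 0$ ensures each vertex has nonzero self-similarity, or alternatively the $\veps$-graph is connected for typical parameter regimes), or simply cite the spectral bound for $\Delta_n$ from \cite{von2007tutorial}, which already presupposes this normalization is valid.
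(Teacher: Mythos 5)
Your proof is correct and takes essentially the same approach as the paper: both hinge on the identity $\tx = (a+1)\I - \Delta_n$ to transfer the eigenvector statement, and both reduce positive definiteness to the fact that the spectrum of $\mD_\mG^{-1/2}\mG\mD_\mG^{-1/2}$ lies in $[-1,1]$. The only cosmetic difference is that you cite the standard bound $\mathrm{spec}(\Delta_n)\subset[0,2]$, whereas the paper re-derives the underlying fact by noting that $\mD_\mG^{-1/2}\mG\mD_\mG^{-1/2}$ is similar to the Markov transition matrix $\mD_\mG^{-1}\mG$, whose eigenvalues have modulus at most one; these are the same observation packaged differently.
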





The above two properties, proved in Appendix \ref{app:PropertiesAn}, are useful when combined with recent results on the regularity of graph Laplacian eigenvectors over proximity graphs \cite{calder2022lipschitz} (see Appendix \ref{Append:Auxiliary Lemma:Manifold learning}) and some results on the approximation of Lipschitz functions on manifolds using neural networks \cite{chen2019nonparametric} (see Appendix \ref{append:NNUniversal}). In particular, we answer question \ref{question:1}, which belongs to the realm of approximation theory, by providing a concrete bound on the number of neurons in a multi-layer ReLU NN that are necessary to approximate the $r$ smallest eigenvectors of the normalized graph Laplacian matrix $\Delta_n$ (as defined in \ref{eqn:GraphLaplacian}) and thus also the $r$ largest eigenvectors of $\tx$;
this is the content of Theorem \ref{thm-appr:main}. 


\begin{figure}[htbp]
		\par\medskip
		\centering
		\begin{minipage}[t]{0.45\textwidth}
			\centering
			\includegraphics[width=5.5cm]{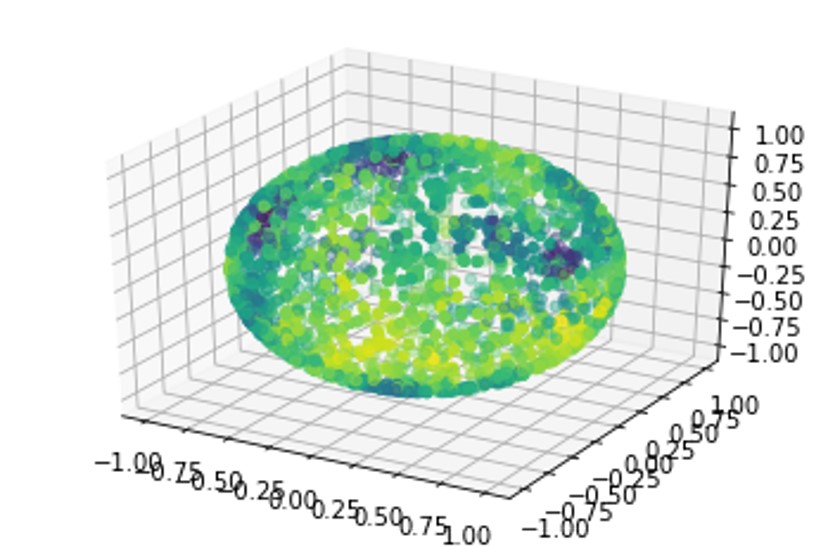}
			\caption{(A)}
			\label{fig:Illustration NN}
		\end{minipage}
		\begin{minipage}[t]{0.45\textwidth}
			\centering
			\includegraphics[width=5.5cm]{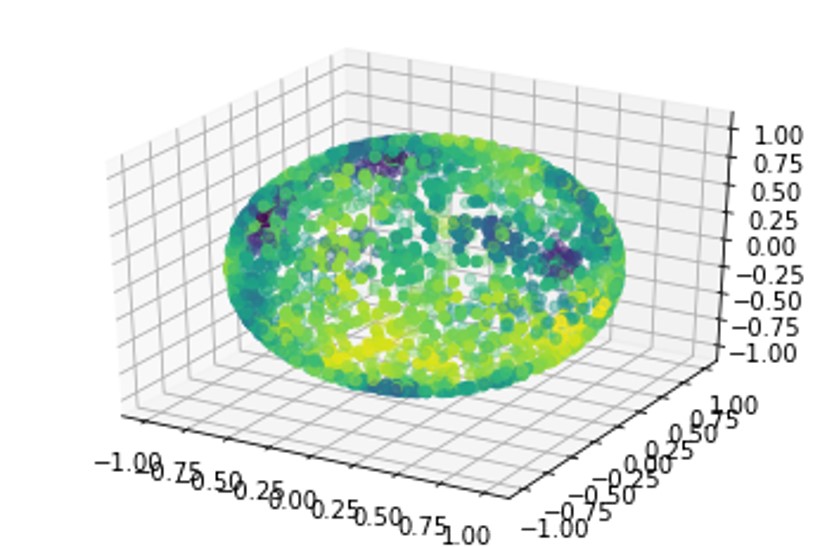}
			\caption{(B)}
			\label{fig:Illustration-Eigendecomposition}
		\end{minipage}	
		\caption{(B) shows the first eigenvector for the Laplacian of a proximity graph from data points sampled from $S^2$ obtained using an eigensolver. (A) shows the same eigenvector but obtained using SNN. The difference between the two figures is minor, showing that the neural network learns the eigenvector of the graph Laplacian well. See details in Appendix \ref{app:exp1}.}
		\label{fig:illustration}
	\end{figure}

\begin{figure}
    \centering
    \subfloat[Initialized Near Optimal \label{fig:NN-opt}]{\includegraphics[width=0.33\linewidth]{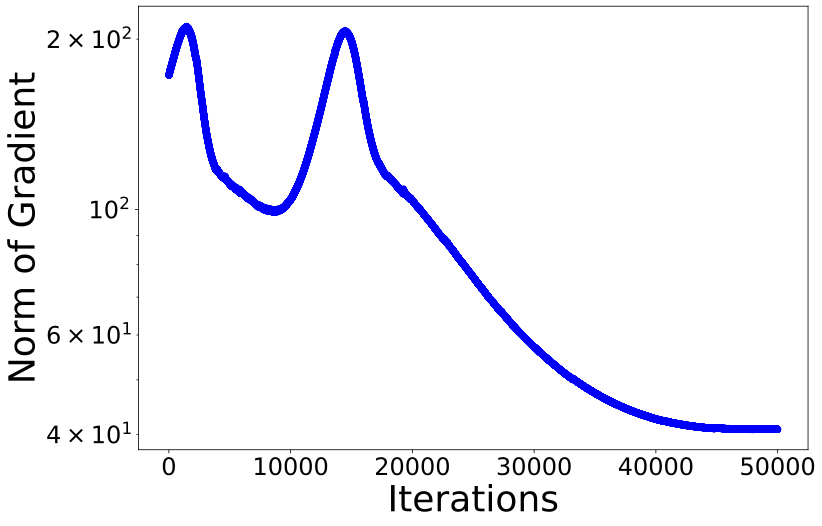}}
    \subfloat[Initialized Near Saddle \label{fig:NN-saddle}]{\includegraphics[width=0.33\linewidth]{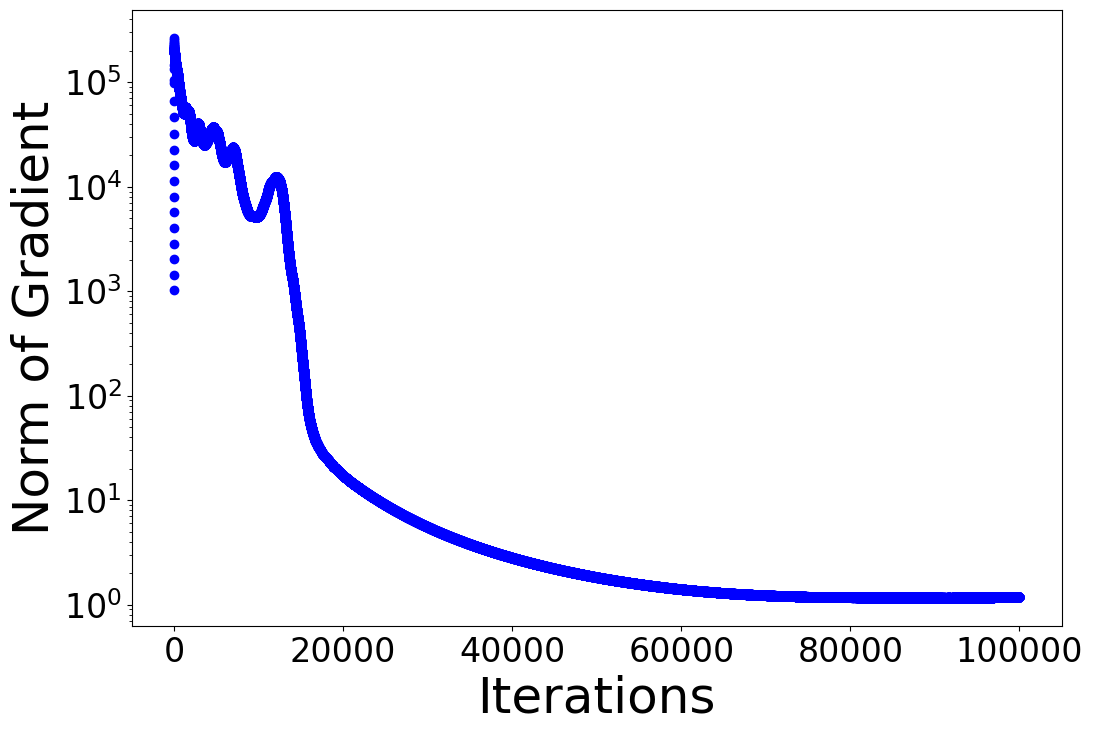}}
    \subfloat[Initialized Near Saddle \label{fig:NN-saddle-distance}]{\includegraphics[width=0.33\linewidth]{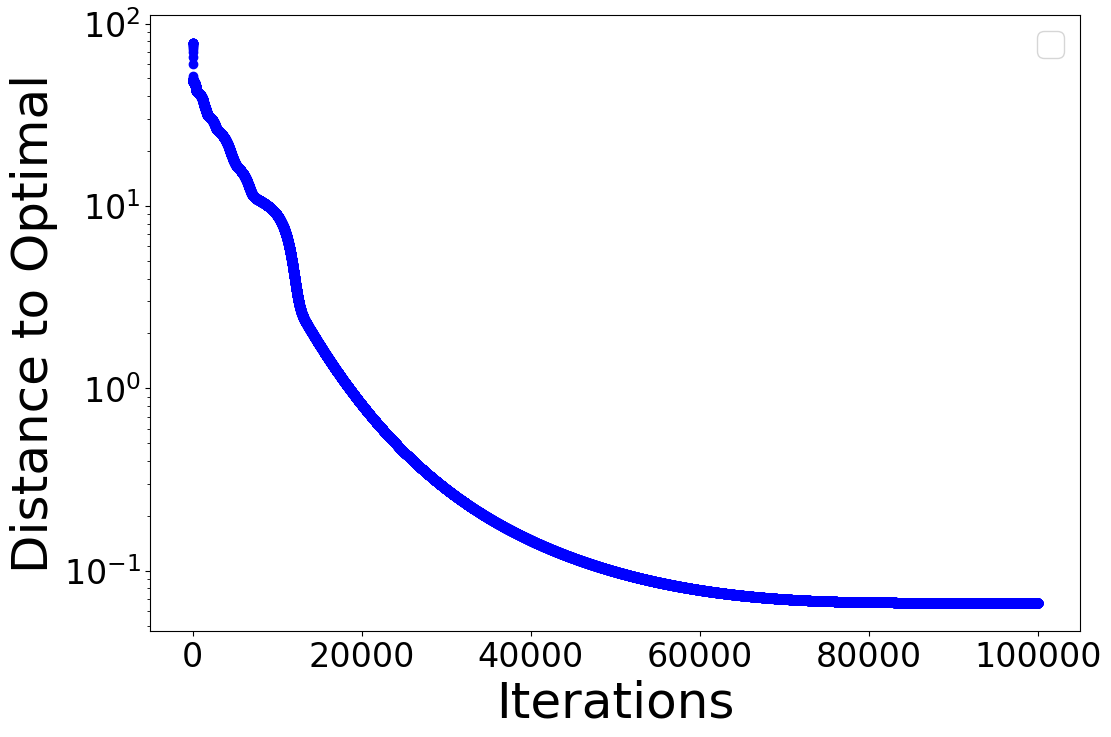}}
    \caption{(a) and (b) Sum of the norms of the gradients for a two-layer ReLU Neural Network. In (a), the network is initialized near the global optimal solution and in (b) the network is initialized near a saddle point. (c) shows the distance between the current outputs of the neural network and the optimal solution for the case when it was initialized near a saddle point. More details are presented in Appendix \ref{app:exp2}.}
    \label{fig:NN}
\end{figure}

\begin{figure}
    \centering
    \subfloat[Initialized Near Optimal \label{fig:ambient-opt}]{\includegraphics[width=0.33\linewidth]{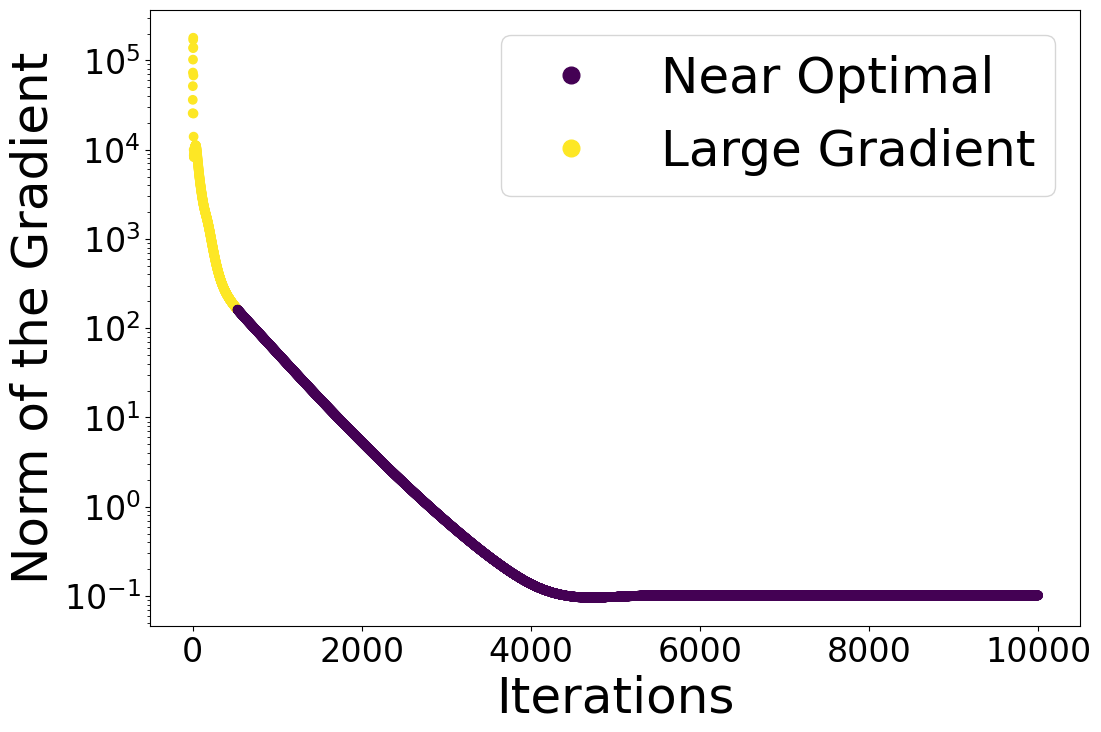}}
    \subfloat[Initialized Near Saddle \label{fig:ambient-saddle}]{\includegraphics[width=0.33\linewidth]{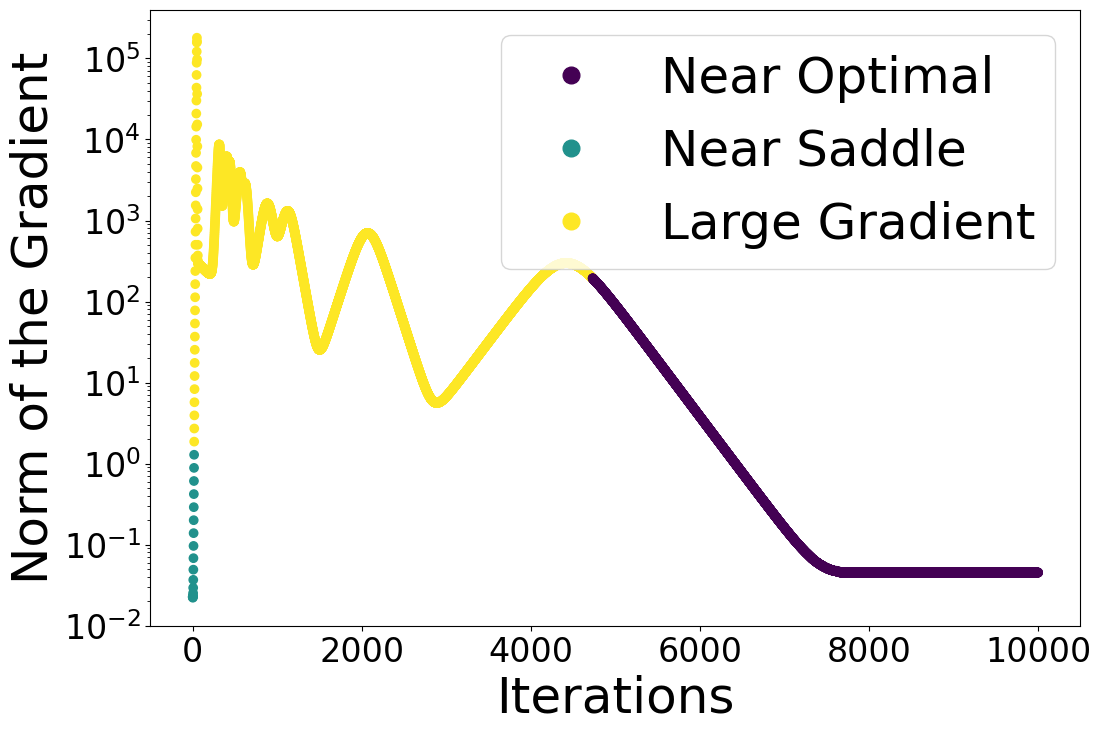}}
    \subfloat[Initialized Near Saddle \label{fig:ambient-saddle-distance}]{\includegraphics[width=0.33\linewidth]{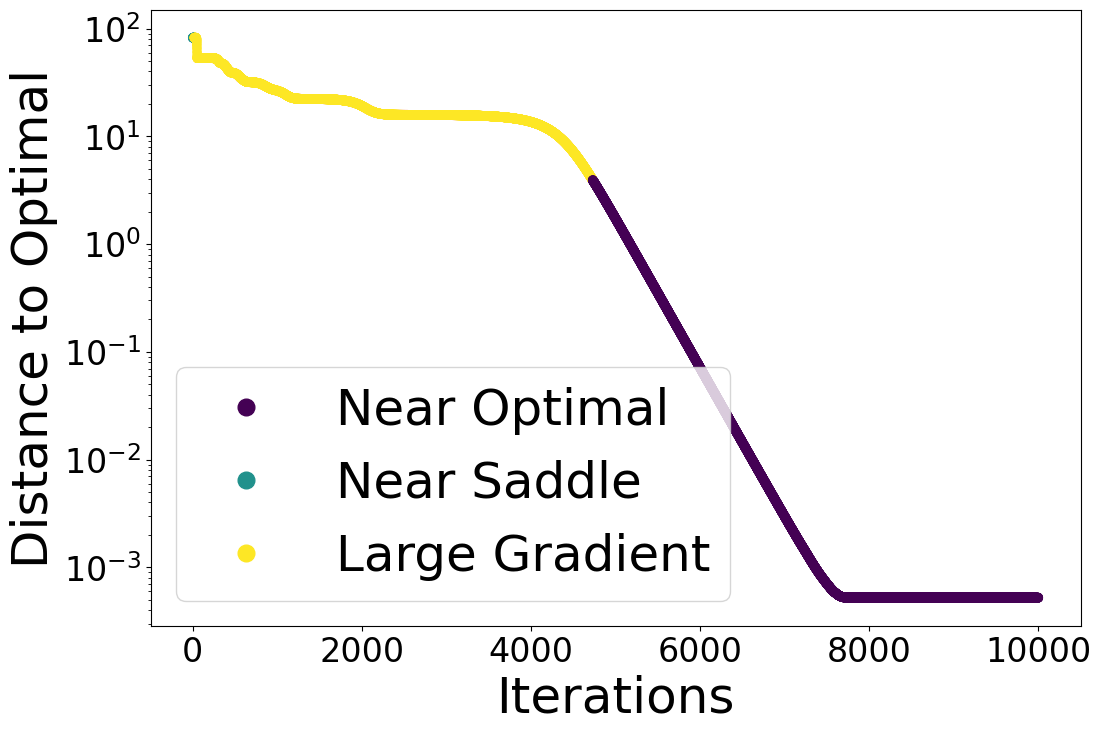}}
    \caption{Norms of the gradients for the ambient problem and the distance to the optimal solution. In (a), $\my$ is initialized near the global optimal solution, and in (b) $\my$ is initialized near a saddle point. c) shows the distance between $\my$ and the optimal solution for the case when it was initialized near a saddle point.}
    \label{fig:ambient}
\end{figure}

While our answer to question \ref{question:1} addresses the existence of a neural network approximating the spectrum of $\tx$, it does not provide a \textit{constructive} way to find one such approximation. We thus address question \ref{question:2} and prove that an approximating NN can be constructed by solving the optimization problem \ref{eq:Appr}, i.e., by finding a global minimizer of SNN's objective function. A precise statement can be found in Theorem \ref{thm:eigenappr}. To prove this theorem, we rely on our estimates in Theorem \ref{thm-appr:main} and on some auxiliary computations involving a global optimizer $\my^*$ of the ``ambient space problem":
\begin{align}\label{eq:optimization}
    \min_{\my \in \R^{n \times r}}\,\, \ell(\my).
\end{align}
For that we also make use of property \ref{prop:property1} in Proposition \ref{prop:An positive definite}, which allows us to guarantee, thanks to the Eckart–Young–Mirsky theorem (see \cite{eckart1936approximation} ), that solutions $\my$ to \eqref{eq:optimization} coincide, up to multiplication on the right by a $r\times r$ orthogonal matrix, with a $n \times r$ matrix $\my^*$ whose columns are scaled versions of the top $r$ normalized eigenvectors of the matrix $\tx$; see a detailed description of $\my^*$ in Appendix \ref{app:Y*}.




After discussing our spectral approximation results, we move on to discussing question \ref{question:3}, which is related to the hardness of optimization problem \ref{eq:Appr}. Notice that, while $\my_{\theta^*}$ is a good approximator for $\tx$'s spectrum according to our theory, it is unclear whether $\theta^*$ can be reached through a standard training scheme. In fact, question \ref{question:3}, as stated, is a challenging problem. This is not only due to the non-linearities in the neural network, but also because, in contrast to more standard theoretical studies of training dynamics of over-parameterized NNs (e.g., \cite{ChizatBach18,wojtowytsch2020convergence}), the spectral contrastive loss function $\ell$ is non-convex in the ``ambient space" variable $\my$. Despite this additional difficulty, numerical experiments —see Figure \ref{fig:illustration} for an illustration— suggest that first order optimization methods can find global solutions to \eqref{eq:Appr}, and our goal here is to take a first step in the objective of understanding this behavior mathematically.

To begin, we present some numerical experiments where we consider different initializations for the training of SNN. Here we take 100 data points from MNIST and let $\mathcal{A}_n$ be the $n \times n$ gram matrix for the data points for simplicity. We remark that while we care about a $\mathcal{A}_n$ with a specific form for our approximation theory results, our analysis of the loss landscape described below holds for an arbitrary positive semi-definite matrix. In Figure \ref{fig:NN}, we plot the norm of the gradient during training when initialized in two different regions of parameter space. Concretely, in a region of parameters for which $\my_\theta$ is close to a solution $\my^*$ to problem \ref{eq:optimization} and a region of parameters for which $\my_\theta$ is close to a saddle point of the ambient loss $\ell$. We compare these plots to the ones we produce from the gradient descent dynamics for the ambient problem \ref{eq:optimization}, which are shown in Figure \ref{fig:ambient}. We notice a similar qualitative behavior with the training dynamics of the NN, suggesting that the landscape of problem \ref{eq:Appr}, if the NN is properly overparameterized, inherits properties of the landscape of $\ell$. 

Motivated by the previous observation, in section \ref{sec:Riemannian Optimization} we provide a careful landscape analysis of the loss function $\ell$ introduced in \eqref{eq:Appr}. We deem this landscape to be ``benign", in the sense that it can be fully covered by the union of three regions described as follows: 1) the region of points close to global optimizers of \eqref{eq:optimization}, where one can prove (Riemannian) strong convexity under a suitable quotient geometry; 2) the region of points close to saddle points, where one can find escape directions; and, finally, 3) the region where the gradient of $\ell$ is large. Points in these regions are illustrated in Figures \ref{fig:ambient-saddle} and \ref{fig:ambient-saddle-distance}. The relevance of this global landscape characterization is that it implies convergence of most first-order optimization methods, or slight modifications thereof, toward global minimizers of the ambient space problem \ref{eq:optimization}. This characterization is suggestive of analogous properties for the NN training problem in an overparameterized regime, but a full theoretical analysis of this is left as an open problem.


\nc

%

In summary, the main contributions of our work are the following:
\begin{itemize}[nosep]
    \item We show that we can approximate the eigenvectors of a large adjacency matrix with a NN, provided that the NN has sufficiently many neurons; see Theorem \ref{thm-appr:main}. Moreover, we show that by solving \ref{eq:Appr} one can \textit{construct} such approximation provided the parameter space of the NN is rich enough; see Theorem \ref{thm:eigenappr}.
    \item We provide precise error bounds for the approximation of eigenfunctions of a Laplace-Beltrami operator with NNs; see Corollary \ref{cor:universalappmanifold}. In this way, we present an example of a setting where we can rigorously quantify the error of approximation of a solution to a PDE on a manifold with NNs.
    \item Motivated by numerical evidence, we begin an exploration of the optimization landscape of SNN and in particular provide a full description of SNN's associated ambient space optimization landscape. This landscape is shown to be benign; see discussion in Section \ref{sec:Riemannian Optimization}.
\end{itemize}

\subsection{Related work}\label{sec:literature}

\paragraph{Spectral clustering and manifold learning} Several works have attempted to establish precise mathematical connections between the spectra of graph Laplacian operators over proximity graphs and the spectrum of weighted Laplace-Beltrami operators over manifolds. Some examples include \cite{Shi2015Convergence,Dmitri2014Graph,trillos2018error,Lu2019GraphAT,calder2022improved,calder2022lipschitz,Dunson2021Spectral,Wormell2021Spectral}. In this paper we use adaptations of the results in \cite{calder2022lipschitz} to infer that, with very high probability, the eigenvectors of the normalized graph Laplacian matrix $\Delta_n$ defined in \eqref{eqn:GraphLaplacian} are essentially Lipschitz continuous functions. These regularity estimates are one of the crucial tools for proving our Theorem \ref{thm-appr:main}.

\paragraph{Contrastive Learning} Contrastive learning is a self-supervised learning technique that has gained considerable attention in recent years due to its success in computer vision, natural language processing, and speech recognition \cite{chen2020simple,chen2020big,chen2020improved,he2020momentum}. Theoretical properties of contrastive representation learning were first studied by \cite{arora2019theoretical,Tosh2021contrastive,lee2021predicting} where they assumed conditional independence. \cite{haochen2021provable} relaxes the conditional independence assumption by imposing the manifold assumption. With the spectral contrastive loss \eqref{eq:Appr} crucially in use, \cite{haochen2021provable} provides an error bound for downstream tasks. In this work, we analyze how the neural network can approximate and optimize the spectral loss function \eqref{eq:Appr}, which is the pertaining step of \cite{haochen2021provable}. 

\paragraph{Neural Network Approximations.} Given a function $f$ with certain amount of regularity, many works have studied the tradeoff between width, depth, and total number of neurons needed and the approximation \cite{petersen2020neural, lu2021deep}. Specifically, \cite{SHEN201974} looks at the problem Holder continuous functions on the unit cube, \cite{DBLP:conf/colt/Yarotsky18, Shen2019DeepNA} for continuous functions on the unit cube, and \cite{petersen2020neural, schmidt2019deep, haochen2021provable} consider the case when the function is defined on a manifold. A related area is that of neural network memorization of a finite number of data points \cite{NEURIPS2019_dbea3d0e}. In this paper, we use these results to show that for our specific type of regularity, we can prove similar results. 
\paragraph{Neural Networks and Partial Differential Equations} \cite{raissi2019physics} introduced Physics Inspired Neural Networks as a method for solving PDEs using neural networks. Specifically, \cite{Weinan2017TheDR, bhatnagar2019prediction, raissi2019physics} use neural networks to parameterize the solution as use the PDE as the loss function. Other works such as \cite{10.1145/2939672.2939738,zhu2018bayesian,adler2017solving,bhatnagar2019prediction} use neural networks to parameterize the solution operator on a given mesh on the domain. Finally, we have that eigenfunctions of operators on function spaces have a deep connection to PDEs. Recent works such as \cite{kovachki2021neural, li2020fourier, li2020neural} demonstrate how to learn these operators. In this work we show that we can approximate eigenfunctions to a weighted Laplace-Beltrami operator using neural networks.  

\paragraph{Shallow Linear Networks and Non-convex Optimization in Linear Algebra Problems} One of the main objects of study is the ambient problem Equation \ref{eq:Appr}. This formulation of the problem is related to linear networks. Linear networks are neural networks with identity activation. A variety of prior works have studied many different aspects of shallow linear networks such as the loss landscape and optimization dynamics \cite{BALDI198953, pmlr-v139-tarmoun21a, pmlr-v139-min21c, brechet2023critical}, and generalization for one layer networks \cite{Dobriban2015HighDimensionalAO, Hastie2019SurprisesIH, Bartlett2020BenignOI, kausik2023generalization}. Of relevance are also other works in the literature studying optimization problems very closely related to \eqref{eq:optimization}. For example, in Section 3 in \cite{NonconvexGeometryLowRank}, there is a landscape analysis for problem \ref{eq:optimization} when the matrix $\tx$ is assumed to have rank smaller than or equal to $r$. That setting is typically referred to as overparameterized or exactly parameterized, whereas here our focus is on the underparameterized setting. On the other hand, the case studied in section 3 in \cite{OverviewNonConvexandLowRank} is the simplest case we could consider for our problem and corresponds to $r=1$. In this simpler case, the non-convexity of the objective is completely due to a sign ambiguity, which makes the analysis more straightforward and the need to introduce quotient geometries less pressing.

\section{Spectral Approximation with neural networks}\label{sec:universal approx}

Through this section we make the following assumption on the generation process of the data $\X_n$.

\begin{assumption}\label{assum:data}
The points $x_{1}, \ldots, x_{n}$ are assumed to be sampled from a distribution supported on an $m$-dimensional manifold $\M$ that is assumed to be smooth, compact, orientable, connected, and without a boundary.  We assume that this sampling distribution has a smooth density $\rho : \M \rightarrow \R_+$ with respect to $\M's$ volume form, and assume that $\rho$ is bounded away from zero and also bounded above by a constant. 

\end{assumption}

\subsection{Spectral approximation with multilayer ReLU NNs}\label{sec:universal appr:main result}

\begin{restatable}[Spectral approximation of normalized Laplacians with neural networks]{thm}{UniversalAppr}\label{thm-appr:main}
Let $r \in \N$ be fixed. Under Assumptions \ref{assum:data}, there are constants $c, C$ that depend on $\M, \rho$, and the embedding dimension $r$, such that, with probability at least  
    \[
        1-C \varepsilon^{-6m} \exp \left(-c n \varepsilon^{m+4}\right)
    \]
    for every $\delta \in (0,1)$ there are $\kappa,L,p, N$ and a ReLU neural network $f_\theta \in \mathcal{F}(r, \kappa, L, p, N)$ (defined in Equation~\ref{eq-def:F class}), such that:
\begin{enumerate}[nosep,leftmargin=*]
    \item $\sqrt{n}\| \my_\theta -\mys\|_{\infty,\infty} \leq C(\delta+ \eps^2)$, and thus also $ \lVert \my_\theta - \my^* \rVert_{\rF} \leq C\sqrt{r}(\delta + \veps^2)$ .
    \item The depth of the network, $L$, satisfies: $L \le C\left(\log \frac{1}{\delta}+\log d\right)$, and its width, $p$, satisfies $p \le C\left(\delta^{-m}+d\right)$.
    \item The number of neurons of the network, $N$, satisfies: $N \le C r\left(\delta^{-m} \log \frac{1}{\delta}+d \log \frac{1}{\delta}+d \log d\right)$, and the range of weights, $\kappa$, satisfies $\kappa \le \frac{C}{n^{1/(2L)}}$. 
\end{enumerate}
\end{restatable}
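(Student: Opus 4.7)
The plan is to decompose the approximation error into two pieces: a discretization piece arising because the columns of $\my^*$ are evaluations of eigenvectors of a graph Laplacian at the sample points, which the regularity theory of \cite{calder2022lipschitz} shows are close (with error $O(\veps^2)$) to evaluations of smooth eigenfunctions of a continuum weighted Laplace--Beltrami operator on $\M$; and a network piece, arising from approximating these continuum eigenfunctions (which are Lipschitz on $\M$) by ReLU networks via the manifold approximation result stated in Appendix \ref{append:NNUniversal}. Stacking $r$ such scalar approximators and rescaling produces $f_\theta$.

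Step 1 is to pin down $\my^*$. By Proposition \ref{prop:An positive definite}, $\tx$ is positive definite and its top-$r$ eigenvectors coincide with the bottom-$r$ eigenvectors of $\Delta_n$. Combining positive definiteness with the Eckart--Young--Mirsky theorem, I may take
\[
\my^* = \bigl[\sqrt{\lambda_1^*}\, v_1 \;\big|\; \cdots \;\big|\; \sqrt{\lambda_r^*}\, v_r\bigr],
\]
where $(\lambda_j^*, v_j)$ are the top orthonormal eigenpairs of $\tx$ and the residual $O(r)$ ambiguity in the $v_j$'s is absorbed into a final linear layer of the network. Because $\tx = (1+a)\I - \Delta_n$ and the bottom-$r$ spectrum of $\Delta_n$ is bounded on the event of interest, each $\lambda_j^*$ stays bounded above and below by constants depending only on $a, \M, \rho, r$. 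Step 2 is to apply the Lipschitz estimates of \cite{calder2022lipschitz} (adapted in Appendix \ref{Append:Auxiliary Lemma:Manifold learning} to the symmetric normalization of $\Delta_n$). On the event of probability at least $1 - C\veps^{-6m}\exp(-cn\veps^{m+4})$, this produces smooth continuum eigenfunctions $u_j^*:\M \to \R$ of a weighted Laplace--Beltrami operator satisfying
\[
\max_{1 \le i \le n} \bigl|\sqrt{n}\,v_j(x_i) - u_j^*(x_i)\bigr| \le C\veps^2, \qquad j = 1, \ldots, r,
\]
with Lipschitz constants on $\M$ depending only on $(\M, \rho, r)$.

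Step 3 is the neural network approximation. For each $\delta \in (0,1)$ and each $j$, the manifold approximation theorem of \cite{chen2019nonparametric} (Appendix \ref{append:NNUniversal}) yields a scalar ReLU network $g_j:\R^d \to \R$ with $\sup_{x \in \M}|g_j(x) - u_j^*(x)| \le \delta$, of depth $O(\log\delta^{-1} + \log d)$, width $O(\delta^{-m} + d)$ and $O(\delta^{-m}\log\delta^{-1} + d\log d)$ neurons, with weights of order $O(1)$. Step 4 is assembly: I run the $g_j$'s in parallel using block-diagonal weight matrices (multiplying the neuron count by $r$, which matches the stated bound), and prepend/append a linear transformation so that $f_\theta(x) = \tfrac{1}{\sqrt n}\bigl(\sqrt{\lambda_1^*}\,g_1(x), \ldots, \sqrt{\lambda_r^*}\,g_r(x)\bigr)$. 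The overall amplitude shrinkage by $n^{-1/2}$ is distributed evenly across the $L$ layers, so that each weight is of order $n^{-1/(2L)}$; this is what dictates the weight bound $\kappa \le C n^{-1/(2L)}$ while keeping depth and width unchanged.

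Combining the two error sources by the triangle inequality,
\[
\sqrt{n}\,\bigl|[\my_\theta]_{ij} - [\my^*]_{ij}\bigr| \le \sqrt{\lambda_j^*}\bigl(|g_j(x_i) - u_j^*(x_i)| + |u_j^*(x_i) - \sqrt{n}\, v_j(x_i)|\bigr) \le C(\delta + \veps^2),
\]
gives the $(\infty,\infty)$ bound, and $\|\cdot\|_\rF \le \sqrt{nr}\,\|\cdot\|_{\infty,\infty}$ then yields the Frobenius bound. The step I expect to be the main obstacle is Step 2: porting the Calder--García Trillos Lipschitz estimates, originally stated for an unnormalized random-walk Laplacian, to the symmetric normalization $\mD_\mG^{-1/2}\mG\mD_\mG^{-1/2}$ used in $\tx$. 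This requires concentration of the degrees of $\mG^\veps$ around their expectations on the same high-probability event, as well as careful tracking of how eigenpairs transform under the rescaling and how the Lipschitz constants of $u_j^*$ depend on $r$ and on the spectral gaps of the continuum operator. Once this is in place, Steps 3 and 4 follow from largely mechanical invocations of the Chen et al.\ construction, with the only subtlety being the distribution of the $n^{-1/2}$ shrinkage across layers to meet the $\kappa$ bound without inflating depth or width.
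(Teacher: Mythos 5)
There is a genuine gap in your Step 2, and it is precisely the subtlety the paper singles out in a remark immediately after Corollary~\ref{cor:universalappmanifold}. You claim that the almost-$C^{0,1}$ estimates of Calder--Garc\'ia Trillos yield continuum eigenfunctions $u_j^*$ of $\Delta_\rho$ satisfying $\max_i |\sqrt{n}\,v_j(x_i) - u_j^*(x_i)| \leq C\veps^2$, and you then approximate $u_j^*$ by a ReLU network. But the $L^\infty$ rate in Theorem~\ref{prop:Lipschitz of eigenvector} is only $\lVert f_s - u_s \rVert_{L^\infty(\X_n)} \leq C_r\veps$; the $\veps^2$ does not appear there. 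If you route the argument through the continuum eigenfunction as you propose, the triangle inequality gives $\sqrt{n}\|\my_\theta - \my^*\|_{\infty,\infty} \leq C(\delta + \veps)$, not $C(\delta + \veps^2)$, which is strictly weaker than the stated bound (and, as Remark~\ref{rem:ngt1} explains, the $\veps^2$ is essential for the downstream proof of Theorem~\ref{thm:eigenappr}).

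The paper avoids this obstacle by \emph{not} targeting $f_s$ at all. Instead it uses the full strength of the almost-$C^{0,1}$ control --- namely the seminorm bound $[f_s - u_s]_{\veps,\X_n} \leq C_r\veps$ --- to establish (Corollary~\ref{cor:RegularityEigenvectors}) that the \emph{discrete} eigenvector itself satisfies $|u_s(x_i) - u_s(x_j)| \leq L_s(d_\M(x_i,x_j) + \veps^2)$, with an $\veps^2$ additive correction rather than $\veps$. Lemma~\ref{lem:almost lipschitz to lipschitz} then converts this ``almost-Lipschitz'' modulus into a genuinely Lipschitz function $g_s : \M \to \R$ (built by restricting $u_s$ to an $\veps^2/2$-separated subnet $\X_n' \subset \X_n$, where the almost-Lipschitz bound becomes a true Lipschitz bound, and extending via McShane--Whitney) satisfying $\|u_s - g_s\|_{L^\infty(\X_n)} \leq 5L_s\veps^2$. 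It is \emph{this} auxiliary $g_s$, not the continuum eigenfunction, that is fed into the Chen et al.\ approximation theorem, and that is what delivers $\delta + \veps^2$. Your Steps~1, 3, and~4 (identifying $\my^*$, the scalar NN approximation on $\M$, the block-diagonal stacking, and the even distribution of the $n^{-1/2}$ amplitude across $L$ layers to obtain $\kappa \leq Cn^{-1/(2L)}$) are all in line with what the paper does; the missing piece is replacing your Step~2 with the regularity-extraction and McShane--Whitney extension argument.
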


Theorem \ref{thm-appr:main} uses regularity properties of graph Laplacian eigenvectors and a NN approximation theory result for functions on manifolds. A summary of important auxiliary results needed to prove Theorem \ref{thm-appr:main} is presented in Appendix \ref{app:AuxApproxResults} and the proof of the theorem itself is presented in Appendix \ref{appen:universal approximation:main result}.


\begin{remark}
    Any improvement of the approximations estimates in \cite{chen2019nonparametric} can be immediately applied to improve Theorem \ref{thm-appr:main}. We have relied on the results in \cite{chen2019nonparametric} due to the fact that in their estimates the ambient space dimension $d$ does not appear in any exponent.
\end{remark}

So far we have discussed approximations of the eigenvectors of $\tx$ (and thus also of $\Delta_n$) with neural networks, but more can be said about generalization of these NNs. In particular, the NN in our proof of Theorem \ref{thm-appr:main} can be shown to approximate eigenfunctions of the weighted Laplace-Beltrami operator $\Delta_\rho$ defined in Appendix \ref{Append:Auxiliary Lemma:Manifold learning}. Precisely, we have the following result.


\begin{restatable}{corollary}{universalappmanifold}
\label{cor:universalappmanifold}
    Under the same setting, notation, and assumptions as in Theorem \ref{thm-appr:main}, the neural network $f_\theta: \R^d \rightarrow \R^r$ can be chosen to satisfy
\[ \left \lVert \sqrt{\frac{n}{1+a}} f_\theta^i - f_i \right \rVert_{L^\infty(\M)} \leq C( \delta + \veps ), \quad \forall i=1, \dots, r. \]
 In the above, $f_\theta^1, \dots, f_\theta^r$ are the coordinate functions of the vector-valued neural network $f_\theta$, and the functions $f_1, \dots, f_r$ are the normalized eigenfunctions of the Laplace-Beltrami operator $\Delta_\rho$ that are associated to $\Delta_\rho$'s $r$ smallest eigenvalues.  
\end{restatable}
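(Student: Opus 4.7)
The plan is to bootstrap the pointwise estimate at the sample points provided by Theorem \ref{thm-appr:main} to a uniform estimate on $\M$ by tracking the continuous eigenfunction $f_i$ through the construction of $f_\theta$. First, I would unpack the structure of $\my^*$: by \cref{prop:property1} in Proposition \ref{prop:An positive definite} and the Eckart--Young--Mirsky theorem, its $i$-th column can be written (up to a right $r\times r$ orthogonal action, absorbed into the NN parametrization) as $\sqrt{\sigma_i(\tx)}\, v_i$, where $v_i$ is an $\ell^2$-normalized eigenvector of $\tx$ and $\sigma_i(\tx) = 1+a-\mu_i(\Delta_n)$ with $\mu_i(\Delta_n)$ the $i$-th smallest eigenvalue of the normalized graph Laplacian. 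Standard spectral convergence results on proximity graphs (cf.\ Appendix \ref{Append:Auxiliary Lemma:Manifold learning}) give, with high probability, $\mu_i(\Delta_n) = O(\veps^2)$, hence $\sqrt{\sigma_i(\tx)} = \sqrt{1+a}\bigl(1 + O(\veps^2)\bigr)$.

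Next, I would invoke the quantitative $L^\infty$ spectral convergence of Calder--Garc\'ia Trillos \cite{calder2022lipschitz}, recalled in Appendix \ref{Append:Auxiliary Lemma:Manifold learning}: under Assumptions \ref{assum:data}, with high probability one has $\sqrt{n}\, v_i(x_j) = f_i(x_j) + O(\veps)$ uniformly in $j$, where $f_i$ is the normalized $L^2(\M,\rho)$-eigenfunction of $\Delta_\rho$ associated to its $i$-th smallest eigenvalue. Combining this with the bound from Theorem \ref{thm-appr:main} and the scaling identity above, the triangle inequality yields, for every sample point $x_j$,
\[
\Bigl| \sqrt{\tfrac{n}{1+a}}\, f_\theta^i(x_j) - f_i(x_j) \Bigr| \;\le\; C(\delta + \veps).
\]
This already gives the claimed rate at the $n$ sample points; the $\veps^2$ appearing in Theorem \ref{thm-appr:main} is dominated by the $\veps$ coming from the continuous-to-discrete spectral convergence.

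The main obstacle, and the step deserving the most care, is upgrading this sample-wise bound to an $L^\infty(\M)$ bound. To do so I would exploit the fact that, by Assumption \ref{assum:data}, $f_i$ is Lipschitz on $\M$ with a constant depending only on $\M, \rho$, and $i$, and that the approximation scheme of \cite{chen2019nonparametric} used in the proof of Theorem \ref{thm-appr:main} is built to approximate a Lipschitz \emph{extension} of the discrete eigenvector values directly on the ambient manifold with $L^\infty(\M)$ error at most $C\delta$. Concretely, I would revisit the intermediate step of that construction and show that the target of the NN approximation can be taken to be (a constant multiple of) $f_i$ itself rather than an arbitrary Lipschitz extension; this is possible because the discrete values already lie within $O(\veps)$ of $f_i(x_j)$ by the previous paragraph, and the Lipschitz extension argument is stable under such perturbations. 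If a direct modification is not available, one can alternatively combine a uniform Lipschitz bound on $f_\theta^i$ (controllable from the weight bound $\kappa \le C n^{-1/(2L)}$ and depth $L$) with the density estimate $\max_{x \in \M}\min_j \|x-x_j\| = O(\veps)$ that holds with high probability under Assumption \ref{assum:data}, losing only an additive $O(\veps)$ term. Either route produces the stated $C(\delta+\veps)$ bound uniformly on $\M$.
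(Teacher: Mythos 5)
Your primary route reproduces the paper's argument: the NN from Theorem \ref{thm-appr:main} is constructed (via Theorem \ref{prop:universal appro of manifold}) so that it approximates the McShane--Whitney Lipschitz extension $g_s$ of the scaled discrete eigenvector \emph{uniformly on all of $\M$}, not just at the sample points, and then the almost-$C^{0,1}$ spectral convergence of Theorem \ref{prop:Lipschitz of eigenvector}, the scaling $\sqrt{\sigma_s(\tx)}=\sqrt{1+a}\bigl(1+O(\veps^2)\bigr)$ from Remark \ref{rem:OrderEigenvaluesAn}, the covering estimate $\max_{x\in\M}d_\M(x,\X_n)\le\veps$ (which holds with high probability under Assumption \ref{assum:data}), and the Lipschitz continuity of $f_s$ and $g_s$ close the chain to $f_i$ uniformly on $\M$. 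Whether one phrases this, as you do, by saying the target of the NN construction ``can be taken to be a constant multiple of $f_i$,'' or, as the paper does, by keeping $g_s$ as the target and adding two more terms to the triangle inequality, is just a matter of bookkeeping.

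One caution about your alternative route (b): extracting a usable Lipschitz bound on $f_\theta$ directly from $\kappa\le Cn^{-1/(2L)}$ and the depth $L$ is not routine. The bound $\lVert \mathbf{W}_l\rVert_{\infty,\infty}\le\kappa$ only controls the entrywise magnitude, so the naive operator-norm product across the $L$ layers carries a factor of order $p^L$, and with $p\sim\delta^{-m}$ and $L\sim\log(1/\delta)$ this is enormous, not $O(1)$. The paper's argument avoids this entirely by exploiting the $L^\infty(\M)$ closeness of $f_\theta$ to the genuinely Lipschitz function $g_s$, which requires no Lipschitz control on the network itself. So route (a) is the one to keep.
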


\begin{remark}
The $\veps^2$ term that appears in the bound for $ \lVert \my_\theta - \my^* \rVert_{\rF} $ in Theorem \ref{thm-appr:main} cannot be obtained simply from convergence of eigenvectors of $\Delta_n$ toward eigenfunctions of $\Delta_\rho$ in $L^\infty$. It turns out that we need to use a  stronger notion of convergence (almost $C^{0,1}$) that in particular implies sharper regularity estimates for eigenvectors of $\Delta_n$ (see Corollary \ref{cor:RegularityEigenvectors} in Appendix \ref{Append:Auxiliary Lemma:Manifold learning} and Remark \ref{rem:AboutEpsSq} below it). In turn, the sharper $\veps^2$ term is essential for our proof of Theorem \ref{thm:eigenappr} below to work; see the discussion starting in Remark \ref{rem:AboutEpsSq}. 
\end{remark}


\subsection{Spectral approximation with global minimizers of SNN's objective}
\label{sec:SpectralApproxSNN}
After discussing the \textit{existence} of approximating NNs, we turn our attention to \textit{constructive} ways to approximate $\my^*$ using neural networks. We give a precise answer to question \ref{question:2}.


\begin{restatable}[Optimizing SNN approximates eigenvectors up to rotation]{thm}{Eigenappr}\label{thm:eigenappr}
Let $r \in \N$ be fixed and suppose that $\Delta_\rho$ is such that $\Delta_\rho$ has a spectral gap between its $r$ and $r+1$ smallest eigenvalues, i.e., in the notation in Appendix \ref{Append:Auxiliary Lemma:Manifold learning}, assume that $ \sigma_r^\M < \sigma_{r+1}^\M.$
   For given $\kappa,L, p, N$ (to be chosen below), let $f_{\theta^*}$ be such that $ f_{\theta^*} \in \argmin_{f_\theta \in \mathcal{F}(r,\kappa, L, p, N )} \lVert \my_\theta \my^\top_\theta  - \tx \rVert_\rF^2$. 
   
   Under Assumptions \ref{assum:data}, there are constants $c, C$ that depend on $\M, \rho$, and the embedding dimension $r$, such that, with probability at least $
        1-C \varepsilon^{-6m} \exp \left(-c n \varepsilon^{m+4}\right),$
    for every $\tilde \delta \in (0,c)$ (i.e., $\tilde{\delta}$ sufficiently small) and for $ \kappa = \frac{C}{n^{1/(2L)}}$ , $L= C\left(\log \frac{1}{\tilde \delta \veps}+\log d\right)$, $p = C\left((\tilde \delta \veps)^{-m}+d\right)$ and $N=\infty$, 
we have
%
 %
  %
   %
    %
    \begin{equation}
    \begin{split}
        \min_{\mO \in \mathbb{O}_r}\lVert \my_{\theta^*} - \my^* \mO \rVert_\rF\le C \eps(\tilde \delta + \veps).
    \end{split}
    \label{eq:UpTorotationConvergence}
    \end{equation}
\end{restatable}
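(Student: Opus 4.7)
\textbf{Proof plan for Theorem \ref{thm:eigenappr}.}

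The plan is to compare the global minimizer $f_{\theta^*}$ against a carefully chosen \emph{reference} network supplied by the approximation result already in hand. First, I would apply Theorem \ref{thm-appr:main} with the choice $\delta := \tilde\delta\,\veps$. For this $\delta$, the hypotheses on $\kappa, L, p$ in Theorem \ref{thm:eigenappr} match the conclusions of Theorem \ref{thm-appr:main}, and since we are taking $N=\infty$ the neuron count is unconstrained. This produces a network $f_{\tilde\theta}\in\mathcal{F}(r,\kappa,L,p,\infty)$ satisfying
\[
\lVert \my_{\tilde\theta} - \my^*\rVert_{\rF} \;\le\; C\sqrt{r}(\tilde\delta\,\veps + \veps^2) \;=\; C\,\veps(\tilde\delta+\veps).
\]
Because $f_{\tilde\theta}$ is admissible in the minimization defining $f_{\theta^*}$, we obtain $\ell(\my_{\theta^*})\le\ell(\my_{\tilde\theta})$, so it suffices to control the excess objective $\ell(\my_{\theta^*})-\ell(\my^*)$ and then translate this into a distance bound up to rotation.

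For the excess-objective bound, I would use the elementary identity, valid for any $\mZ\in\R^{n\times n}$,
\[
\lVert \mZ-\tx\rVert_{\rF}^2 - \lVert \my^*(\my^*)^\top-\tx\rVert_{\rF}^2 = \lVert \mZ-\my^*(\my^*)^\top\rVert_{\rF}^2 - 2\langle \mZ-\my^*(\my^*)^\top, \tx-\my^*(\my^*)^\top\rangle.
\]
Specializing to $\mZ=\my_{\tilde\theta}\my_{\tilde\theta}^\top$, the residual $\tx-\my^*(\my^*)^\top$ lives in the span of eigenvectors of $\tx$ with indices $\ge r+1$, which makes the cross term nonpositive (since $\my^*(\my^*)^\top$ is orthogonal to that span in the Frobenius inner product, while the rank-$r$ PSD matrix $\my_{\tilde\theta}\my_{\tilde\theta}^\top$ contributes with the correct sign). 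Combining this with the standard factorization inequality $\lVert AA^\top-BB^\top\rVert_{\rF}\le(\lVert A\rVert_{\mathrm{op}}+\lVert B\rVert_{\mathrm{op}})\lVert A-B\rVert_{\rF}$ and the uniform bound $\lVert\my^*\rVert_{\mathrm{op}}^2=\lambda_1(\tx)\le 1+a$ (using Proposition \ref{prop:An positive definite}), I obtain
\[
\ell(\my_{\theta^*})-\ell(\my^*)\;\le\;\ell(\my_{\tilde\theta})-\ell(\my^*)\;\le\; C\,\lVert \my_{\tilde\theta}-\my^*\rVert_{\rF}^2 \;=\; O\!\left(\veps^2(\tilde\delta+\veps)^2\right).
\]

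To convert this excess-objective bound into the desired Procrustes distance bound, I would invoke a restricted strong convexity inequality of the form
\[
\ell(\my)-\ell(\my^*)\;\ge\; c\,(\lambda_r(\tx)-\lambda_{r+1}(\tx))\min_{\mO\in\mathbb{O}_r}\lVert \my-\my^*\mO\rVert_{\rF}^2,
\]
which holds either globally on $\R^{n\times r}$ for the rank-$r$ PSD factorization problem under a spectral gap, or at least on a neighborhood of $\my^*\mathbb{O}_r$ that contains $\my_{\theta^*}$. The spectral gap $\lambda_r(\tx)-\lambda_{r+1}(\tx)\ge c>0$ on this event follows from the hypothesis $\sigma_r^{\M}<\sigma_{r+1}^{\M}$ together with the spectral convergence $\Delta_n\to\Delta_\rho$ (the same high-probability event already used in Theorem \ref{thm-appr:main}). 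Dividing the excess-risk bound by $c(\lambda_r-\lambda_{r+1})$ and taking square roots yields $\min_\mO\lVert\my_{\theta^*}-\my^*\mO\rVert_{\rF}\le C\veps(\tilde\delta+\veps)$, which is exactly \eqref{eq:UpTorotationConvergence}.

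\textbf{The main obstacle} is the last step: producing the Procrustes-type quadratic growth bound and verifying that $\my_{\theta^*}$ sits in the region where it applies. This is precisely the content of the benign landscape analysis developed in Section \ref{sec:Riemannian Optimization}; concretely, the region of small excess loss must be shown to coincide with a quotient neighborhood of the minimizer manifold $\{\my^*\mO:\mO\in\mathbb{O}_r\}$, excluding the saddle region. A clean path is to first derive an operator-norm closeness $\lVert\my_{\theta^*}\my_{\theta^*}^\top-\my^*(\my^*)^\top\rVert_{\mathrm{op}}\lesssim\veps(\tilde\delta+\veps)$ from the excess-risk bound and the rank-$r$ structure (via a Wedin-type argument using the spectral gap of $\tx$), and then feed this proximity into the local strong convexity bound to close the loop; the factor $\sigma_r(\my^*)^2\ge\lambda_r(\tx)>1$ appearing in the classical Tu et al.-style Procrustes inequality is uniformly controlled under our normalization of $\tx$.
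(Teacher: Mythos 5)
The first half of your plan is sound and matches the paper: take $\delta=\tilde\delta\veps$ in Theorem \ref{thm-appr:main} to obtain an admissible reference network, compare objectives via $\ell(\my_{\theta^*})\le\ell(\my_{\tilde\theta})$, and use the sign of the cross term $\langle \mZ-\my^*\my^{*\top},\tx-\my^*\my^{*\top}\rangle$ (this is exactly Lemma \ref{lem:LinearALgebraIneq}) to control the excess objective by $O(\veps^2(\tilde\delta+\veps)^2)$.

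The last step, however, has a genuine gap, and as written it cannot produce the rate claimed in \eqref{eq:UpTorotationConvergence}. Your proposed restricted strong convexity / quadratic growth inequality has modulus proportional to $\lambda_r(\tx)-\lambda_{r+1}(\tx)$, and by Remark \ref{rem:OrderEigenvaluesAn} this gap is $O(\veps^2)$ (each $\sigma_s(\tx)$ equals $1+a$ up to an $O(\veps^2)$ correction). Dividing the excess-loss bound $O(\veps^2(\tilde\delta+\veps)^2)$ by a modulus of size $c\veps^2$ gives $\min_\mO\lVert\my_{\theta^*}-\my^*\mO\rVert_\rF^2 = O((\tilde\delta+\veps)^2)$, i.e.\ $d([\my_{\theta^*}],[\my^*]) = O(\tilde\delta+\veps)$. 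This is weaker than the stated bound $O(\veps(\tilde\delta+\veps))$ by a full factor of $\veps$; the arithmetic at the end of your sketch silently drops this factor. Moreover, it is not a matter of sharpening the constant in the quadratic-growth inequality: the Hessian of $\ell$ at $\my^*$ does have an eigenvalue of order $\veps^2$ in the direction where columns of $\theta$ lie along $v_{r+1}$, so no strong-convexity-through-$\ell$ route can avoid this loss.

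The paper takes a different and sharper route. Instead of passing through the loss Hessian, it directly bounds the matrix discrepancy $\lVert\my_{\theta^*}\my_{\theta^*}^\top-\my^*\my^{*\top}\rVert_\rF^2$. Writing $\my_{\theta^*}=\overline{\mU}(\E^1+\E^2)$ with $\E^1$ supported on the top $r$ eigencoordinates and $\E^2$ on the rest, one has the exact identity $\lVert\my\my^\top-\my^*\my^{*\top}\rVert_\rF^2 = (\text{excess loss}) + 2\langle \E^2(\E^2)^\top,\mSigma^2\rangle$, so the entire task reduces to bounding $\lVert\E^2(\E^2)^\top\rVert_\rF$, i.e.\ the spillover into the tail eigenspace. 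The paper does this with a case split on $\sigma_r^2(\E^1)$: if $\sigma_r^2(\E^1)\ge\frac{2}{3}\sigma_{r+1}(\tx)$ the bound falls out of \eqref{eq:first inequality}; if not, one derives a lower bound on the excess loss that contradicts Assumption \ref{assum:eigenappr}, $\veps^2 E < \sigma_r^2(\tx)-\sigma_{r+1}^2(\tx)$. This is precisely the ``energy separation from saddles'' that your sketch gestures at but does not make concrete (see Remarks \ref{rem:AboutEpsSq}--\ref{rem:SaddleEnergyGap}: the $\veps^2$ regularity correction in the Lipschitz estimate, rather than $\veps$, is needed exactly so that $\veps^2 E$ undershoots the $O(\veps^2)$ saddle energy gap). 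Once $\lVert\my_{\theta^*}\my_{\theta^*}^\top-\my^*\my^{*\top}\rVert_\rF^2\le 2\veps^2E$ is in hand, the paper converts to Procrustes distance via \eqref{App:eq:d([Y1,Y2])} from Lemma \ref{lem:d([Y1,Y2])}, whose constant is $\sigma_r^2(\my^*)=\sigma_r(\tx)=O(1)$, not $O(\veps^2)$; this is what recovers the extra factor of $\veps$ and gives $d\le C\veps(\tilde\delta+\veps)$. To repair your proof, you would need to replace the strong-convexity-of-$\ell$ step with a direct bound on $\lVert\my_{\theta^*}\my_{\theta^*}^\top-\my^*\my^{*\top}\rVert_\rF$ together with a rigorous saddle-exclusion argument of the kind above.
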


\begin{remark}
Equation \ref{eq:UpTorotationConvergence} says that $\my_{\theta^*}$ approximates a minimizer of the ambient problem \ref{eq:optimization} and that $\my_{\theta^*}$ can be recovered but only up to rotation. This is unavoidable, since the loss function $\ell$ is invariant under multiplication on the right by a $r \times r$ orthogonal matrix. On the other hand, to set $N=\infty$ means we do not enforce sparsity constraints in the optimization of the NN parameters. This is convenient in practical settings and this is the reason why we state the theorem in this way. However, we can also set $N= r\left((\tilde \delta \veps)^{-m} \log \frac{1}{\tilde \delta \veps}+d \log \frac{1}{\tilde \delta \veps}+d \log d \right)$ without affecting the conclusion of the theorem. 
\end{remark}

%

\section{Landscape of SNN's Ambient Optimization Problem}\label{sec:Riemannian Optimization}

While in prior sections we considered a specific $\mathcal{A}_n$, the analysis in this section 
only relies on $\tx$ being positive definite with an eigengap between its $r$-th and $(r+1)$th top eigenvalues.
We analyze the global optimization landscape of the non-convex Problem~\ref{eq:optimization} under a suitable Riemannian \textit{quotient geometry} \cite{absil2009optimization,boumal_2023}. The need for a quotient geometry comes from the fact that if $\my$ is a stationary point of \ref{eq:optimization}, then $\my\mO$ is also a stationary point for any $r \times r$ orthogonal matrix $\mO\in \mathbb{O}_r$. This implies that the loss function $\ell$ is non-convex in any neighborhood of a stationary point (\citealp[Proposition 2]{li2019nonconvex}).  Despite the non-convexity of $\ell$, we show that under this geometry, Equation~\ref{eq:optimization} is geodesically convex in a local neighborhood around the optimal solution. 

Let $\overline{\NM}^n_{r+}$ be the space of $n \times r$ matrices with full column rank. 
To define the quotient manifold, we encode the invariance mapping, i.e., $\my \to \my\mO$, by defining the equivalence classes $[\my]=\{\my\mO:\mO\in\mathbb{O}_r\}$. From \cite{lee2018introduction}, we have $\NM_{r_+}^n\defeq \overline{\NM}_{r_+}^n / \mathbb{O}_r$ is a quotient manifold of $\overline{\NM}_{r+}^n$. See a detailed introduction to Riemannian optimization in \cite{boumal_2023}.
Since the loss function in \ref{eq:optimization} is invariant along the equivalence classes of $\overline{\NM}_{r_+}^n$, $\ell$ induces the following optimization problem on the quotient manifold $\NM_{r_+}^n$:
\begin{equation}\label{eq:optimizationRiemannian}
    \min _{[\mathbf{Y}] \in \NM_{r_+}^n} H([\mathbf{Y}])\defeq \frac{1}{2}\left\|\mathbf{Y} \mathbf{Y}^{\top}-\tx\right\|_{\mathrm{F}}^2
\end{equation}
To analyze the landscape for Equation~\ref{eq:optimizationRiemannian}, we need expressions for the Riemannian gradient, the Riemannian Hessian, as well as the geodesic distance $d$ on this quotient manifold. By Lemma 2 from \cite{luo2022nonconvex}, we have that
\[
    d\left(\left[\mathbf{Y}_1\right],\left[\mathbf{Y}_2\right]\right) = \min _{\mathbf{Q} \in \mathbb{O}_r}\left\|\mathbf{Y}_2 \mathbf{Q}-\mathbf{Y}_1\right\|_{\mathrm{F}}
\]
and from Lemma 3 from \cite{luo2022nonconvex}, we have that
\begin{equation}\label{eq-def:gradHessian in text}
    \begin{split}
        \overline{\operatorname{grad} H([\mathbf{Y}])}&=2\left(\mathbf{Y} \mathbf{Y}^{\top}-\tx\right) \mathbf{Y},\\
        \overline{\operatorname{Hess} H([\mathbf{Y}])}\left[\theta_{\mathbf{Y}}, \theta_{\mathbf{Y}}\right]&=\left\|\mathbf{Y} \theta_{\mathbf{Y}}^{\top}+\theta_{\mathbf{Y}} \mathbf{Y}^{\top}\right\|_{\mathrm{F}}^2+2\left\langle\mathbf{Y} \mathbf{Y}^{\top}-\tx, \theta_{\mathbf{Y}} \theta_{\mathbf{Y}}^{\top}\right\rangle.
    \end{split}
\end{equation}
Finally, by the classical theory on low-rank approximation (Eckart–Young–Mirsky theorem \cite{eckart1936approximation}), $\left[\mathbf{Y}^*\right]$ is the unique global minimizer of Equation~\ref{eq:optimizationRiemannian}. 
Let $\kappa^*=\sigma_1\left(\mathbf{Y}^*\right) / \sigma_r\left(\mathbf{Y}^*\right)$ be the condition number of $\mathbf{Y}^*$. Here,  $\sigma_i(A)$ is the $i^{\mathrm{th}}$ largest singular value of $A$, and $\|A\|=\sigma_1(A)$ is its spectral norm. Our precise assumption on the matrix $\mathcal{A}_n$ for this section is as follows.   


\begin{assumption}[Eigengap]\label{assum-riemannian:eigengap in text}
    $\sigma_{r+1}(\tx)$ is strictly smaller than $\sigma_r(\tx)$.
\end{assumption}



Let $\mu, \alpha, \beta, \gamma \geqslant 0$. We then split the landscape of $H([\mathbf{Y}])$ into the following five regions (not necessarily non-overlapping). 
\begin{equation}\label{eq-def:R in text}
    \begin{split}
        &\mathcal{R}_1\defeq \left\{\mathbf{Y} \in \mathbb{R}_*^{n \times r} \middle| d\left([\mathbf{Y}],\left[\mathbf{Y}^*\right]\right) \leqslant \mu \sigma_r\left(\mathbf{Y}^*\right) / \kappa^*\right\}, \\
        &\mathcal{R}_2\defeq \left\{\mathbf{Y} \in \mathbb{R}_*^{n \times r} \middle| 
        \begin{array}{l}
            d\left([\mathbf{Y}],\left[\mathbf{Y}^*\right]\right)>\mu \sigma_r\left(\mathbf{Y}^*\right) / \kappa^*,\|\overline{\operatorname{grad} H([\mathbf{Y}])}\|_{\mathrm{F}} \leqslant \alpha \mu \sigma_r^3\left(\mathbf{Y}^*\right) /\left(4 \kappa^*\right), \\
            \|\mathbf{Y}\| \leqslant \beta\left\|\mathbf{Y}^*\right\|,\left\|\my\mathbf{Y}^{\top}\right\|_{\mathrm{F}} \leqslant \gamma\left\|\mathbf{Y}^* \mathbf{Y}^{* \top}\right\|_{\mathrm{F}}
        \end{array}\right\}, \\
        &\mathcal{R}_3^{\prime}\defeq \left\{\mathbf{Y} \in \mathbb{R}_*^{n \times r} \middle| 
        \begin{array}{l}
            \|\overline{\operatorname{grad} H([\mathbf{Y}])}\|_{\mathrm{F}}>\alpha \mu \sigma_r^3\left(\mathbf{Y}^*\right) /\left(4 \kappa^*\right),\|\mathbf{Y}\| \leqslant \beta\left\|\mathbf{Y}^*\right\|,\\ \left\|\my\mathbf{Y}^{\top}\right\|_{\mathrm{F}} \leqslant \gamma\left\|\mathbf{Y}^* \mathbf{Y}^{* \top}\right\|_{\mathrm{F}}
        \end{array}\right\}, \\
        &\mathcal{R}_3^{\prime \prime}\defeq \left\{\mathbf{Y} \in \mathbb{R}_*^{n \times r} \middle| \| \mathbf{Y}\|>\beta\| \mathbf{Y}^*\|,\| \mathbf{Y} \mathbf{Y}^{\top}\left\|_{\mathrm{F}} \leqslant \gamma\right\| \mathbf{Y}^* \mathbf{Y}^{* \top} \|_{\mathrm{F}}\right\}, \\
        &\mathcal{R}_3^{\prime \prime \prime}\defeq \left\{\mathbf{Y} \in \mathbb{R}_*^{n \times r} \middle| \|\mathbf{Y} \mathbf{Y}^{\top}\right\|_{\mathrm{F}}>\gamma\left\|\mathbf{Y}^* \mathbf{Y}^{* \top}\right\|_{\rF}\},
    \end{split}
\end{equation}

We show that for small values of $\mu$, the \emph{loss function is geodesically convex} in $\mathcal{R}_1$. $\mathcal{R}_2$ is then defined as the region outside of $\mathcal{R}_1$ such that the Riemannian gradient is small relative to $\mu$. Hence this is the region in which we are close to the saddle points. We show that for this region there is \emph{always an escape direction} (i.e., directions where the Hessian is strictly negative). Finally, $\mathcal{R}_3^\prime$, $\mathcal{R}_3^{\prime \prime}$, and $\mathcal{R}_3^{\prime \prime \prime}$ are the remaining regions. We show that the \emph{Riemannian gradient is large} (relative to $\mu$) in these regions. Finally, it is easy to see that $\mathcal{R}_1 \bigcup \mathcal{R}_2 \bigcup \mathcal{R}_3^{\prime} \cup \mathcal{R}_3^{\prime \prime} \bigcup \mathcal{R}_3^{\prime \prime \prime}= \mathbb{R}_*^{n \times r}$.

We are now ready to state the first of our main results from this section.

\begin{restatable}[Local Geodesic Strong Convexity and Smoothness of Equation~\ref{eq:optimizationRiemannian}]{thm}{geodesicallyConvex}\label{thm:R1intext}
Suppose $0 \leqslant \mu \leqslant \kappa^* / 3$. Given that Assumption \ref{assum-riemannian:eigengap in text} holds, for any $\mathbf{Y} \in \mathcal{R}_1$ defined in Equation~\ref{eq-def:R in text}. 
\[
    \begin{aligned}
        &\sigma_{\min }(\overline{\operatorname{Hess} H([\mathbf{Y}])}) \geqslant \left(2\left(1-\mu / \kappa^*\right)^2-(14 / 3) \mu\right) \sigma_r\left(\tx\right)-2\sigma_{r+1}(\tx), \\
        & \sigma_{\max }(\overline{\operatorname{Hess} H([\mathbf{Y}])}) \leqslant 4\left(\sigma_1\left(\mathbf{Y}^*\right)+\mu \sigma_r\left(\mathbf{Y}^*\right) / \kappa^*\right)^2+14 \mu \sigma_r^2\left(\mathbf{Y}^*\right) / 3
    \end{aligned}
\]
In particular, if $\mu$ is further chosen such that $\left(2\left(1-\mu / \kappa^*\right)^2-(14 / 3) \mu\right) \sigma_r\left(\tx\right)-2\sigma_{r+1}(\tx)>0$, we have $H([\mathbf{Y}])$ is geodesically strongly convex and smooth in $\mathcal{R}_1$.
\end{restatable}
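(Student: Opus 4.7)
The strategy is to work with the explicit horizontal-lift formula for the Riemannian Hessian recalled in Equation~\ref{eq-def:gradHessian in text} and bound its two pieces separately for every horizontal direction $\theta_{\mathbf{Y}}$ (i.e., every $\theta_{\mathbf{Y}}$ such that $\theta_{\mathbf{Y}}^{\top}\mathbf{Y}$ is symmetric). Throughout, since $d([\mathbf{Y}],[\mathbf{Y}^{*}])\leq \mu\sigma_{r}(\mathbf{Y}^{*})/\kappa^{*}$, I may replace $\mathbf{Y}^{*}$ by its best-aligning representative so that $\mathbf{Y}=\mathbf{Y}^{*}+\mathbf{E}$ with $\|\mathbf{E}\|_{\mathrm{F}}\leq \mu\sigma_{r}(\mathbf{Y}^{*})/\kappa^{*}$. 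By the Eckart--Young--Mirsky theorem, $\mathbf{Y}^{*}\mathbf{Y}^{*\top}$ is the best rank-$r$ approximation of $\tx$, so $\sigma_{r}(\mathbf{Y}^{*})^{2}=\sigma_{r}(\tx)$ and $\tx-\mathbf{Y}^{*}\mathbf{Y}^{*\top}$ is positive semidefinite with spectral norm $\sigma_{r+1}(\tx)$.

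\paragraph{Lower bound.} For the quadratic-in-$\mathbf{Y}$ piece, expanding the square and using the horizontal condition to rewrite the cross term gives
\[
    \|\mathbf{Y}\theta^{\top}+\theta\mathbf{Y}^{\top}\|_{\mathrm{F}}^{2}=2\operatorname{tr}(\theta^{\top}\theta\,\mathbf{Y}^{\top}\mathbf{Y})+2\|\theta^{\top}\mathbf{Y}\|_{\mathrm{F}}^{2}\geq 2\sigma_{r}(\mathbf{Y})^{2}\|\theta\|_{\mathrm{F}}^{2},
\]
and Weyl's inequality then yields $\sigma_{r}(\mathbf{Y})\geq (1-\mu/\kappa^{*})\sigma_{r}(\mathbf{Y}^{*})$. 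For the curvature piece I split $\mathbf{Y}\mathbf{Y}^{\top}-\tx=(\mathbf{Y}\mathbf{Y}^{\top}-\mathbf{Y}^{*}\mathbf{Y}^{*\top})+(\mathbf{Y}^{*}\mathbf{Y}^{*\top}-\tx)$. Pairing the second, negative-semidefinite summand with the PSD matrix $\theta\theta^{\top}$ yields a contribution bounded below by $-2\sigma_{r+1}(\tx)\|\theta\|_{\mathrm{F}}^{2}$, which is exactly the $-2\sigma_{r+1}(\tx)$ term in the statement. The first summand expands as $\mathbf{E}\mathbf{Y}^{*\top}+\mathbf{Y}^{*}\mathbf{E}^{\top}+\mathbf{E}\mathbf{E}^{\top}$; Cauchy--Schwarz in Frobenius inner product combined with $\|\mathbf{Y}^{*}\|=\kappa^{*}\sigma_{r}(\mathbf{Y}^{*})$ bounds each of the two linear-in-$\mathbf{E}$ pieces by $\mu\sigma_{r}(\mathbf{Y}^{*})^{2}\|\theta\|_{\mathrm{F}}^{2}$, while the quadratic piece is at most $(\mu^{2}/\kappa^{*2})\sigma_{r}(\mathbf{Y}^{*})^{2}\|\theta\|_{\mathrm{F}}^{2}\leq (\mu/3)\sigma_{r}(\mathbf{Y}^{*})^{2}\|\theta\|_{\mathrm{F}}^{2}$ under $\mu\leq \kappa^{*}/3$ (and $\kappa^{*}\geq 1$). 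The factor $2$ in the curvature term of the Hessian formula then assembles these contributions into the $(14/3)\mu\sigma_{r}(\tx)$ correction, delivering the claimed lower bound.

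\paragraph{Upper bound and geodesic convexity/smoothness.} For the quadratic piece I use $\|\mathbf{Y}\theta^{\top}+\theta\mathbf{Y}^{\top}\|_{\mathrm{F}}^{2}\leq 4\|\mathbf{Y}\|^{2}\|\theta\|_{\mathrm{F}}^{2}$ together with $\|\mathbf{Y}\|\leq \sigma_{1}(\mathbf{Y}^{*})+\mu\sigma_{r}(\mathbf{Y}^{*})/\kappa^{*}$. For the curvature piece, the splitting above again applies: the $\mathbf{Y}^{*}\mathbf{Y}^{*\top}-\tx$ summand is nonpositive against $\theta\theta^{\top}$ and is discarded when seeking an upper bound, while the perturbation summand contributes at most $(14/3)\mu\sigma_{r}(\mathbf{Y}^{*})^{2}\|\theta\|_{\mathrm{F}}^{2}$ by the same three-term calculation as above. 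Summing yields the claimed upper bound. Finally, since both bounds hold uniformly on $\mathcal{R}_{1}$, which is the distance ball of radius $\mu\sigma_{r}(\mathbf{Y}^{*})/\kappa^{*}<\sigma_{r}(\mathbf{Y}^{*})/3$ in the quotient metric and is therefore geodesically convex (cf.\ Lemma~4 of \cite{luo2022nonconvex}), positivity of the Hessian lower bound translates by standard Riemannian optimization theory (e.g.\ \cite{boumal_2023}) into geodesic strong convexity and geodesic smoothness of $H$ on $\mathcal{R}_{1}$.

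\paragraph{Main obstacle.} The real delicacy lies in recovering the constant $14/3$ in sharp form and separating it cleanly from the $-2\sigma_{r+1}(\tx)$ contribution. This requires (i) exploiting the horizontal condition $\theta^{\top}\mathbf{Y}=\mathbf{Y}^{\top}\theta$ so that the cross term in $\|\mathbf{Y}\theta^{\top}+\theta\mathbf{Y}^{\top}\|_{\mathrm{F}}^{2}$ is provably nonnegative rather than merely controlled by Cauchy--Schwarz, (ii) keeping the three terms in $\mathbf{Y}\mathbf{Y}^{\top}-\mathbf{Y}^{*}\mathbf{Y}^{*\top}=\mathbf{E}\mathbf{Y}^{*\top}+\mathbf{Y}^{*}\mathbf{E}^{\top}+\mathbf{E}\mathbf{E}^{\top}$ separated through the calculation, and (iii) using $\mu\leq \kappa^{*}/3$ in a single step to absorb the quadratic-in-$\mathbf{E}$ remainder into the linear-in-$\mathbf{E}$ contribution. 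Everything else follows routinely from linear algebra and the structure of the quotient geometry.
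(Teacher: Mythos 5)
Your proof is correct and follows essentially the same approach as the paper's: split the Hessian into the quadratic piece $\|\mathbf{Y}\theta^{\top}+\theta\mathbf{Y}^{\top}\|_{\mathrm{F}}^{2}$ (bounded two-sided via $\sigma_r(\mathbf{Y})$ and $\sigma_1(\mathbf{Y})$, which are in turn controlled by Weyl's inequality on $\mathcal{R}_1$), then decompose the curvature piece $\langle\mathbf{Y}\mathbf{Y}^{\top}-\tx,\theta\theta^{\top}\rangle$ by writing $\tx=\mathbf{Y}^{*}\mathbf{Y}^{*\top}+(\tx-\mathbf{Y}^{*}\mathbf{Y}^{*\top})$, pairing the PSD tail with $\theta\theta^{\top}$ to get the $\pm 2\sigma_{r+1}(\tx)$ contribution, and bounding the $\mathbf{Y}\mathbf{Y}^{\top}-\mathbf{Y}^{*}\mathbf{Y}^{*\top}$ contribution by $\frac{7}{3}\mu\sigma_r^2(\mathbf{Y}^{*})$. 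The only difference is stylistic: where the paper invokes Proposition~2 of \cite{luo2021geometric} for the two-sided bound on $\|\mathbf{Y}\theta^{\top}+\theta\mathbf{Y}^{\top}\|_{\mathrm{F}}^{2}$ and Lemma~12 of \cite{luo2022nonconvex} for the inequality $\|\mathbf{Y}\mathbf{Y}^{\top}-\mathbf{Y}^{*}\mathbf{Y}^{*\top}\|_{\mathrm{F}}\le\tfrac{7}{3}\|\mathbf{Y}^{*}\|\,d([\mathbf{Y}],[\mathbf{Y}^{*}])$, you re-derive both inline (the former from the horizontal condition $\theta^{\top}\mathbf{Y}=\mathbf{Y}^{\top}\theta$, the latter via the $\mathbf{E}\mathbf{Y}^{*\top}+\mathbf{Y}^{*}\mathbf{E}^{\top}+\mathbf{E}\mathbf{E}^{\top}$ decomposition with the quadratic term absorbed using $\mu\le\kappa^{*}/3$ and $\kappa^{*}\ge 1$). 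This yields exactly the same $14/3$ constant by the same bookkeeping, so the two arguments are logically identical.
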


Theorem \ref{thm:R1intext} guarantees that the optimization problem Equation~\ref{eq:optimizationRiemannian} is geodesically strongly convex and smooth in a neighborhood of $[\mys]$. It also shows that if $\my$ is close to the global minimizer, then Riemannian gradient descent converges to the global minimizer of the quotient space linearly.

Next, to analyze $\mathcal{R}_2$, we need to understand the other first-order stationary points (FOSP). 


\begin{restatable}
[FOSP of Equation~\ref{eq:optimizationRiemannian}]{thm}{fosp}\label{Thm:FOSP in text} Let $\mathbf{Y}^* = \overline{\mU}\cdot \overline{\mLambda}\cdot\overline{\mV}^\top$ and $\my=\mU\mD\mV^\top$ be the SVDs. Then for any $S$ subset of $[n]$, we have that $\left[\overline{\mU}_S\mLambda_S\mV_S^\top\right]$ is a Riemannian FOSPs of Equation~\ref{eq:optimizationRiemannian}. Further, these are the only Riemannian FOSPs. 
\end{restatable}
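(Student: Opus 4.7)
The plan is to reduce the Riemannian first-order stationary condition to a linear-algebraic identity on the SVD of $\my$, from which the described FOSPs can be read off. By Equation~\ref{eq-def:gradHessian in text}, $[\my]\in\NM_{r_+}^n$ is a Riemannian FOSP of \ref{eq:optimizationRiemannian} if and only if its horizontal lift vanishes:
\[
\overline{\operatorname{grad} H([\my])} \;=\; 2(\my\my^\top-\tx)\my \;=\; 0,
\qquad\text{equivalently}\qquad \tx\my \;=\; \my\my^\top\my.
\]
Everything reduces to characterizing the $n\times r$ matrices of full column rank satisfying this identity.

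For the \emph{sufficient} direction I would directly substitute the candidate $\my=\overline{\mU}_S\mLambda_S\mV_S^\top$. Since the columns of $\overline{\mU}_S$ are taken to be eigenvectors of $\tx$ whose corresponding eigenvalues are the squared diagonal entries of $\mLambda_S$, one has $\tx\overline{\mU}_S = \overline{\mU}_S\mLambda_S^2$, and combined with $\overline{\mU}_S^\top\overline{\mU}_S=\mV_S^\top\mV_S=\I_r$ a one-line computation gives
\[
\tx\my \;=\; \overline{\mU}_S\mLambda_S^3\mV_S^\top
\;=\; \overline{\mU}_S\mLambda_S^2\,\overline{\mU}_S^\top\overline{\mU}_S\mLambda_S\mV_S^\top
\;=\; \my\my^\top\my,
\]
so every such $[\overline{\mU}_S\mLambda_S\mV_S^\top]$ is indeed a Riemannian FOSP.

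For the \emph{necessity} direction I would start from an arbitrary FOSP $\my\in\overline{\NM}_{r_+}^n$ with thin SVD $\my=\mU\mD\mV^\top$, where $\mU\in\R^{n\times r}$ has orthonormal columns, $\mD$ is $r\times r$ diagonal with positive entries (using full column rank), and $\mV\in\mathbb{O}_r$. Plugging into $\tx\my=\my\my^\top\my$ gives $\tx\mU\mD = \mU\mD^3$; since $\mD$ is invertible, canceling yields $\tx\mU=\mU\mD^2$. Read columnwise, this says that each column of $\mU$ is an eigenvector of $\tx$ with eigenvalue equal to the matching diagonal entry of $\mD^2$. Letting $S\subseteq[n]$ index the corresponding choice of $r$ eigenvalues of $\tx$ (counted with multiplicity) and aligning column orders, we obtain $\mU=\overline{\mU}_S$, $\mD=\mLambda_S$, and on setting $\mV_S\defeq\mV$ we recover $\my=\overline{\mU}_S\mLambda_S\mV_S^\top$, as claimed.

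The main technical point I anticipate is handling eigenvalue multiplicities of $\tx$: when $\tx$ has a repeated eigenvalue, the columns of $\mU$ lying in the corresponding eigenspace are only determined up to an arbitrary orthogonal change of basis \emph{within} that eigenspace, so the identification $\mU=\overline{\mU}_S$ is not literally true on the nose. However, any such intra-eigenspace rotation is a block of constants of $\mLambda_S$ and therefore commutes with $\mLambda_S$, so it can be absorbed into the right factor $\mV_S^\top$ without altering the equivalence class $[\my]\in\overline{\NM}_{r_+}^n/\mathbb{O}_r$. Together with the straightforward column-alignment bookkeeping, this is the only subtlety in the argument; no further analytic input is required.
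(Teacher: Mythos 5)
Your proof follows the same route as the paper's: write $\overline{\operatorname{grad} H([\my])}=2(\my\my^\top-\tx)\my$, cancel the full-rank factors $\mD$ and $\mV$ to obtain $\tx\mU=\mU\mD^2$, and read off that the columns of $\mU$ are eigenvectors of $\tx$ with eigenvalues $\mD^2_{ii}$. Your additional remarks on the sufficiency direction and on absorbing intra-eigenspace rotations (when $\tx$ has repeated eigenvalues) into $\mV_S^\top$ via commutation with $\mLambda_S$ are correct and slightly more careful than the paper's terse treatment, but do not change the method.
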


Theorem \ref{Thm:FOSP in text} shows that the linear combinations of eigenvectors can be used to construct Riemannian first-order stationary points (FOSP) of Equation~\ref{eq:optimizationRiemannian}. This theorem also shows that there are many FOSPs of Equation~\ref{eq-def:gradHessian in text}. This is quite different from the regime studied in \cite{luo2022nonconvex}. In general, gradient descent is known to converge to a FOSP. Hence one might expect that if we initialized near one of the saddle points, then we might converge to that saddle point. However, our next main result of the section shows that even if we initialize near the saddle, there always exist escape directions. 

\begin{restatable}[Escape Directions]{thm}{escapeDirection} \label{thm-R2:Region with Negative Eigenvalue in the Riemannian Hessian in text}
Assume that Assumption \ref{assum-riemannian:eigengap in text} holds. Then for sufficiently small $\alpha$ and any $\my \in \mathcal{R}_2$ that is not an FOSP, there exists $C_1(\tx)>0$ and $\theta_\my$ such that 
\begin{equation*}
    \begin{split}
        \overline{\operatorname{Hess} H([\mathbf{Y}])}\left[\theta_{\mathbf{Y}}, \theta_{\mathbf{Y}}\right] \leqslant
        &-C_1(\tx)\left\|\theta_{\mathbf{Y}}\right\|_{\mathrm{F}}^2.
    \end{split}
\end{equation*}
\end{restatable}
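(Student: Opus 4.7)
The plan is to exhibit an explicit horizontal escape direction at each non-optimal first-order stationary point (FOSP) and then propagate the negative Hessian bound to all of $\mathcal{R}_2$ by a perturbation argument. First I would invoke Theorem~\ref{Thm:FOSP in text} to enumerate the FOSPs lying in $\mathbb{R}_*^{n\times r}$: since $\tx$ is positive definite and full column rank is required, every such FOSP has the form $\my_S = \overline{\mU}_S\,\mLambda_S\,\mV^\top$ for some index set $S\subset[n]$ with $|S|=r$, where the columns are appropriately scaled eigenvectors of $\tx$. The unique global minimizer corresponds to $S=\{1,\dots,r\}$, so any \emph{saddle} has some $j^*\in\{1,\dots,r\}\setminus S$ and some $i^*\in S$ with $i^*>r$; Assumption~\ref{assum-riemannian:eigengap in text} then forces $\sigma_{j^*}(\tx) - \sigma_{i^*}(\tx) \geq \sigma_r(\tx) - \sigma_{r+1}(\tx) > 0$.

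Next I would construct the escape direction at such a saddle. Choosing the representative with $\mV=\I$, write $\my_S=\overline{\mU}_S\mLambda_S$ and let $e_k\in\mathbb{R}^r$ be the coordinate vector whose corresponding column of $\my_S$ equals $\sqrt{\sigma_{i^*}(\tx)}\,u_{i^*}$. Set $\theta_{\my}=u_{j^*}e_k^\top$. Because $u_{j^*}$ is orthogonal to the column span of $\my_S$, one has $\my_S^\top\theta_{\my}=0=\theta_{\my}^\top\my_S$, so $\theta_{\my}$ lies in the horizontal space. A direct computation using Equation~\ref{eq-def:gradHessian in text} yields
\[
    \my_S\theta_{\my}^\top+\theta_{\my}\my_S^\top \;=\; \sqrt{\sigma_{i^*}(\tx)}\bigl(u_{i^*}u_{j^*}^\top+u_{j^*}u_{i^*}^\top\bigr),
\]
whose squared Frobenius norm is $2\sigma_{i^*}(\tx)$, while $\my_S\my_S^\top-\tx = -\sum_{i\notin S}\sigma_i(\tx)\,u_iu_i^\top$ gives $\langle\my_S\my_S^\top-\tx,\,\theta_{\my}\theta_{\my}^\top\rangle = -\sigma_{j^*}(\tx)$. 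Therefore
\[
    \overline{\operatorname{Hess} H([\my_S])}[\theta_{\my},\theta_{\my}] \;=\; 2\bigl(\sigma_{i^*}(\tx)-\sigma_{j^*}(\tx)\bigr)\|\theta_{\my}\|_\rF^2 \;\leq\; -2\bigl(\sigma_r(\tx)-\sigma_{r+1}(\tx)\bigr)\|\theta_{\my}\|_\rF^2,
\]
a strictly negative and quantitative escape direction at every saddle.

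I would then extend this bound to all of $\mathcal{R}_2$. The gradient bound $\|2(\my\my^\top-\tx)\my\|_\rF \leq \alpha\mu\sigma_r^3(\mys)/(4\kappa^*)$ together with the norm bounds $\|\my\|\leq\beta\|\mys\|$ and $\|\my\my^\top\|_\rF\leq\gamma\|\mys\mys^{\top}\|_\rF$ implies, for $\alpha$ sufficiently small, that $\my$ lies in a shrinking neighborhood of some FOSP $\my_S$; since $d([\my],[\mys])>\mu\sigma_r(\mys)/\kappa^*$ by definition of $\mathcal{R}_2$, this FOSP must be a saddle. I would then take the horizontal lift at $\my$ of the escape direction built at $\my_S$ and invoke the smooth (in fact quadratic) dependence of $\overline{\operatorname{Hess} H([\my])}$ on $\my$ to absorb the perturbation into a strictly positive constant $C_1(\tx)$ proportional to $\sigma_r(\tx)-\sigma_{r+1}(\tx)$.

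The main obstacle will be making this last step quantitative: bounding the distance from $\my$ to the nearest saddle FOSP in terms of $\|\overline{\operatorname{grad} H([\my])}\|_\rF$, and then controlling the Hessian change across that distance so that the $-2(\sigma_{j^*}-\sigma_{i^*})$ margin available at the saddle is not erased. The boundedness built into the definition of $\mathcal{R}_2$ is what makes these Lipschitz-type estimates uniform in $\my$, and Assumption~\ref{assum-riemannian:eigengap in text} is precisely what keeps $C_1(\tx)$ strictly positive.
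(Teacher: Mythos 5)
Your computation at an exact saddle $\my_S=\overline{\mU}_S\mLambda_S$ is correct: for $j^*\in\{1,\dots,r\}\setminus S$ and $i^*\in S$ with $i^*>r$, the horizontal direction $\theta_\my=u_{j^*}e_k^\top$ gives $\overline{\operatorname{Hess} H([\my_S])}[\theta_\my,\theta_\my]=2(\sigma_{i^*}(\tx)-\sigma_{j^*}(\tx))\le -2(\sigma_r(\tx)-\sigma_{r+1}(\tx))\|\theta_\my\|_\rF^2<0$. That part is a correct and clean reformulation of the fact that every non-optimal FOSP is a strict saddle.

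However, the step you yourself flag as ``the main obstacle'' is a genuine gap, and it cannot be patched as stated. The issue is that $\mathcal{R}_2$ contains points with small Riemannian gradient that are \emph{not} close to any FOSP: all FOSPs from Theorem~\ref{Thm:FOSP in text} have full column rank with singular values $\sqrt{\sigma_i(\tx)}\ge\sqrt{\sigma_n(\tx)}>0$, whereas a point $\my\in\mathcal{R}_2$ may have a singular value $\mD_{\tilde{i}\tilde{i}}$ that is arbitrarily small. Since $\overline{\operatorname{grad} H([\my])}=2\mU(\mD^2-\mC^\top\mSigma\mC)\mD\mV^\top$, a tiny $\mD_{\tilde{i}\tilde{i}}$ suppresses the contribution of that column to the gradient, so the gradient-bound in $\mathcal{R}_2$ simply does not force $\mD_{\tilde{i}\tilde{i}}^2$ to be near any $\sigma_j(\tx)$. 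For such $\my$ there is no nearby saddle to perturb from, and the Hessian-continuity argument has nothing to hang on. Implicit in the same issue: to pass from the gradient bound $\|2(\my\my^\top-\tx)\my\|_\rF$ small to $\|\mU\mD^2-\tx\mU\|$ small, one must invert $\mD$, paying a factor of $\|\mD^{-1}\|$ that blows up exactly in this regime.

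The paper avoids both difficulties by never appealing to proximity to a FOSP. It builds two escape directions \emph{at the given $\my$ itself}: $\theta_\my^1$ places a vector $\va$ (chosen to maximize $\va^\top\tx\va$ over $\va\perp\mathrm{col}(\my)$, hence lying in the span of the top $r+1$ eigenvectors of $\tx$) into the column slot $\tilde{i}=\argmin_j\mD_{jj}$, and $\theta_\my^2=\my-\mys\mQ$ points along the geodesic to $[\mys]$. The proof then splits on $\min_i\mD_{ii}^2$: (i) if it is below $\sigma_{r+1}^2(\mLambda)/2$, $\theta_\my^1$ works unconditionally from \eqref{eq:Hessian_theta_Y^1} -- this is exactly the regime your argument cannot reach; (ii) otherwise $\|\mD^{-1}\|$ is controlled, so the gradient bound forces each $\mD_{ii}^2$ to sit near some $\mSigma_{jj}$ (see \eqref{eq:min|D_ii-sigma_ii|}), and a further split determines whether $\theta_\my^1$ or $\theta_\my^2$ closes the argument. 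If you want to salvage your route, you would at minimum need to prepend the paper's Case~1 to handle points of small minimum singular value, and then, for the remaining regime, prove a quantitative error-bound (\L{}ojasiewicz-type) inequality on $\mathcal{R}_2\cap\{\sigma_r(\my)\ge\sigma_{r+1}(\mLambda)/\sqrt{2}\}$ relating $\|\overline{\operatorname{grad} H}\|_\rF$ to geodesic distance to the FOSP set, plus a horizontal-transport step to move the escape direction from $\my_S$ to $\my$; none of that is supplied.
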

In particular, it is possible to exactly quantify the size of $\alpha$ and the explicitly construct the escape direction $\theta_\my$. See Theorem \ref{thm-R2:Region with Negative Eigenvalue in the Riemannian Hessian} in the appendix for more details. 

\begin{remark}
    Theorem \ref{thm-R2:Region with Negative Eigenvalue in the Riemannian Hessian in text} guarantees that if $\my$ is close to some saddle points, then $\theta_\my$ will make its escape from the saddle point linearly.
\end{remark}


Finally, the next result tells that is we are not close to a FOSP, then we have large gradients. 

\begin{restatable}[(Regions with Large Riemannian Gradient of
\eqref{eq:optimization}]{thm}{largeGradient}\label{thm:R3}
$ $
\begin{enumerate}[nosep]
    \item $\|\overline{\operatorname{grad} H([\mathbf{Y}])}\|_{\mathrm{F}} > \alpha \mu \sigma_r^3\left(\mathbf{Y}^*\right) /\left(4 \kappa^*\right),$ $\forall \mathbf{Y} \in \mathcal{R}_3^{\prime}$;
    \item $\|\overline{\operatorname{grad} H([\mathbf{Y}])}\|_{\rF} \geqslant 2\left(\|\mathbf{Y}\|^3-\|\mathbf{Y}\|\left\|\mathbf{Y}^*\right\|^2\right)>2\left(\beta^3-\beta\right)\left\|\mathbf{Y}^*\right\|^3, \quad \forall \mathbf{Y} \in \mathcal{R}_3^{\prime \prime}$;
    \item $\langle\overline{\operatorname{grad} H([\mathbf{Y}])}, \mathbf{Y}\rangle>2(1-1 / \gamma)\left\|\mathbf{Y} \mathbf{Y}^{\top}\right\|_{\mathrm{F}}^2, \quad \forall \mathbf{Y} \in \mathcal{R}_3^{\prime \prime \prime}$.
\end{enumerate}
In particular, if $\beta>1$ and $\gamma>1$, we have the Riemannian gradient of $H([\mathbf{Y}])$ has large magnitude in all regions $\mathcal{R}_3^{\prime}, \mathcal{R}_3^{\prime \prime}$ and $\mathcal{R}_3^{\prime \prime \prime}$.
\end{restatable}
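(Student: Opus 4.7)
The plan is to prove the three parts of \cref{thm:R3} separately, exploiting the explicit formula $\overline{\operatorname{grad} H([\mathbf{Y}])}=2(\my\my^\top-\tx)\my$ from \eqref{eq-def:gradHessian in text} and the fact that $\my^*$ is constructed from the top $r$ eigenvectors of $\tx$, so in particular $\|\my^*\|^2=\sigma_1(\tx)=\|\tx\|$ and $\|\my^*\my^{*\top}\|_\rF^2=\sum_{i=1}^r\sigma_i(\tx)^2$. Part (1) will be immediate: the defining inequality of $\mathcal{R}_3'$ in \eqref{eq-def:R in text} is exactly the claimed lower bound on $\|\overline{\operatorname{grad} H([\mathbf{Y}])}\|_\rF$, so nothing needs to be proved.

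For Part (2), I will test the gradient against the top singular direction of $\my$. Writing the compact SVD $\my=UDV^\top$ with $d_1=\sigma_1(\my)=\|\my\|$ and top left/right singular vectors $u_1,v_1$, one has $\my v_1 = d_1 u_1$ and $\my\my^\top u_1 = d_1^2 u_1$, so
\begin{equation*}
\overline{\operatorname{grad} H([\mathbf{Y}])}\, v_1 \;=\; 2\bigl(d_1^3 u_1 - d_1\,\tx u_1\bigr)\;=\;2d_1\bigl(d_1^2 u_1 - \tx u_1\bigr).
\end{equation*}
The triangle inequality together with $\|\tx u_1\|\le\|\tx\|=\|\my^*\|^2$ then yields $\|d_1^2 u_1-\tx u_1\|\ge d_1^2-\|\my^*\|^2$. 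Since $\|\overline{\operatorname{grad} H([\mathbf{Y}])}\|_\rF\ge\|\overline{\operatorname{grad} H([\mathbf{Y}])}\|_{\mathrm{op}}\ge\|\overline{\operatorname{grad} H([\mathbf{Y}])}v_1\|$, this produces the bound $2(\|\my\|^3-\|\my\|\|\my^*\|^2)$; plugging $\|\my\|>\beta\|\my^*\|$ in and factoring gives the second inequality $>2(\beta^3-\beta)\|\my^*\|^3$.

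For Part (3), I will compute the inner product directly:
\begin{equation*}
\langle\overline{\operatorname{grad} H([\mathbf{Y}])},\my\rangle \;=\; 2\langle(\my\my^\top-\tx)\my,\my\rangle \;=\; 2\|\my\my^\top\|_\rF^2 - 2\langle\tx,\my\my^\top\rangle.
\end{equation*}
Since $\my\my^\top$ has rank at most $r$, von Neumann's trace inequality gives $\langle\tx,\my\my^\top\rangle\le\sum_{i=1}^r\sigma_i(\tx)\sigma_i(\my\my^\top)$, and Cauchy--Schwarz on this sum yields $\langle\tx,\my\my^\top\rangle\le\|\my^*\my^{*\top}\|_\rF\,\|\my\my^\top\|_\rF$. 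Using the defining inequality of $\mathcal{R}_3'''$, namely $\|\my^*\my^{*\top}\|_\rF<\gamma^{-1}\|\my\my^\top\|_\rF$, substituting back produces the claimed bound $2(1-1/\gamma)\|\my\my^\top\|_\rF^2$.

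The main obstacle is mild and lies in Part (2): one must identify a good test direction, and the natural choice (the top right singular vector of $\my$) only works because of the identity $\|\tx\|=\|\my^*\|^2$, which ties the ambient-space optimum to the spectrum of $\tx$; once this is noticed the rest is triangle inequality. Part (3) hinges on recognizing that the rank-$r$ structure of $\my\my^\top$ allows von Neumann's inequality to be truncated at the top $r$ singular values of $\tx$, matching exactly the norm $\|\my^*\my^{*\top}\|_\rF$. The final sentence of the theorem, asserting that the gradient is large across $\mathcal R_3'\cup\mathcal R_3''\cup\mathcal R_3'''$ whenever $\beta,\gamma>1$, then follows by inspecting each of the three bounds: $\beta^3-\beta>0$ and $1-1/\gamma>0$ ensure strict positivity, and combining this with Part (1) handles the remaining region.
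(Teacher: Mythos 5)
The paper offers no proof of Theorem~\ref{thm:R3}: it states that the proof ``directly follows from the proof of \cite{luo2022nonconvex} without any modification'' and stops there, so there is nothing to compare against line by line. Your argument is a correct, self-contained proof and, in one place, does slightly more than a verbatim copy of the cited argument would. Part~(1) is indeed a tautology given the defining condition of $\mathcal{R}_3^{\prime}$. Part~(2) correctly tests $\overline{\operatorname{grad} H([\mathbf{Y}])} = 2(\my\my^\top - \tx)\my$ against the top right singular vector $v_1$ of $\my$: $\overline{\operatorname{grad} H([\mathbf{Y}])}\,v_1 = 2d_1(d_1^2 u_1 - \tx u_1)$ with $d_1 = \|\my\|$, the reverse triangle inequality together with $\|\tx u_1\| \le \|\tx\| = \sigma_1(\tx) = \|\my^*\|^2$ yields the bound $\ge 2(\|\my\|^3 - \|\my\|\|\my^*\|^2)$, and the strict second inequality then follows from monotonicity of $t \mapsto t^3 - t\|\my^*\|^2$ for $t > \|\my^*\|/\sqrt{3}$, which is the only regime that matters once the ``in particular'' clause fixes $\beta > 1$. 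Part~(3) is where you genuinely sharpen the reference. In the exactly-parameterized setting of \cite{luo2022nonconvex} one has $\tx = \my^*\my^{*\top}$, so plain Cauchy--Schwarz $\langle\tx,\my\my^\top\rangle \le \|\tx\|_\rF\|\my\my^\top\|_\rF$ already produces the constant $\|\my^*\my^{*\top}\|_\rF$; in the under-parameterized setting here $\|\tx\|_\rF > \|\my^*\my^{*\top}\|_\rF$, so plain Cauchy--Schwarz is not enough. Your use of von Neumann's trace inequality, truncated to the top $r$ singular values because $\mathrm{rank}(\my\my^\top)\le r$, followed by Cauchy--Schwarz on the $r$-term sum, gives $\langle\tx,\my\my^\top\rangle \le \|\my^*\my^{*\top}\|_\rF\|\my\my^\top\|_\rF$, which is exactly the constant required by the membership test for $\mathcal{R}_3^{\prime\prime\prime}$. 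This is precisely the adaptation the paper's ``without any modification'' remark glosses over; you have filled a genuine gap.
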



\begin{remark}
    These results can be seen as an under-parameterized generalization to the regression problem of Section 5 in \cite{luo2022nonconvex}. The proof in \cite{luo2022nonconvex} is simpler because in their setting there are no saddle points or local minima that are not global. Conceptually, \cite{Salma2021OverparametrizedLinear} proves that in the setting $r \ge n$, the gradient flow for \eqref{eq:optimization} converges to a global minimum linearly. We complement this result by studying the case $r < n$.

\end{remark}

\begin{remark}
    To demonstrate strong geodesic convexity, the eigengap assumption is necessary as it prevents multiple global solutions. However, it is possible to relax this assumption and instead deduce a PL condition, which would also imply a linear convergence rate for a first-order method.
\end{remark}

\begin{remark}
    In the specific case of $\tx$ as in \eqref{eq-def:Graph Laplacian}, and under Assumptions \ref{assum:data}, Assumption \ref{assum-riemannian:eigengap in text} should be interpreted as $\sigma_r^\M < \sigma_{r+1}^\M$, as suggested by Remark \ref{rem:OrderEigenvaluesAn}. Also, $\mu$ must be taken to be in the order $\veps^2$. The scale $\veps^2$ is actually a natural scale for this problem, since, as discussed in Remark \ref{rem:SaddleEnergyGap}, the energy gap between saddle points and the global minimizer $[\my^*]$ is $O(\veps^2)$. 
\end{remark}

%
%
%
\section{Conclusions}\label{sec:discussion}
We have explored some theoretical aspects of Spectral Neural Networks (SNN), a framework that substitutes the use of traditional eigensolvers with suitable neural network parameter optimization. Our emphasis has been on approximation theory, specifically identifying the minimum number of neurons of a multilayer NN required to capture spectral geometric properties in data, and investigating the optimization landscape of SNN, even in the face of its non-convex ambient loss function.


For our approximation theory results we have assumed a specific proximity graph structure over data points that are sampled from a distribution over a smooth low-dimensional manifold. A natural future direction worth of study is the generalization of these results to settings where data points, and their similarity graph, are sampled from other generative models, e.g., as in the application to contrastive learning in \cite{haochen2021provable}. To carry out this generalization, an important first step is to study the regularity properties of eigenvectors of an adjacency matrix/graph Laplacian generated from other types of probabilistic models.

At a high level, our approximation theory results have sought to bridge the extensive body of research on graph-based learning methods, their ties to PDE theory on manifolds, and the approximation theory for neural networks. While our analysis has focused on eigenvalue problems, such as those involving graph Laplacians or Laplace Beltrami operators, we anticipate that this overarching objective can be extended to develop provably consistent methods for solving a larger class of PDEs on manifolds with neural networks. We believe this represents a significant and promising research avenue.

On the optimization front, we have focused on studying the landscape of the ambient space problem \ref{eq:optimization}. This has been done anticipating the use of our estimates in a future analysis of the training dynamics of SNN. We reiterate that the setting of interest here is different from other settings in the literature that study the dynamics of neural network training in an appropriate scaling limit —leading to either a neural tangent kernel (NTK) or to a mean field limit. This difference is mainly due to the fact that the spectral contrastive loss $\ell$ (see \ref{eq:Appr}) of SNN is non-convex, and even local strong convexity around a global minimizer does not hold in a standard sense and instead can only be guaranteed when considered under a suitable quotient geometry.

\bibliography{reference}
\bibliographystyle{agsm}

\newpage

\appendix



\section{Training of neural networks for spectral approximations}
\subsection{Training}\label{Appendix:Supplementary:Advantage of SNN}
Two of the main issues of standard eigensolvers are the need to store large matrices in memory and the need to redo computations from scratch if new data points are added. As mentioned, SNN can overcome this issue using mini-batch training. Specifically, the loss function $\ell(\my)$ can be written as,
\begin{align}
    \ell(\my_\theta)=\sum_{i=1}^n\sum_{j=1}^n \left((\tx)_{ij}-(\my_\theta\my_\theta^\top)_{ij})\right) = \sum_{i=1}^n\sum_{j=1}^n \Big((\tx)_{ij}-\big\langle f_\theta(x_i),f_\theta(x_j)\big\rangle\Big)^2
\end{align}
where $(\tx)_{ij}$ represents the $(i,j)$ entry of $\tx$ and $f_\theta$ is the neural network. Hence, in every iteration, one can randomly generate $1$ index $(i,j)$ from $[n]\times [n]$, compute the loss and gradient for that term in the summation, and then perform one iteration of gradient descent. 

\subsection{Other Training Approaches}\label{appendix:other SNN}

Besides SNN, there are two alternative ways of training spectral neural networks: \textit{Eigensolver Net} and \textit{SpectralNet} \cite{shaham2018spectralnet}. We compare these three different tools of neural network training and highlight the relative advantages and disadvantages of SNN. 

\textbf{Eigensolver Net:} Given the matrix $\Delta_n$, one option could be to compute the eigendecomposition of $\Delta_n$ using traditional eigensolvers to get eigenvectors $\mv_1, \ldots, \mv_r$. Then, to learn an eigenfunction (that is, the function that maps data points to the corresponding entries of an eigenvector), we can minimize the following $\ell_2$ loss:
\begin{align}\label{eq-def:eigensolverNet}
    \min_{\theta} \|f_{\theta}(\mathcal{X}_n)-\mv\|^2,
\end{align}
where $\mv=[\mv_1,\mv_2\dots,\mv_r]$ and $\mathcal{X}_n$ is the data. 



In general, the Eigensolver net is a natural way to extend to out-of-sample data and can be used to learn the eigenvector for matrices that are not PSD.
On the other hand, the Eigensolver net has some drawbacks. Specifically, one still needs to compute the eigendecomposition using traditional eigensolvers. 

\textbf{SpectralNet:}
SpectralNet aims at minimizing the \textit{SpectralNet loss},
\begin{align}\label{eq-def:SpectralNet Loss}
    \mathcal{L}_{\text {SpectralNet }}(\theta)=\frac{1}{n^2}\sum_{i=1}^n\sum_{j=1}^n  \eta\left(\frac{|x_i- x_j|}{\eps}\right)\left\|f_{\theta}(x_i)-f_\theta(x_j)\right\|^2\,
\end{align}
where $f_\theta:\R^d\to \R^r$ encodes the spectral embedding of $x_i$ while satisfying the constraint 
\begin{align}\label{eq-def:SpectralNet constraint}
\my_\theta^\top\my_\theta=n \I_r,
\end{align} 
where $\my_\theta=[f_\theta(x_1),\dots, f_\theta(x_n)]$. This constraint is used to avoid a trivial solution. 
%
Note that \eqref{eq-def:SpectralNet constraint} is a global constraint. \cite{shaham2018spectralnet} have established a stochastic coordinate descent fashion to efficiently train SpectralNets. 
However, the stochastic training process in \cite{shaham2018spectralnet} can only guarantee \eqref{eq-def:SpectralNet constraint} holds approximately. 

Conceptually, the SpectralNet loss \eqref{eq-def:SpectralNet Loss} can also be written as 
\begin{align}\label{eq-def:SpectralNet Dirichlet Energy}
    \mathcal{L}_{\text {SpectralNet }}(\theta)=\frac{2}{n^2} \operatorname{trace}\left(\my_\theta^\top\left( \mD_\mG-\mG \right) \my_\theta\right)
\end{align}
where $\mG\in\R^{n\times n}$ such that $\mG_{ij}=\eta\left(\frac{\|x_i-x_j\|}{\eps}\right)$, and $\mD_\mG$ is a diagonal matrix where $(\mD_\mG)_{ii}=\sum_{j=1}^n \mG_{ij}$. The symmetric and positive semi-definite matrix $\mD_\mG-\mG$ encodes the unnormalized graph Laplacian.
Since $\mD_\mG-\mG$ is positive semi-definite, the ambient problem of \eqref{eq-def:SpectralNet Dirichlet Energy} is a constrained convex optimization problem. However, the parametrization and hard constraint \ref{eq-def:SpectralNet constraint} make understanding SpectralNet's training process from a theoretical perspective challenging.

\section{Numerical Details}
\label{app:exp}

\subsection{For Eigenvector Illustration}
\label{app:exp1}

We sample 2000 data points $x_i$ uniformly from a 2-dimensional sphere embedded in $\R^3$, and then construct a $30$ nearest neighbor graph among these points. Figure \ref{fig:Illustration NN} shows a 1-hidden layer neural network evaluated at $x_i$, with 10000 hidden neurons to learn the first eigenvector of the graph Laplacian. The Network is trained for $5000$ epochs using the full batch \textit{Adam} in \textit{Pytorch} and a learning rate of $2* 10^{-5}$.

\subsection{Ambient vs Parameterized Problem}
\label{app:exp2}

We took 100 data points from MNIST. We normalized the pixel values to live in $[0,1]$ and then computed $\mathcal{A}_n$ as the gran matrix. 

The neural network has one hidden layer with a width of 1000. To initialize the neural network near a saddle point, we randomly pick a saddle point and then pretrain the network to approach this saddle. We used full batch gradient descent with an initial learning rate of 3e-6. We trained the network for 10000 iterations and used Cosine annealing as the learning rate scheduler. 

After pretraining the network, we trained the network with the true objective. We used full batch gradient descent with an initial learning rate of 3e-6. We trained the network for 10000 iterations and used Cosine annealing as the learning rate scheduler. 

When we initialized the network near the optimal solution, we followed the same procedure but pretrained the network for 1250 iterations. 

For the ambient problem, we used full batch gradient descent with a learning rate 3e-6. We trained the network for 5000 iterations and again used Cosine annealing for the learning rate scheduler. 

We remark that the sublinearity convergence rate in Figures \ref{fig:NN} and \ref{fig:ambient} is due to the step size decaying in the optimizer. In $\mathcal{R}_1$, $H([\my])$ has been shown to be strongly convex, so keeping the same step size should guarantee a linear rate. In this work, we don't focus on the optimization problem of SNN, but use this to illustrate Theorem \ref{thm:R1intext}, \ref{thm-R2:Region with Negative Eigenvalue in the Riemannian Hessian in text} and \ref{thm:R3}.







\section{Multi-layer ReLU neural networks}
\label{app:MultilayerReLU}

For concreteness, in this work we use multi-layer ReLU neural networks. To be precise, our neural networks are parameterized functions $f : \R^d \rightarrow \R^r$ of the form:
\begin{align}\label{eq-def:ReLU}
    f(\mathbf{x})=\mathbf{W}_{L} \cdot \operatorname{ReLU}\left(\mathbf{W}_{L-1} \cdots \operatorname{ReLU}\left(\mathbf{W}_{1} \mathbf{x}+\mathbf{b}_{1}\right) \cdots+\mathbf{b}_{L-1}\right)+\mathbf{b}_{L}, \quad \x \in \R^d.
\end{align}

More specifically, for a given choice of parameters $r, \kappa, L, p, N $ we will consider the family of functions:
\vspace{-0.2cm}
\begin{equation}
    \begin{aligned}
            \mathcal{F}(r, \kappa, L, p, N) &= \Biggl\{f \mid f(\mathbf{x}) \text{ has the form } \ref{eq-def:ReLU}, \text{ where: }
            \\ & \qquad \mathbf{W}_{l} \in \R^{p \times p} , \mathbf{b}_l \in \R^p \text{ for } l=2, \dots, L-1, 
            \\& \qquad \mathbf{W}_1 \in \R^{p \times d}, \mathbf{b}_1 \in \R^p, \mathbf{W}_L \in \R^{r\times p}, \mathbf{b}_L \in \R^r.
            \\ &\qquad  \left\|\mathbf{W}_{l}\right\|_{\infty, \infty} \leq \kappa,\left\|\mathbf{b}_{l}\right\|_{\infty} \leq \kappa \text { for } l=1, \ldots, L, \\
            &\left.\qquad \sum_{l=1}^{L}\left\|W_{l}\right\|_{0}+\left\|\mathbf{b}_{l}\right\|_{0} \leq N\right\}
            \label{eq-def:F class}
    \end{aligned}
\end{equation}
where $\|\cdot\|_{0}$ denotes the number of nonzero entries in a vector or a matrix, $\left\| \cdot\right\|_{\infty}$ denotes the $\ell_{\infty}$ norm of a vector. For a matrix $M$, we use $\|M\|_{\infty, \infty}=\max _{i, j}\left|M_{i j}\right|$.

For convenience, after specifying the quantities $r,\kappa, L, p, N$, we denote by $\Theta$ the space of admissible parameters $\theta = (\mathbf{W}_{1}, \mathbf{b}_{1}, \dots, \mathbf{W}_{L}, \mathbf{b}_{L} )$ in the function class $\mathcal{F}(r, \kappa, L, p, N)$, and we use $f_\theta$ to represent the function in \eqref{eq-def:ReLU}.

\section{Properties of the matrix $\tx$ in \eqref{eq:Appr}}
\label{app:PropertiesAn}

\subsection{Proof of Proposition \ref{prop:An positive definite} }

\begin{proof}[Proof of Proposition \ref{prop:An positive definite}]
Notice that 
\begin{equation}
   \tx = - \Delta_n + (a + 1) \I_n, 
   \label{eqn:AnDeltan}
\end{equation}
from where it follows that the eigenvectors of $\tx$ associated to its $r$ largest eigenvalues coincide with the eigenvectors of $\Delta_n$ associated to its $r$ smallest eigenvalues. Since $\tx$ is obviously symmetric, it remains to show that its eigenvalues are non-negative. In turn, from the definition of $\tx$ in \eqref{eq-def:Graph Laplacian} and the fact that $a>1$, it is sufficient to argue that all eigenvalues of $\mD_{\mG}^{-1/2} \mG \mD_{\mG}^{-1/2}$ have absolute value less than or equal to $1$. This, however, follows from the following two facts:  1) the matrix $\mD_{\mG}^{-1/2} \mG \mD_{\mG}^{-1/2}$ is similar to the matrix $\mD_{\mG}^{-1} \mG$, given that
\[ \mD_{\mG}^{1/2} ( \mD_{\mG}^{-1} \mG) \mD_{\mG}^{-1/2} = \mD_{\mG}^{-1/2} \mG \mD_{\mG}^{-1/2}, \]
implying that $ \mD_{\mG}^{-1/2} \mG \mD_{\mG}^{-1/2}$ and $\mD_{\mG}^{-1} \mG$ have the same eigenvalues; and 2) all the eigenvalues of  $\mD_{\mG}^{-1}\mG$ have norm less than one, since $\mD_{\mG}^{-1} \mG$ is a transition probability matrix. 
\end{proof}

\begin{remark}
\label{rem:AnVsDeltan}
While one could set $\tx$ to be $\Delta_n$ itself (since $\Delta_n$ is PSD), solving the resulting problem \ref{eq:optimization} would return the eigenvectors of $\Delta_n$ with the \textit{largest} eigenvalues, which would not constitute a desirable output for data analysis, as the tail of the spectrum of $\Delta_n$ has little geometric information about the data set $\X_n$. It is interesting that we can still recover the relevant part of the spectrum of $\Delta_n$ indirectly, by studying the spectrum of the matrix $\tx$ that we use in this paper. Finally, it is worth mentioning that we add the term $a \I_n$ in the definition of $\tx$ in \ref{eq-def:Graph Laplacian} to guarantee that $\tx$ is always PSD, in this way simplifying the statements and proofs of our main results.  
\end{remark}

\subsection{Form of $\my^*$ and some notation}
\label{app:Y*}

Since $\tx$ is a PSD matrix, the Eckart–Young–Mirsky theorem (see \cite{eckart1936approximation})  implies that the global optimizers of \ref{eq:optimization} are the matrices $\my$ of the form $\my= \my^* O$, where $O \in \mathbb{O}_r$ and 
\[
\my^*:= \begin{bmatrix}
    \vert &   &\vert \\
    \sqrt{\sigma_1(\tx)}v_1   &  \dots  & \sqrt{\sigma_r(\tx)}v_r   \\
    \vert &    & \vert
\end{bmatrix}.
\] 
In the above, $\sigma_l(\tx)$ represents the $l$-th largest eigenvalue of $\tx$ and $v_l$ is a corresponding eigenvector with Euclidean norm one. In case there are repeated eigenvalues, the corresponding $v_l$ need to be chosen as being orthogonal to each other. 

For convenience, we rescale the vectors $v_l$ as follows:
\[ u_l := {\sqrt{n}} v_l. \]
In this way we guarantee that 
\[  \lVert u_l \rVert_{L^2(\X_n)}^2 := \frac{1}{n}\sum_{i=1}^n ( u_l(x_i))^2 = 1, \]
i.e., the rescaled eigenvectors $u_l$ are normalized in the $L^2$-norm with respect to the empirical measure $\frac{1}{n}\sum_{i=1}^n \delta_{x_i}$. In terms of the rescaled eigenvectors $u_l$, we can rewrite $\my^*$ as follows:
\begin{equation}
\my^*= \begin{bmatrix}
    \vert &   &\vert \\
    \sqrt{\frac{\sigma_1(\tx)}{n}}u_1   &  \dots  & \sqrt{\frac{\sigma_r(\tx)}{n}}u_r   \\
    \vert &    & \vert
\end{bmatrix}.
\label{eqn:FormulaY*}
\end{equation}

\begin{remark}
As discussed in Remark \ref{rem:OrderEigenvaluesAn} below, under Assumptions \ref{assum:data} we can assume that all the $\sigma_s(\tx)$ are quantities of order one.
\end{remark}

\section{Auxiliary Approximation Results}
\label{app:AuxApproxResults}


 





\nc

\subsection{Graph-Based Spectral Approximation of Weighted Laplace-Beltrami Operators}

\label{Append:Auxiliary Lemma:Manifold learning}

In this section, we discuss two important results characterizing the behavior of the spectrum of the normalized graph Laplacian matrix $\Delta_n$ defined in \eqref{prop:property2} when $n$ is large and $\veps$ scales with $n$ appropriately. In particular, $\Delta_n$'s spectrum is seen to be closely connected to that of the weighted Laplace-Beltrami operator $\Delta_\rho$ defined as
\begin{equation*}
  \Delta_{\rho} f := - \frac{1}{\rho^{3/2}} \mathrm{div}\left(   \rho^2 \nabla \left( \frac{f}{\sqrt{\rho}} \right)  \right)  
\end{equation*}
for all smooth enough $f: \mathcal{M} \rightarrow \mathbb{R}$; see section 1.4 in \cite{GARCIATRILLOS2018239}. In the above, $\text{div}$ stands for the divergence operator on $\mathcal{M}$, and $\nabla$ for the gradient in $\mathcal{M}$. $\Delta_\rho$ can be easily seen to be a positive semi-definite operator with respect to the $L^2(\M, \rho)$ inner product and its eigenvalues (repeated according to multiplicity) can be listed in increasing order as
$$
0 = \sigma_1^\M \leq \sigma_2^\M \leq \ldots
$$
We will use $f_1, f_2, \dots$ to denote associated normalized (in the $L^2(\M,\rho)$-sense) eigenfuntions of $\Delta_\rho$.

The first result, whose proof we omit as it is a straightforward adaptation of the proof of Theorem 2.4 in \cite{calder2022improved} --which considers the \textit{unnormalized} graph Laplacian case--, relates the eigenvalues of $\Delta_n$ and $\Delta_\rho$.

\begin{theorem}[Convergence of eigenvalues of graph Laplacian; Adapted from Theorem 2.4 in \cite{calder2022improved}]
Let $l \in \N$ be fixed. Under Assumptions \ref{assum:data}, with probability at least $1-Cn\exp \left(-c n \varepsilon^{m+4}\right)$ over the sampling of the $x_i$, we have:
\[   \left| \beta_\eta  \sigma_s^\M - \frac{\hat{\sigma}_s}{\veps^2} \right| \leq C_r \veps , \quad  \forall s=1, \dots, l.  \]
In the above, $\hat{\sigma}_1 \leq \dots \leq \hat{\sigma}_l$ are the first eigenvalues of $\Delta_n$ in increasing order, $C_l$ is a deterministic constant that depends on $\M$'s geometry and on $l$, and $\beta_\eta$ is a constant that depends on the kernel $\eta$ determining the graph weights (see \eqref{eq:eps-graph}). We also recall that $m$ denotes the intrinsic dimension of the manifold $\M$.
\label{thm:EigenvalueConvergence}
\end{theorem}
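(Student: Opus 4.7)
The plan is to follow the min--max/Courant--Fischer variational framework, adapting the argument of Theorem 2.4 in \cite{calder2022improved} from the unnormalized graph Laplacian to the normalized Laplacian $\Delta_n$ and to the weighted operator $\Delta_\rho$. Specifically, for each $s$ we would write
\[
 \hat\sigma_s \;=\; \min_{V \subset \R^n,\, \dim V = s}\;\max_{u \in V\setminus\{0\}}\;\frac{\langle u, \Delta_n u\rangle}{\|u\|^2},
 \qquad
 \sigma_s^{\mathcal M} \;=\; \min_{W,\, \dim W = s}\;\max_{f \in W\setminus\{0\}}\;\frac{\mathcal E_\rho(f)}{\|f\|_{L^2(\mathcal M,\rho)}^2},
\]
where $\mathcal E_\rho$ is the Dirichlet form of $\Delta_\rho$, and then sandwich $\hat\sigma_s/\veps^2$ between $\beta_\eta \sigma_s^{\mathcal M}$ from both sides.

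First, I would identify the correct discrete Dirichlet form. A short algebraic computation gives
\[
\langle u, \Delta_n u\rangle \;=\; \tfrac12 \sum_{i,j} G^\veps_{ij}\bigl(\tilde u_i - \tilde u_j\bigr)^2, \qquad \tilde u_i := u_i/\sqrt{(D_G)_{ii}},
\]
so, up to the reweighting $u \mapsto \tilde u$, the normalized Dirichlet form reduces to the \emph{unnormalized} Dirichlet form studied in \cite{calder2022improved} acting on the rescaled vectors. Under Assumptions~\ref{assum:data} one has the pointwise concentration $(D_G)_{ii} / (n\veps^m \beta_\eta') \to \rho(x_i)$ with quantitative error $O(\veps)$ (with high probability by standard Bernstein/VC arguments on $\veps$-balls), so the change of variables $u_i = \sqrt{\rho(x_i)}\, g(x_i) + O(\veps)$ exactly converts the normalized discrete energy into the energy associated with the weight $\rho^2$ appearing in $\Delta_\rho$, and the normalization $\|u\|_2^2$ into $n\veps^m\beta_\eta' \|g/\sqrt{\rho}\|_{L^2(\mathcal M,\rho)}^2$ up to $O(\veps)$. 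This is the step where $\rho^{3/2}$, $\rho^2$ and $\sqrt{\rho}$ in the definition of $\Delta_\rho$ drop out naturally.

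Next, I would run the two one-sided comparisons that are standard in this literature. For the upper bound on $\hat\sigma_s/\veps^2$, take the first $s$ eigenfunctions $f_1,\ldots,f_s$ of $\Delta_\rho$, restrict them to $\mathcal X_n$ after multiplication by $\sqrt{\rho}$ (to match the change of variables above), plug them as a test subspace into the discrete min--max, and use uniform concentration of the empirical Dirichlet form of a fixed smooth function around its continuum counterpart (this is the nonlocal-to-local estimate used in \cite{calder2022improved}, which yields the $C_l \veps$ rate once one restricts to a finite-dimensional span of smooth eigenfunctions whose $C^2$ norms depend only on $l$ and $\mathcal M$). For the lower bound, take the first $s$ eigenvectors of $\Delta_n$, extend them to smooth functions on $\mathcal M$ via a convolution/interpolation map (the map $\Lambda_\veps$ used in \cite{calder2022improved}, or equivalently a mollified nearest-neighbor extension), and compare the continuum Dirichlet energy of the extension with $\langle u,\Delta_n u\rangle/\veps^2$, plus control the discrepancy between the $L^2$ norms. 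The same map then implies orthogonality is approximately preserved, so the extended subspace has dimension $s$ and can be used in the continuum min--max.

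The main obstacle, which is precisely what makes this an adaptation rather than a direct citation, is the treatment of the degree normalization: one must show that replacing $u_i$ by $u_i/\sqrt{(D_G)_{ii}}$ inside the quadratic form introduces only an $O(\veps)$ multiplicative error on the relevant finite-dimensional subspaces, uniformly over all $i$, on the high-probability event of size $1-Cn\exp(-cn\veps^{m+4})$. This requires a uniform pointwise law of large numbers for $(D_G)_{ii}$, which is what dictates the probability bound stated in the theorem; everything else then reduces, essentially verbatim, to the argument of \cite{calder2022improved}, with the constant $\beta_\eta$ emerging from the second moment of $\eta$ exactly as there.
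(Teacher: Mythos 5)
Your proposal follows the same route the paper alludes to: the paper explicitly omits the proof, stating it is ``a straightforward adaptation of the proof of Theorem 2.4 in \cite{calder2022improved}'' (the unnormalized case), and your plan --- Courant--Fischer min--max in both the discrete and continuum settings, the degree rescaling $\tilde u_i = u_i/\sqrt{(D_G)_{ii}}$ reducing the normalized Dirichlet form to the unnormalized one, then the usual restriction/interpolation maps for the two one-sided comparisons --- is exactly that adaptation, with the degree concentration correctly identified as the one genuinely new ingredient.

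One bookkeeping slip is worth flagging, because it propagates to the constant. Writing $u_i = \sqrt{\rho(x_i)}\,g(x_i)$, the denominator of the Rayleigh quotient is a \emph{single} sum, so
\[
\|u\|_2^2 = \sum_i \rho(x_i)\,g(x_i)^2 \approx n \int_{\M} \rho^2 g^2\,\dd V = n\,\bigl\|\sqrt{\rho}\,g\bigr\|_{L^2(\M,\rho)}^2,
\]
with no $\veps^m$ factor and with $\sqrt{\rho}\,g$ rather than the $g/\sqrt{\rho}$ you wrote. The $n\veps^m\beta_\eta'$ degree factor enters only through the numerator $\tfrac12\sum_{i,j}G^\veps_{ij}(\tilde u_i-\tilde u_j)^2$. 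After cancellation, the resulting $\beta_\eta$ is the \emph{ratio} of the second moment of $\eta$ to its zeroth moment, not just the second moment as in the unnormalized setting of \cite{calder2022improved}; this is a genuine (if small) difference that the adaptation has to track. With that correction, the plan is correct.
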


\begin{remark}
\label{rem:OrderEigenvaluesAn}
From Theorem \ref{thm:EigenvalueConvergence} and \eqref{eqn:AnDeltan} we see that the top $l$ eigenvalues of $\tx$ (for $l$ fixed), i.e., $\sigma_1(\tx), \dots, \sigma_{l}(\tx)$, can be written as
\[  \sigma_s(\tx) = 1+a - \beta_\eta \sigma_s^\M\veps^2 + O(\veps^3)\]
with very high probability.

In particular, although each individual $\sigma_s(\tx)$ is an order one quantity, the difference between any two of them is an order $\veps^2$ quantity.
\end{remark}

Next we discuss the convergence of eigenvectors of $\Delta_n$ toward eigenfunctions of $\Delta_\rho$. For the purposes of this paper (see some discussion below) we follow a strong, almost $C^{0,1}$ convergence result established in \cite{calder2022lipschitz} for the case of unnormalized graph Laplacians. A straightforward adaptation of Theorem 2.6 in \cite{calder2022lipschitz} implies the following.

\begin{theorem}[Almost $C^{0,1}$ convergence of graph Laplacian eigenvectors; Adapted from Theorem 2.6 in \cite{calder2022lipschitz}]\label{prop:Lipschitz of eigenvector} Let $r \in \N$ be fixed and let $u_1, \dots, u_r$ be normalized eigenvectors of $\Delta_n$ as in Appendix \ref{app:Y*}. Under Assumptions \ref{assum:data}, with probability at least $1-C \varepsilon^{-6m} \exp \left(-c n \varepsilon^{m+4}\right)$ over the sampling of the $x_i$, we have:
\begin{align}\label{eq:eigenvector lipschitz}
    \lVert  f_s - u_s \rVert_{L^\infty(\X_n)} + [f_s - u_s]_{\veps, \X_n} \leq C_r \veps. \quad \forall s=1, \dots, r,
\end{align}
for normalized eigenfuctions $f_i: \M \rightarrow \R$ of $\Delta_\rho$, as introduced at the beginning of this section.
In the above, $\lVert \cdot \rVert_{L^\infty(\X_n)}$ is the norm $\lVert v \rVert_{L^\infty(\X_n)} := \max_{x_i \in \X_n} | v(x_i)|$, and $[\cdot]_{\veps, \X_n}$ is the seminorm
\[ [ v]_{\veps, \X_n} := \max_{x_i, x_j \in \X_n } \frac{|v(x_i) - v(x_j)|}{d_\M(x_i, x_j) + \veps}.  \]
$d_\M(\cdot,\cdot)$ denotes the geodesic distance on $\M$.
\end{theorem}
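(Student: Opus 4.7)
The theorem is stated as an adaptation of Theorem~2.6 in \cite{calder2022lipschitz}, which treats the \emph{unnormalized} graph Laplacian $\mD_\mG-\mG$. The plan is to follow that same three-step strategy and check that the normalization by $\mD_\mG^{-1/2}$ on each side does not disrupt any step: (i) pointwise consistency of $\veps^{-2}\Delta_n$ with $\Delta_\rho$ on smooth test functions; (ii) $L^\infty(\X_n)$ closeness between $u_s$ and $f_s$; and (iii) a discrete elliptic-regularity argument upgrading $L^\infty$ closeness to the almost-$C^{0,1}$ seminorm bound.

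The consistency step is the standard bias calculation. For smooth $\phi:\M\to\R$ one Taylor-expands both the kernel $\eta(\lVert\cdot\rVert/\veps)$ and $\phi$ along $\M$, while simultaneously using that $(\mD_\mG)_{ii}$ concentrates around $c_\eta n\veps^m\rho(x_i)$ with Bernstein-type fluctuations of order $n^{-1/2}\veps^{-m/2}$ uniformly in $i$ on the same high-probability event used for Theorem~\ref{thm:EigenvalueConvergence}. Plugging in gives $\veps^{-2}(\Delta_n\phi)(x_i)=\Delta_\rho\phi(x_i)+O(\veps)$ uniformly on $\X_n$. Applying this to an eigenfunction $f_s$ of $\Delta_\rho$ (which is $C^\infty$ on the smooth closed manifold $\M$ by elliptic regularity) and subtracting the discrete eigenvalue equation $\Delta_n u_s=\hat\sigma_s u_s$, Theorem~\ref{thm:EigenvalueConvergence} yields $\Delta_n(f_s-u_s)=\hat\sigma_s(f_s-u_s)+O(\veps^3)$ in $L^\infty(\X_n)$ after fixing signs by $\langle u_s,f_s\rangle_{L^2(\X_n)}>0$. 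Inverting $\Delta_n-\hat\sigma_s I$ on the orthogonal complement of the low-lying eigenspace --- a step that uses the spectral gap of $\Delta_\rho$, inherited discretely with a gap scaling like $\veps^2$ --- converts the $O(\veps^3)$ residual into an $L^\infty(\X_n)$ bound of order $\veps$. In the presence of multiplicities one instead compares eigenspaces and matches orthonormal bases via an $\mathbb{O}_r$ rotation.

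The main obstacle is the seminorm bound $[f_s-u_s]_{\veps,\X_n}\leq C_r\veps$, which cannot be deduced from $L^\infty$ control alone. The plan here is to replay the discrete De~Giorgi/Caccioppoli iteration developed in \cite{calder2022lipschitz}: test the equation for $w:=u_s-f_s$ against graph cut-offs supported in balls of radius $\gtrsim\veps$ around each point, bound the resulting quadratic form from below by the local Dirichlet energy of $w$, and iterate to extract oscillation decay at all scales $\geq\veps$. For pairs $x_i,x_j$ with $d_\M(x_i,x_j)\leq\veps$, the additive $\veps$ in the denominator of $[\cdot]_{\veps,\X_n}$ combined with the $L^\infty$ bound trivially controls the quotient. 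The substantive change relative to \cite{calder2022lipschitz} is that the degrees $\mD_\mG$ now appear multiplicatively inside the quadratic form; because $n^{-1}(\mD_\mG)_{ii}$ concentrates around the smooth, bounded, and bounded-below quantity $c_\eta\veps^m\rho(x_i)$, the Caccioppoli and iteration constants perturb only by factors $1+O(\veps)$, which does not affect the final rate. Combining the $L^\infty$ estimate with the oscillation decay yields \eqref{eq:eigenvector lipschitz}. I expect the hardest point to be verifying uniformity of these iteration constants under the normalization; the remaining ingredients are careful bookkeeping on top of the machinery already established in \cite{calder2022lipschitz}.
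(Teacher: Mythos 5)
Your overall three-step plan --- pointwise consistency, $L^\infty(\X_n)$ closeness via spectral-gap inversion, and then a discrete Caccioppoli/De~Giorgi iteration for the oscillation seminorm --- is the correct reconstruction of the argument in \cite{calder2022lipschitz}, and your observation that the degree terms only perturb the iteration constants by factors $1+O(\veps)$ is precisely why the paper presents this theorem as a ``straightforward adaptation'' of Theorem~2.6 there (the paper gives no proof of its own). However, there is a concrete error in how you dispose of the short-range part of the seminorm, and it is not a harmless one.

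You assert that for pairs with $d_\M(x_i,x_j)\leq\veps$, ``the additive $\veps$ in the denominator of $[\cdot]_{\veps,\X_n}$ combined with the $L^\infty$ bound trivially controls the quotient.'' But the target is $[f_s-u_s]_{\veps,\X_n}\leq C_r\veps$, i.e.\ the quotient must be $O(\veps)$. Using only $\lVert f_s-u_s\rVert_{L^\infty(\X_n)}\leq C\veps$ together with the denominator floor of $\veps$ gives
\[
\frac{|(f_s-u_s)(x_i)-(f_s-u_s)(x_j)|}{d_\M(x_i,x_j)+\veps}\;\leq\;\frac{2C\veps}{\veps}\;=\;2C,
\]
which is $O(1)$, not $O(\veps)$. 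What the theorem actually asserts at scale $\lesssim\veps$ is $|(f_s-u_s)(x_i)-(f_s-u_s)(x_j)|\lesssim\veps^2$, a genuinely stronger statement that cannot be read off from $L^\infty$ closeness and must come out of the iteration argument itself, run all the way down to the kernel scale $\veps$. This is not a cosmetic point: Corollary~\ref{cor:RegularityEigenvectors} extracts from this theorem the estimate $|u_s(x_i)-u_s(x_j)|\leq L_s(d_\M(x_i,x_j)+\veps^2)$, and Remark~\ref{rem:AboutEpsSq} stresses that having the correction term $\veps^2$ (rather than $\veps$) is essential for the proof of Theorem~\ref{thm:eigenappr}. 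As written, your short-range argument would only yield a $\veps$ correction in Corollary~\ref{cor:RegularityEigenvectors}, which is not enough for the downstream use of the result.
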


An essential corollary of the above theorem is the following set of regularity estimates satisfied by eigenvectors of the normalized graph Laplacian $\Delta_n$.

\begin{corollary}
\label{cor:RegularityEigenvectors}
Under the same setting, notation, and assumptions as in Theorem \ref{prop:Lipschitz of eigenvector}, the functions $u_s$ satisfy
\begin{equation}
 | u_s(x_i) - u_s(x_j)| \leq L_s ( d_\M(x_i, x_j) + \veps^2), \quad \forall x_i, x_j \in \X_n
 \label{eqn:RegularityEigenImproved}
 \end{equation}
for some deterministic constant $L_s$.
\end{corollary}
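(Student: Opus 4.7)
The plan is to reduce the regularity estimate for the discrete eigenvectors $u_s$ to the regularity of the continuum eigenfunctions $f_s$ of $\Delta_\rho$ by exploiting the almost $C^{0,1}$ control provided by Theorem \ref{prop:Lipschitz of eigenvector}. The idea is that $u_s$ inherits Lipschitz regularity (with respect to $d_\M$) from $f_s$ up to an error term coming from the seminorm $[f_s - u_s]_{\veps,\X_n}$, and this error, multiplied by the natural scale $\veps$ appearing in the seminorm, is precisely what produces the $\veps^2$ term in the statement.

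First, I would invoke standard elliptic regularity on closed manifolds: under Assumptions \ref{assum:data}, the density $\rho$ is smooth and bounded above and below, so $\Delta_\rho$ is a uniformly elliptic self-adjoint operator with smooth coefficients on the smooth compact manifold $\M$. Consequently, each normalized eigenfunction $f_s$ is smooth on $\M$ and in particular Lipschitz with respect to $d_\M$, with a Lipschitz constant $\Lambda_s$ that depends only on $s$ and on $(\M,\rho)$. Thus
\[
|f_s(x_i) - f_s(x_j)| \le \Lambda_s\, d_\M(x_i,x_j), \quad \forall x_i,x_j \in \X_n.
\]
Next, I would add and subtract $f_s$ and apply the triangle inequality:
\[
|u_s(x_i) - u_s(x_j)| \le |f_s(x_i) - f_s(x_j)| + |(u_s - f_s)(x_i) - (u_s - f_s)(x_j)|.
\]
The second term is, by definition of the seminorm $[\cdot]_{\veps,\X_n}$, bounded by $[u_s - f_s]_{\veps,\X_n}\,(d_\M(x_i,x_j) + \veps)$, and Theorem \ref{prop:Lipschitz of eigenvector} gives $[u_s - f_s]_{\veps,\X_n} \le C_r \veps$ on the high-probability event. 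Combining, we obtain
\[
|u_s(x_i) - u_s(x_j)| \le \bigl(\Lambda_s + C_r\veps\bigr) d_\M(x_i,x_j) + C_r\veps^2.
\]

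To conclude, I would absorb the $\veps$-dependent coefficient into a deterministic constant: since we are interested in the scaling regime where $\veps$ is small, we may assume $\veps \le 1$ (which is consistent with the hypothesis that $\veps$ scales with $n$ as in Theorem \ref{prop:Lipschitz of eigenvector}), so that $\Lambda_s + C_r\veps \le \Lambda_s + C_r$. Setting $L_s := \max\{\Lambda_s + C_r,\,C_r\}$ (a deterministic constant depending only on $s$, $\M$, $\rho$, and $\eta$) yields \eqref{eqn:RegularityEigenImproved}. I do not anticipate any serious obstacle: the main conceptual point, already emphasized in the remark following Corollary \ref{cor:universalappmanifold}, is that the $\veps^2$ improvement over the naive Lipschitz bound is forced by the multiplicative structure of the seminorm $[\cdot]_{\veps,\X_n}$, so the almost $C^{0,1}$ form of Theorem \ref{prop:Lipschitz of eigenvector} (as opposed to mere $L^\infty$ convergence of eigenvectors) is what makes the argument go through.
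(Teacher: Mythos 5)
Your proof is correct and follows essentially the same route as the paper's: add and subtract $f_s$, control $|(u_s-f_s)(x_i)-(u_s-f_s)(x_j)|$ via the seminorm $[\cdot]_{\veps,\X_n}$ bound of order $C_r\veps$ from Theorem \ref{prop:Lipschitz of eigenvector}, control $|f_s(x_i)-f_s(x_j)|$ via Lipschitz regularity of the eigenfunction $f_s$, and absorb constants. The only extra content you supply is the explicit elliptic-regularity justification for the Lipschitz constant of $f_s$, which the paper states without elaboration.
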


\begin{proof}
From \eqref{eq:eigenvector lipschitz} we have 
\[  | (u_s(x_i) - f_s(x_i)) -(  u_s(x_j) - f_s(x_j) )  | \leq  C_s\veps ( d_\M(x_i, x_j) +\veps  ) , \quad \forall x_i, x_j \in \X_n.  \]
It follows from the triangle inequality that
\begin{align*}
 |  u_s(x_i) - u_s(x_j)| & \leq | u_s(x_i) - f_s(x_i) -(  u_s(x_j) - f_s(x_j) )  | + | f_s(x_i) - f_s(x_j) |  
 \\& \leq C_s\veps(d_\M(x_i, x_j) + \veps) + C_s' d_\M(x_i, x_j)
 \\& \leq L_s( d_\M(x_i, x_j) + \veps^2).
\end{align*}
In the above, the second inequality follows from inequality \ref{eq:eigenvector lipschitz} and the fact that $f_s$, being a normalized eigenfunction of the elliptic operator $\Delta_\rho$, is Lipschitz continuous with some Lipschitz constant $C_s'$.

\end{proof}

\begin{remark}
\label{rem:AboutEpsSq}
    We observe that the $\veps^2$ term on the right hand side of \eqref{eqn:RegularityEigenImproved} is strictly better than the $\veps$ term that appears in the explicit regularity estimates in Remark 2.4 in \cite{calder2022lipschitz}. It turns out that in the proof of Theorem \ref{thm:eigenappr} it is essential to have a correction term for the distance that is $o(\veps)$; see more details in Remark \ref{rem:epsSqSaddles} below.
\end{remark}

\subsection{Neural Network Approximation of Lipschitz Functions on Manifolds}

\label{append:NNUniversal}

\cite{chen2019nonparametric} shows that Lipschitz functions $f$ defined over an $m$-dimensional smooth manifold $\M$ embedded in $\R^d$ can be approximated with a ReLU neural network with a number of neurons that doesn't grow exponentially with the ambient space dimension $d$. Precisely:

\begin{theorem}[Theorem 1 in \cite{chen2019nonparametric}]\label{prop:universal appro of manifold}

Let $f: \mathcal{M} \rightarrow \mathbb{R}$ be a Lipschitz function with Lipschitz constant less than $K$. Given any $\delta \in (0,1)$, there are $\kappa, L, p, N$ satisfying: 
\begin{enumerate}
    \item $ L \leq C_K\left(\log \frac{1}{\delta}+\log d\right)$, and  $p \leq C_K\left(\delta^{-m}+d\right)$,
    \item  $N \leq C_K\left(\delta^{-m} \log \frac{1}{\delta}+d \log \frac{1}{\delta}+d \log d\right)$, and $\kappa \leq C_K $,
\end{enumerate}
such that there is a neural network $f_\theta \in \mathcal{F}(1, \kappa, L, p, N)$ (as defined in \eqref{eq-def:F class}), for which  
\[\|f_\theta-f\|_{L^\infty(\M)} \leq \delta.\] 
In the above, $C_K$ is a constant that depends on $K$ and on the geometry of the manifold $\M$.
\end{theorem}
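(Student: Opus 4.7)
The plan is to reduce the approximation of $\my^*$ to the approximation of $r$ scalar Lipschitz functions on $\M$, for which Theorem \ref{prop:universal appro of manifold} applies, and then to redistribute a $\Theta(1/\sqrt{n})$ scaling factor across all $L$ layers of the network in order to meet the weight-magnitude bound $\kappa\le C/n^{1/(2L)}$. By the explicit formula \eqref{eqn:FormulaY*}, $\my^*(i,s)=\sqrt{\sigma_s(\tx)/n}\,u_s(x_i)$, where $u_s$ is the $L^2(\X_n)$-normalized eigenvector of $\Delta_n$ associated with its $s$-th smallest eigenvalue; by Remark \ref{rem:OrderEigenvaluesAn} each $\sigma_s(\tx)$ is $\Theta(1)$. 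On the high-probability event of Theorem \ref{prop:Lipschitz of eigenvector}, Corollary \ref{cor:RegularityEigenvectors} gives the approximate-Lipschitz estimate $|u_s(x_i)-u_s(x_j)|\le L_s(d_\M(x_i,x_j)+\veps^2)$ for all $x_i,x_j\in\X_n$ and a deterministic constant $L_s$. I then apply a McShane-type construction, setting
\[
\tilde u_s(x)\;:=\;\sup_{x_i\in\X_n}\bigl[u_s(x_i)-L_s\,d_\M(x_i,x)\bigr], \qquad x\in\M,
\]
which by the triangle inequality is genuinely $L_s$-Lipschitz on $\M$; evaluating at $x=x_j$ and combining with the approximate-Lipschitz estimate gives $u_s(x_j)\le\tilde u_s(x_j)\le u_s(x_j)+L_s\veps^2$, so that $\lVert\tilde u_s-u_s\rVert_{L^\infty(\X_n)}\le L_s\veps^2$.

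I next apply Theorem \ref{prop:universal appro of manifold} to each $\tilde u_s$ with target precision $\delta$, obtaining scalar ReLU networks $g_s\in\mathcal{F}(1,C_0,L,p,N_s)$ with $\lVert g_s-\tilde u_s\rVert_{L^\infty(\M)}\le\delta$ and the advertised depth, width, and neuron bounds depending only on $\M,\rho,r,\delta,d$. Concatenating the $r$ sub-networks into a single vector-valued network with $r$-dimensional output multiplies the total neuron count by $r$ (absorbing $r$ into the $r$-dependent constant for $L$ and $p$) and preserves the constant weight bound $C_0$. To obtain the correct amplitudes $\alpha_s:=\sqrt{\sigma_s(\tx)/n}=\Theta(1/\sqrt{n})$ I use the positive homogeneity of ReLU to absorb this scaling across all $L$ layers: for each sub-network with parameters $(W_\ell,b_\ell)_{\ell=1}^{L}$, set $\widetilde W_\ell:=\alpha_s^{1/L}W_\ell$ and $\widetilde b_\ell:=\alpha_s^{\ell/L}b_\ell$. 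A short induction using $\mathrm{ReLU}(\lambda x)=\lambda\,\mathrm{ReLU}(x)$ for $\lambda>0$ shows the resulting network outputs exactly $\alpha_s\,g_s(x)$, and since all scaling factors are bounded by $\alpha_s^{1/L}\le C/n^{1/(2L)}$, every weight and bias satisfies the required magnitude bound. The resulting $f_\theta\in\mathcal{F}(r,\kappa,L,p,N)$ meets all the structural constraints in the theorem.

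For the error bound, the triangle inequality gives, for each $(i,s)$,
\[
\sqrt{n}\,|\my_\theta(i,s)-\my^*(i,s)| \;=\; \sqrt{n}\,\alpha_s\,|g_s(x_i)-u_s(x_i)| \;\le\; \sqrt{\sigma_s(\tx)}\,(\delta+L_s\veps^2) \;\le\; C(\delta+\veps^2),
\]
which is the claimed $L^{\infty,\infty}$ estimate; the Frobenius bound follows from $\lVert\cdot\rVert_\rF\le\sqrt{nr}\,\lVert\cdot\rVert_{\infty,\infty}$, and the probability bound is inherited from Theorem \ref{prop:Lipschitz of eigenvector}. The principal technical difficulty is securing the $\veps^2$ correction (rather than merely $\veps$) in the approximate-Lipschitz estimate of the first step: this relies on the \emph{almost-$C^{0,1}$} convergence of graph Laplacian eigenvectors in Theorem \ref{prop:Lipschitz of eigenvector}, which is stronger than the usual $L^\infty$ convergence, combined with the Lipschitz regularity of the limiting eigenfunctions of $\Delta_\rho$; without this stronger input, the McShane extension would incur an $O(\veps)$ rather than $O(\veps^2)$ gap, and the sharper second-order term in the conclusion would be lost. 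A secondary subtlety is that the ReLU rescaling must be carried out simultaneously for weights and biases to respect the $\kappa$ bound without altering $L$, $p$, or $N$.
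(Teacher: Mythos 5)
Your proposal does not prove the statement it was assigned. The statement here is the neural-network approximation theorem for Lipschitz functions on a manifold (Theorem~1 of \cite{chen2019nonparametric}, restated in the paper as Theorem~\ref{prop:universal appro of manifold}): for any $K$-Lipschitz $f:\M\to\R$ and any $\delta\in(0,1)$ there is a ReLU network in $\mathcal{F}(1,\kappa,L,p,N)$ with the stated depth, width, sparsity, and weight bounds achieving $\|f_\theta-f\|_{L^\infty(\M)}\le\delta$. Your argument never establishes this; instead it \emph{invokes} it verbatim (``I next apply Theorem~\ref{prop:universal appro of manifold} to each $\tilde u_s$\dots'') as the key black box, and then carries out the regularity, McShane-extension, stacking, and ReLU-homogeneity rescaling steps that constitute the paper's proof of the downstream result, Theorem~\ref{thm-appr:main}. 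Relative to the assigned statement this is circular: the one thing that needed proving is assumed.

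What a genuine proof of the assigned statement requires is entirely different machinery, and none of it appears in your write-up: an atlas of charts covering the compact $m$-dimensional manifold $\M\subset\R^d$ with a partition of unity, local coordinate (or tangent-space projection) maps whose implementation by ReLU layers is responsible for the $d\log d$ and $d\log\frac1\delta$ terms, a quantitative ReLU approximation of Lipschitz functions on $m$-dimensional cubes at resolution $\delta$ (yielding the $\delta^{-m}$ width and $\delta^{-m}\log\frac1\delta$ neuron counts), an approximate multiplication/selection gadget to combine the local pieces, and bookkeeping to keep all weights bounded by a constant $C_K$. The eigenvector regularity estimates, the $\veps^2$ correction, and the $n^{-1/(2L)}$ weight rescaling that you emphasize are all irrelevant to this statement, which concerns a single deterministic Lipschitz function and has nothing to do with graph Laplacians or the sample size $n$. (For what it is worth, the paper itself does not reprove this theorem either; it imports it from \cite{chen2019nonparametric}. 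But as a blind proof of the assigned statement, your submission has a genuine gap: the core approximation-theoretic content is missing, and the argument you do give proves a different theorem while presupposing this one.)
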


\section{Proofs of Theorem \ref{thm-appr:main} and Corollary \ref{cor:universalappmanifold}}\label{appen:universal approximation:main result}


\begin{lemma}\label{lem:almost lipschitz to lipschitz}
    Let $u:\X_n\to\R$ be a function satisfying 
    \begin{align}
        |u(x)-u(\tilde x)|\le L (d_\M(x,\tilde x)+\eps^2), \quad \forall x, \tilde x \in \X_n
    \end{align}
    for some $L$ and $\veps>0$. Then
    there exists a $3L$-Lipschitz function $\tilde{g}:\M\to\R$ such that
    \begin{align}\label{eq:|f-g|_infty is small}
        \|u-\tilde{g}\|_{L^{\infty}(\X_n) }\le 5L \eps^2.
    \end{align}
\end{lemma}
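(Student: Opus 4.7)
The plan is to use a \emph{McShane-type extension}. I would set
\[
\tilde g(y) := \inf_{x \in \X_n} \big\{ u(x) + 3L\, d_\M(y, x) \big\}, \qquad y \in \M,
\]
which is well defined since $\X_n$ is finite and $u$ is bounded on $\X_n$. The infimum formulation automatically produces a $3L$-Lipschitz function on $(\M, d_\M)$: for $y_1, y_2 \in \M$ and any fixed $x \in \X_n$, the triangle inequality gives $u(x) + 3L\, d_\M(y_1, x) \le u(x) + 3L\, d_\M(y_2, x) + 3L\, d_\M(y_1, y_2)$; taking the infimum over $x$ and exchanging $y_1$ with $y_2$ yields $|\tilde g(y_1) - \tilde g(y_2)| \le 3L\, d_\M(y_1, y_2)$.

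It remains to show that $\tilde g$ is close to $u$ on $\X_n$. Fix $x_0 \in \X_n$. Plugging $x = x_0$ into the infimum gives the trivial upper bound $\tilde g(x_0) \le u(x_0)$. For the matching lower bound, the almost-Lipschitz hypothesis gives, for every $x \in \X_n$,
\[
u(x) + 3L\, d_\M(x_0, x) \;\ge\; u(x_0) - L\big(d_\M(x_0, x) + \veps^2\big) + 3L\, d_\M(x_0, x) \;=\; u(x_0) + 2L\, d_\M(x_0, x) - L\veps^2 \;\ge\; u(x_0) - L\veps^2,
\]
and taking the infimum over $x \in \X_n$ shows $\tilde g(x_0) \ge u(x_0) - L\veps^2$. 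Combining the two inequalities yields $\|u - \tilde g\|_{L^\infty(\X_n)} \le L\veps^2 \le 5L\veps^2$, which is the bound claimed in \eqref{eq:|f-g|_infty is small}.

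There is no real obstacle here: the argument is essentially the classical McShane extension adapted to the $\veps^2$ defect in the Lipschitz condition. The only mild subtlety is choosing the slope $3L$ rather than $L$ in the extension formula — in fact any slope $\ge L$ produces an error bound of order $L\veps^2$, and both the factor $3$ in the Lipschitz constant and the factor $5$ in the $L^\infty$ bound stated in the lemma are simply non-tight choices that provide headroom for how this lemma will later be fed into the neural-network approximation result of Theorem \ref{prop:universal appro of manifold}.
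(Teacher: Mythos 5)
Your proof is correct, but it follows a genuinely different route from the paper's. You write down the McShane sup-convolution $\tilde g(y) = \inf_{x\in\X_n}\{u(x) + 3L\,d_\M(y,x)\}$ directly on all of $\X_n$ and then estimate the defect $u(x_0)-\tilde g(x_0)$ by plugging the almost-Lipschitz hypothesis into the infimand; this gives the (better) bound $L\veps^2$. The paper instead first extracts an $\veps^2/2$-separated, $\veps^2$-dense subnet $\X_n'\subset\X_n$, observes that on $\X_n'$ the $\veps^2$ slack is absorbed into the distance term so that $u|_{\X_n'}$ is honestly $3L$-Lipschitz, invokes McShane--Whitney as a black box to extend to $\M$, and then runs a triangle inequality through the nearest net point, picking up the constant $5$. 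Your version is shorter, avoids the net construction entirely, and produces a tighter constant; the paper's version isolates the conceptual point that an almost-Lipschitz function becomes truly Lipschitz once one only looks at scales above $\veps^2$, which is a slightly more robust observation but costs a few extra constants. One small caveat: you assert "any slope $\ge L$ produces an error bound of order $L\veps^2$," which is true for the $L^\infty$ defect, but the resulting Lipschitz constant of $\tilde g$ tracks the slope you choose, so you do need a slope of at most $3L$ (you chose exactly $3L$) to match the $3L$-Lipschitz claim in the lemma statement.
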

\begin{proof}
We start by constructing a subset $\X_n'$ of $\X_n$ satisfying the following properties:
\begin{enumerate}
   \item Any two points $x,\tilde x \in \X_n'$ (different from each other) satisfy $d_\M(x,\tilde x) \geq \frac{1}{2}\veps^2$.
   \item For any $x\in \X_n$ there exists $\tilde x \in \X_n'$ such that $d_\M(x, \tilde x) \leq \veps^2$.
\end{enumerate}
The set $\X_n'$ can be constructed inductively, as we explain next. First, we enumerate the points in $\X_n$ as $x_1, \dots, x_n$. After having decided whether to include or not in $\X_n'$ the first $s$ points in the list, we decide to include $x_{s+1}$ as follows: if the ball of radius $\veps^2/2$ centered at $x_{s+1}$ intersects any of the balls of radius $\veps^2/2$ centered around the points already included in $\X_n'$, then we do not include $x_{s+1}$ in $\X_n'$, otherwise we include it. It is clear from this construction that the resulting set $\X_n'$ satisfies the desired properties (property 2 follows from the triangle inequality).

Now, notice that the function $u: \X_n' \rightarrow \R$ (i.e., $u$ restricted to $\X_n'$) is $3L$-Lipschitz, since 
\[ | u(x) - u(\tilde x)| \leq L(d_\M(x, \tilde x) + \veps^2) \leq 3Ld_\M(x, \tilde x)   \]
for any pair of points $x, \tilde x$ in $\X_n'$. Using McShane-Whitney theorem we can extend the function $u: \X_n'\rightarrow \R $ to a $3L$-Lipschitz function $\tilde g : \M \rightarrow \R$. It remains to prove \eqref{eq:|f-g|_infty is small}. To see this, let $x \in \X_n$ and let $\tilde x \in \X_n$ be as in property 2 of $\X_n'$. Then
\begin{align*}
| u(x) - \tilde{g}(x) | & \leq  | u(x) - u(\tilde x) | + | u(\tilde x ) - g(x)   |
\\&  =  | u(x) - u(\tilde x) | + | g(\tilde x ) - g(x)   |
\\& \leq L(d_\M(x, \tilde x) + \veps^2) + 3L d_\M(x, \tilde x)
\\& \leq 5L\veps^2.
\end{align*}

    This completes the proof.
\end{proof}

We are ready to prove Theorem \ref{thm-appr:main}, which here we restate for convenience.

\UniversalAppr*

\begin{proof}

 Let $s \leq r$. As in the discussion of section \ref{app:Y*} we let $u_s$ be a $\lVert \cdot \rVert_{L^2(\X_n)}$-normalized eigenvector of $\Delta_n$ corresponding to its $s$-th smallest eigenvalue. Thanks to Corollary \ref{cor:RegularityEigenvectors}, we know that, with very high probability, the function $u_s: \X_n \rightarrow \R$ satisfies 
 \begin{align}
        |u_s(x_i)- u_s(x_j)|\le L_s (d_\M(x_i,x_j)+\eps^2), \quad \forall x_i, x_j \in \X_n,
    \end{align}
for some deterministic constant $L_s$. Using the fact that $\sqrt{\sigma_s(\tx)}$ is an order one quantity (according to Remark \ref{rem:OrderEigenvaluesAn}) in combination with Lemma \ref{lem:almost lipschitz to lipschitz}, we deduce the existence of a $C L_s$-Lipschitz function $g_s : \M \rightarrow \R$ satisfying
\begin{equation}
   \lVert   g_s - \sqrt{\sigma_s(\tx)} u_s \rVert_{L^\infty(\X_n)} \leq  5CL_s \veps^2. 
   \label{eqn:AuxApprox1}
\end{equation}
In turn, Theorem \ref{prop:universal appro of manifold} implies the existence of parameters $\kappa, L, p,N$ as in the statement of the theorem and a (scalar-valued) neural network $f_{\tilde{\theta}}$ in the class $\F(1,\kappa,L,p,N)$ such that
  \begin{align}
        \lVert f_{\tilde \theta}(x)-g_s(x)\rVert_{L^\infty(\M)} \le \delta.
    \end{align}
Using the fact that the ReLU is a homogeneous function of degree one, we can deduce that
\[  \frac{1}{\sqrt{n}} f_{\tilde \theta} = f_\theta,     \]
where $\theta := \frac{1}{n^{1/(2L)}} \tilde{\theta} $ and thus $f_\theta \in \F(1,\frac{\kappa}{n^{1/(2L)}},L,p,N)$. It follows that the neural network $f_\theta$ satisfies
\[ \sqrt{n} \lVert  f_\theta - \frac{1}{\sqrt{n}} g_s    \rVert_{L^\infty(\M)}  \leq \delta,   \]
and also, thanks to \eqref{eqn:AuxApprox1}, 
\[ \sqrt{n} \left\lVert  f_\theta - \sqrt{\frac{\sigma_s(\tx)}{n}} u_s\right\rVert_{L^\infty(\X_n)} \leq (5CL_s+1)(\delta+ \veps^2). \]

Stacking the scalar neural networks constructed above to approximate each of the functions $u_s$ for $s=1, \dots r$, and using \eqref{eqn:FormulaY*}, we obtain the desired vector valued neural network approximating $\my^*$.

\end{proof}

\begin{remark}
Notice that the term $\sqrt{n}\lVert \my^*  \rVert_{\infty, \infty}$ is of order one. Consequently, the estimate in Theorem \ref{thm-appr:main} is a non-trivial error bound. 
\end{remark}


The bound in $\lVert \cdot \lVert_{\infty, \infty}$ between $\my_\theta$ and $\mys$ in Theorem \ref{thm-appr:main} can be used to bound the difference between $\my_\theta \my_\theta^\top$ and $\mys\my^{*\top}$ in $\lVert \cdot \rVert_{\infty, \infty}$.

\begin{restatable}{corollary}{frobBound}
    \label{corollary:|f_theta(X)f_theta(X)-YY|_infty<=upper bound}
     For $f_\theta$ as in Theorem \ref{thm-appr:main} we have
    \begin{align}
        \sqrt{n} \|\my_\theta\my_\theta^\top-\mys\my^{*\top}\|_{\infty,\infty}\le  C_r (\delta + \veps^2),
    \end{align}
    and thus also
    \[  \|\my_\theta\my_\theta^\top-\mys\my^{*\top}\|_{\rF}\le  \sqrt{r}C_r (\delta + \veps^2),  \]
    for some deterministic constant $C_r$.
\end{restatable}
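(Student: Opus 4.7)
The plan is to exploit the elementary algebraic identity
\[ \my_\theta\my_\theta^\top - \mys\my^{*\top} = (\my_\theta - \mys)\my_\theta^\top + \mys(\my_\theta - \mys)^\top, \]
so that everything reduces to controlling $\my_\theta - \mys$ in the relevant norm (already done by Theorem \ref{thm-appr:main}) together with suitable ambient norms of $\my_\theta$ and $\mys$.

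For the entrywise estimate, first I would invoke the remark immediately following Theorem \ref{thm-appr:main} to record that $\sqrt{n}\|\mys\|_{\infty,\infty}$ is of order one; combined with $\sqrt{n}\|\my_\theta - \mys\|_{\infty,\infty} \leq C(\delta + \veps^2)$ from the same theorem, the triangle inequality yields $\sqrt{n}\|\my_\theta\|_{\infty,\infty} = O(1)$ as well. Then $(\my_\theta\my_\theta^\top - \mys\my^{*\top})_{ij}$ is a sum of $r$ terms, each of which can be split via $ab - cd = (a-c)b + c(b-d)$, giving
\[ \sqrt{n}\,\bigl|(\my_\theta\my_\theta^\top - \mys\my^{*\top})_{ij}\bigr| \leq r\bigl(\sqrt{n}\|\my_\theta - \mys\|_{\infty,\infty}\bigr)\bigl(\|\my_\theta\|_{\infty,\infty} + \|\mys\|_{\infty,\infty}\bigr) \leq C_r(\delta + \veps^2), \]
uniformly in $i,j$, which is the first assertion.

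For the Frobenius bound, the crude inequality $\|\cdot\|_{\rF} \leq n\|\cdot\|_{\infty,\infty}$ would cost a spurious factor of $\sqrt{n}$, so instead I would apply the submultiplicative estimate $\|CD\|_{\rF} \leq \|C\|\,\|D\|_{\rF}$ (with $\|\cdot\|$ the spectral norm, as in Section \ref{sec:Riemannian Optimization}) to each of the two summands in the identity. Theorem \ref{thm-appr:main} already gives $\|\my_\theta - \mys\|_{\rF} \leq C\sqrt{r}(\delta+\veps^2)$; on the other hand, the explicit form of $\mys$ in \eqref{eqn:FormulaY*} together with Remark \ref{rem:OrderEigenvaluesAn} shows that $\|\mys\| = \sqrt{\sigma_1(\tx)} = O(1)$, and then $\|\my_\theta\| \leq \|\mys\| + \|\my_\theta - \mys\|_{\rF} = O(1)$ as well. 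Combining these and absorbing the constant $\sqrt{r}$ factor into $C_r$ produces the Frobenius bound.

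I do not anticipate any real obstacle here: the corollary is an entirely elementary algebraic consequence of Theorem \ref{thm-appr:main}, and since we only invoke estimates whose high-probability events are already used in that theorem, the probability guarantee carries over verbatim. The only mildly delicate point is choosing the right estimate for each norm, namely using the spectral-norm bound (rather than passing through $\|\cdot\|_{\infty,\infty}$) to obtain the advertised Frobenius rate.
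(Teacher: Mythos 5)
Your proof is correct and differs from the paper's in two respects. For the $\|\cdot\|_{\infty,\infty}$ estimate, the paper uses the same kind of two-term decomposition of $\my_\theta\my_\theta^\top-\mys\my^{*\top}$ but then invokes the matrix norm inequality $\|AB\|_{\infty,\infty}\le\sqrt{r}\,\|A\|_{\rF}\|B\|_{\infty,\infty}$ on the $n\times r$ factor, so the \emph{global} Frobenius norm $\|\my_\theta\|_\rF=O(\sqrt{r})$ enters. Your purely entrywise argument instead uses the scaling $\sqrt{n}\|\my^*\|_{\infty,\infty}=O(1)$ from the remark after Theorem~\ref{thm-appr:main}, and since $\|\my_\theta\|_{\infty,\infty}+\|\mys\|_{\infty,\infty}=O(1/\sqrt{n})$, your display actually proves the stronger bound $n\|\my_\theta\my_\theta^\top-\mys\my^{*\top}\|_{\infty,\infty}\le C_r(\delta+\veps^2)$, which of course implies the stated $\sqrt{n}$ version. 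For the Frobenius estimate, the paper's proof displays only the entrywise chain and asserts ``and thus also''; as you correctly flag, routing through $\|\cdot\|_\rF\le n\|\cdot\|_{\infty,\infty}$ would waste a $\sqrt{n}$, so the direct argument via $\|CD\|_\rF\le\|C\|\,\|D\|_\rF$ together with $\|\my^*\|=\sqrt{\sigma_1(\tx)}=O(1)$, $\|\my_\theta\|=O(1)$, and the Frobenius estimate $\|\my_\theta-\mys\|_\rF\le C\sqrt{r}(\delta+\veps^2)$ from Theorem~\ref{thm-appr:main} is the right route and makes explicit a step the paper leaves implicit.
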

\begin{proof}
    \begin{align*}
        \sqrt{n}\|\my_\theta\my_\theta^\top-\mys\my^{*\top}\|_{\infty,\infty}&= \sqrt{n}\|\my_\theta\left(\my_\theta^\top-\my^{*\top}\right)+\left(\my_\theta-\mys\right)\my^{*\top}\|_{\infty,\infty}\\
        &\le \sqrt{n} \|\my_\theta\left(\my_\theta^\top-\my^{*\top}\right)\|_{\infty,\infty}+ \sqrt{n}\|\left(\my_\theta-\mys\right)\my^{*\top}\|_{\infty,\infty}\\
        &\le \sqrt{n r} \|\my_\theta\|_{\rF}\|\my_\theta^\top-\my^{*\top}\|_{\infty,\infty}+ \sqrt{n{r}} \|\my_\theta-\mys\|_{\infty,\infty}\|\my^{*\top}\|_{\rF}\\
         &\le  \sqrt{r}(C_r(\delta+\eps^2)+2\|\mys\|_{\rF})C_r(\delta+\eps^2)
        \\ & \le C_r(\delta + \veps^2),
    \end{align*}
    where the second to last inequality follows from our estimate for $\sqrt{n}\|\my_\theta-\mys\|_{\infty,\infty}\le C_r(\delta+\eps^2)$ in Theorem \ref{thm-appr:main}, and the last inequality follows from the fact that $\lVert \my^* \rVert^2_\rF = \sum_{s=1}^r \sigma_s(\tx) = \mathcal{O}(r)$. 
\end{proof}

\subsection{Eigenfunction approximation}
The neural network $f_\theta$ constructed in the proof of Theorem \ref{thm-appr:main} can be used to approximate eigenfunctions of $\Delta_\rho$. We restate Corollary \ref{cor:universalappmanifold} for the convenience of the reader.



\universalappmanifold*
\begin{proof}
   Let $g_s : \M \rightarrow \R$ be the Lipschitz function appearing in \eqref{eqn:AuxApprox1} and recall that the scalar neural network $f_\theta$ constructed in the proof of Theorem \ref{thm-appr:main} satisfies 
\begin{align}\label{eq:ftheta-gs}
    \sqrt{n} \lVert  f_\theta - \frac{1}{\sqrt{n}} g_s    \rVert_{L^\infty(\M)}  \leq \delta.
\end{align}

It can be shown that except on an event with probability less than  $n \exp(- n \veps^{m})$, for any $x\in\M$, there exists $x_i\in\X_n$ such that $d_\M(x_i,x)\le \eps$. From the triangle inequality, it thus follows that 
    \begin{equation}\label{eq:|u_lambda(x)-g_lambda(x)|}
        \begin{split}
            |f_s(x)-\sqrt{n/(1+a)}f_\theta(x)|
            \le&|f_s(x)-f_s(x_i)|+ |f_s(x_i)-u_s(x_i)|
            \\&+  | u_s(x_i) - \frac{1}{\sqrt{\sigma_s(\tx)}}g_s(x_i)| +| \frac{1}{\sqrt{\sigma_s(\tx)}}g_s(x_i) -\frac{1}{\sqrt{1+a}} g_s(x_i)|
            \\& + | \frac{1}{\sqrt{1+a}} g_s(x_i)-\frac{1}{\sqrt{1+a}} g_s(x)| 
            \\& +|\frac{1}{\sqrt{1+a}} g_s(x)-\sqrt{\frac{n}{1+a}}f_\theta(x)|\\
            \le& C_s (\delta+\eps),
        \end{split}
    \end{equation}
    where we have used the Lipschitz continuity of $f_s$ and $g_s$, Theorem \ref{prop:Lipschitz of eigenvector}, Remark \ref{rem:OrderEigenvaluesAn}, and \eqref{eq:ftheta-gs}.


\end{proof}

\begin{remark}\label{remark:memorization}
    We notice that, while one could use existing \textit{memorization} results (e.g., Theorem 3.1 in \cite{NEURIPS2019_dbea3d0e}) to show that there is a neural network with ReLU activation function and $\mathcal{O}(\sqrt{n})$ neurons that fits $\my^*$ perfectly, this does not constitute an improvement over our results in Theorem \ref{thm-appr:main} and Corollary \ref{cor:universalappmanifold}. Indeed, by using this type of memorization result, we can not state any bounds on the size of the parameters of the network, and none of the out-of-sample generalization properties that we have discussed before (i.e., approximation of eigenfunctions of $\Delta_\rho$) can be guaranteed.
\end{remark}

\section{Proof of Theorem \ref{thm:eigenappr}}
\label{appendix:Eigenappr}

In this section we prove our main result on the spectral approximation of $\tx$ using the matrices induced by global minimizers of SNN's objective. We start our proof with a lemma from linear algebra.

\begin{lemma}
\label{lem:LinearALgebraIneq}
For any $\my \in \R^{n \times r}$ we have
\[ \|\my\my^\top-\tx\|^2_\rF-\|\mys\my^{*\top}-\tx\|^2_\rF \leq \lVert \my\my^\top - \mys\my^{*\top} \rVert^2_\rF. \]
\end{lemma}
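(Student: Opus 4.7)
The plan is to expand the squared Frobenius norm on the left-hand side and reduce the inequality to a statement about an inner product that is controlled by the orthogonality properties of the best low-rank approximation of $\tx$.

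Specifically, writing $\my\my^\top - \tx = (\my\my^\top - \mys\my^{*\top}) + (\mys\my^{*\top} - \tx)$ and expanding gives
\[
\|\my\my^\top-\tx\|_\rF^2 = \|\my\my^\top - \mys\my^{*\top}\|_\rF^2 + 2\,\langle \my\my^\top - \mys\my^{*\top},\, \mys\my^{*\top} - \tx\rangle + \|\mys\my^{*\top} - \tx\|_\rF^2.
\]
So the desired inequality reduces to showing
\[
\langle \my\my^\top - \mys\my^{*\top},\, \mys\my^{*\top} - \tx\rangle \le 0,
\]
or equivalently $\langle \my\my^\top,\, \tx - \mys\my^{*\top}\rangle \ge \langle \mys\my^{*\top},\, \tx - \mys\my^{*\top}\rangle$.

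The next step is to observe that $\langle \mys\my^{*\top},\, \tx - \mys\my^{*\top}\rangle = 0$. This follows from the explicit form \eqref{eqn:FormulaY*} of $\mys$ in Appendix \ref{app:Y*}: the columns of $\mys$ are scaled eigenvectors $v_1,\dots,v_r$ of $\tx$ corresponding to its top $r$ eigenvalues, while by the Eckart--Young--Mirsky theorem $\tx - \mys\my^{*\top}$ is supported on the orthogonal complement, namely $\sum_{i=r+1}^n \sigma_i(\tx)\, v_i v_i^\top$. Hence $(\tx - \mys\my^{*\top}) v_i = 0$ for all $i\le r$, which gives the claimed orthogonality $\mathrm{tr}(\my^{*\top}(\tx - \mys\my^{*\top})\mys) = 0$.

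It remains to verify $\langle \my\my^\top,\, \tx - \mys\my^{*\top}\rangle \ge 0$. Here I would use Proposition \ref{prop:An positive definite}\ref{prop:property1}, which guarantees $\tx$ is PSD, together with the spectral decomposition above, to conclude that $\tx - \mys\my^{*\top} = \sum_{i=r+1}^n \sigma_i(\tx)\, v_i v_i^\top$ is also PSD. Since $\my\my^\top$ is PSD as well, the inner product of two PSD matrices is non-negative:
\[
\langle \my\my^\top,\, \tx - \mys\my^{*\top}\rangle = \mathrm{tr}\bigl(\my^\top (\tx - \mys\my^{*\top})\my\bigr) \ge 0,
\]
completing the proof. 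There is no real obstacle here: the entire argument hinges on recognizing that $\mys\my^{*\top}$ is the truncated spectral expansion of $\tx$ and exploiting the resulting orthogonality together with positive semidefiniteness.
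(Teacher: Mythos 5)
Your proof is correct and follows essentially the same route as the paper: expand the Frobenius norm via the decomposition $\my\my^\top-\tx=(\my\my^\top-\mys\my^{*\top})+(\mys\my^{*\top}-\tx)$, use that $\langle\mys\my^{*\top},\,\tx-\mys\my^{*\top}\rangle=0$, and conclude from the fact that $\my\my^\top$ is PSD and $\tx-\mys\my^{*\top}$ is PSD (equivalently, $\mys\my^{*\top}-\tx$ is NSD). You simply spell out the spectral structure of $\tx-\mys\my^{*\top}$ more explicitly than the paper does.
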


\begin{proof}
A straightforward computation reveals that 
\begin{equation}
        \begin{split}
            &\|\my \my^\top-\tx\|^2_\rF-\|\mys\my^{*\top}-\tx\|^2_\rF\\
            &=\|(\my \my^\top-\mys\my^{*\top})+(\mys\my^{*\top}-\tx)\|^2_\rF- \|\mys\my^{*\top}-\tx\|^2_\rF \\
            &= \|\my \my^\top-\mys\my^{*\top}\|^2_\rF
            +2\langle\my\my^\top-\mys\my^{*\top}
            , \mys\my^{*\top}-\tx\rangle\\
            &= \|\my \my^\top-\mys\my^{*\top}\|^2_\rF
            +2\langle\my \my^\top
            , \mys\my^{*\top}-\tx\rangle\\
            &\le \|\my\my^\top-\mys\my^{*\top}\|^2_\rF,
        \end{split}
    \end{equation}
where the last inequality follows thanks to the fact that $\my\my^\top$ is positive semi-definite and the fact that $\my^*\my^{*\top} - \tx$ is negative semi-definite, as can be easily deduced from the form of $\my^*$ discussed in section \ref{app:Y*}.
\end{proof}

Invoking Corollary \ref{corollary:|f_theta(X)f_theta(X)-YY|_infty<=upper bound} with $\delta= \tilde \delta \veps$ we immediately obtain the following approximation estimate. 

\begin{corollary}
\label{cor:ApproximationImproved}

With probability at least 
\[1-C \varepsilon^{-6m} \exp \left(-c n \varepsilon^{m+4}\right),
    \]
    for every $\tilde \delta \in (0,1)$
there is $f_\theta \in \mathcal{F}(r, \kappa, L, p,N)$ with $\kappa,L, p, N$ as specified in Theorem \ref{thm:eigenappr} such that 
    \begin{align}\label{eq:||YY-Y*Y*||_F2}
        \|\my_\theta\my_\theta^\top-\mys\my^{*\top}\|_{\rF}\le  C_r \veps(\tilde \delta +\veps).
    \end{align}
\end{corollary}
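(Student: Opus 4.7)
The plan is to obtain the corollary as an essentially direct substitution into Corollary \ref{corollary:|f_theta(X)f_theta(X)-YY|_infty<=upper bound}, so the bulk of the work is matching parameter choices and verifying that probability, network architecture, and error estimates all line up.

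First, I would recall that Corollary \ref{corollary:|f_theta(X)f_theta(X)-YY|_infty<=upper bound} already provides, with probability at least $1-C \varepsilon^{-6m} \exp(-c n \varepsilon^{m+4})$ over the sampling of the data, a neural network $f_\theta \in \mathcal{F}(r,\kappa,L,p,N)$ whose induced matrix $\my_\theta$ satisfies
\[
\|\my_\theta\my_\theta^\top - \mys\my^{*\top}\|_\rF \leq \sqrt{r}\, C_r (\delta + \veps^2),
\]
where $\delta \in (0,1)$ is a free parameter and the network parameters $\kappa,L,p,N$ are as specified in Theorem \ref{thm-appr:main}. The high-probability event is the same one on which Theorem \ref{thm-appr:main} holds, so no further probabilistic accounting is required.

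Next, I would specialize the parameter $\delta$ to $\delta = \tilde\delta \veps$. Since $\tilde\delta \in (0,1)$ and $\veps < 1$, this $\delta$ still lies in $(0,1)$, so Corollary \ref{corollary:|f_theta(X)f_theta(X)-YY|_infty<=upper bound} applies. Substituting into the network size bounds from Theorem \ref{thm-appr:main}, one obtains $L \leq C(\log(1/(\tilde\delta\veps)) + \log d)$, $p \leq C((\tilde\delta\veps)^{-m} + d)$, $\kappa \leq C/n^{1/(2L)}$, and a corresponding bound on $N$. These coincide exactly with the specification of $\kappa,L,p,N$ in Theorem \ref{thm:eigenappr} (and taking $N=\infty$ only relaxes the sparsity constraint, so the same $f_\theta$ remains admissible).

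Finally, I would plug $\delta = \tilde\delta \veps$ into the Frobenius bound and factor out $\veps$:
\[
\|\my_\theta\my_\theta^\top - \mys\my^{*\top}\|_\rF \leq \sqrt{r}\, C_r(\tilde\delta \veps + \veps^2) = \sqrt{r}\, C_r \,\veps(\tilde\delta + \veps),
\]
which after absorbing $\sqrt{r}$ into the constant $C_r$ yields exactly \eqref{eq:||YY-Y*Y*||_F2}. There is really no obstacle to address here, since Corollary \ref{corollary:|f_theta(X)f_theta(X)-YY|_infty<=upper bound} already does the approximation-theoretic work; the only nontrivial point is noticing that the sharper $\veps^2$ term from Theorem \ref{thm-appr:main} (which itself rested on the almost $C^{0,1}$ convergence of Corollary \ref{cor:RegularityEigenvectors}) is exactly what allows the right-hand side to carry a factor of $\veps$ after the rescaling $\delta = \tilde\delta \veps$; without the $\veps^2$ improvement the bound would degrade to $\veps(\tilde\delta + 1)$, which would be useless for the downstream proof of Theorem \ref{thm:eigenappr}.
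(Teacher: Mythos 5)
Your proposal is correct and matches the paper's argument, which likewise obtains the corollary by invoking Corollary \ref{corollary:|f_theta(X)f_theta(X)-YY|_infty<=upper bound} with $\delta = \tilde\delta\veps$ and factoring out $\veps$. Your additional checks (that the resulting $\kappa, L, p, N$ coincide with those in Theorem \ref{thm:eigenappr} and that the probability event is unchanged) are exactly the implicit bookkeeping the paper leaves unstated.
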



\begin{corollary}
\label{lem:normDifference}
Let $f_\theta$ be as in Corollary \ref{cor:ApproximationImproved}. Then 
\[   \lVert \my_{\theta} \my_{\theta}^{\top} - \tx  \rVert_\rF^2 -  \lVert \my^* \my^{*\top} - \tx  \rVert_\rF^2 \leq C_r\veps^2(\tilde \delta  +\veps)^2. \]
\end{corollary}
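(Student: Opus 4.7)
The plan is to observe that this corollary is an immediate consequence of the two preceding results in the appendix: Lemma \ref{lem:LinearALgebraIneq} and Corollary \ref{cor:ApproximationImproved}. No new geometric or approximation-theoretic ideas are required; the corollary is essentially a bookkeeping step that repackages the Frobenius bound on $\my_\theta\my_\theta^\top - \my^*\my^{*\top}$ into a bound on the excess loss $\ell(\my_\theta) - \ell(\my^*)$.

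Concretely, I would apply Lemma \ref{lem:LinearALgebraIneq} with $\my = \my_\theta$ to get
\begin{equation*}
\lVert \my_\theta \my_\theta^\top - \tx \rVert_\rF^2 - \lVert \my^* \my^{*\top} - \tx \rVert_\rF^2 \leq \lVert \my_\theta \my_\theta^\top - \my^* \my^{*\top} \rVert_\rF^2,
\end{equation*}
where the key input is that $\my_\theta \my_\theta^\top$ is positive semi-definite and $\my^* \my^{*\top} - \tx$ is negative semi-definite (recall from Appendix \ref{app:Y*} that $\my^* \my^{*\top}$ is the best rank-$r$ approximation to the PSD matrix $\tx$ from below). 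Then I would invoke Corollary \ref{cor:ApproximationImproved}, which gives $\lVert \my_\theta \my_\theta^\top - \my^* \my^{*\top} \rVert_\rF \leq C_r \veps(\tilde\delta + \veps)$ for the specific choice of architecture parameters $\kappa, L, p, N$ stated in Theorem \ref{thm:eigenappr} (obtained by setting $\delta = \tilde\delta \veps$ in the original $\delta + \veps^2$ bound of Corollary \ref{corollary:|f_theta(X)f_theta(X)-YY|_infty<=upper bound}).

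Squaring and chaining the two inequalities yields the claim with constant $C_r^2$ (which we simply relabel $C_r$). There is no real obstacle here, since both ingredients are already in hand; the only minor thing to notice is that the factor $\veps$ (as opposed to $1$) in front of $(\tilde\delta + \veps)$ comes precisely from the substitution $\delta = \tilde\delta \veps$ in the approximation bound of Theorem \ref{thm-appr:main}, which in turn is what makes the $\veps^2$ scaling in Remark \ref{rem:AboutEpsSq} and Corollary \ref{cor:RegularityEigenvectors} critical — without the sharper almost $C^{0,1}$ regularity of the eigenvectors we could not afford to shrink $\delta$ below order $\veps^2$ without blowing up the neural network size unnecessarily. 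The corollary will in a subsequent step be combined with a quantitative lower bound on the loss gap for matrices far from $[\my^*]$ (the landscape analysis of Section \ref{sec:Riemannian Optimization}) in order to conclude the ``up to rotation'' convergence \eqref{eq:UpTorotationConvergence} of Theorem \ref{thm:eigenappr}.
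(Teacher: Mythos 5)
Your proposal is correct and matches the paper's proof exactly: apply Lemma \ref{lem:LinearALgebraIneq} with $\my = \my_\theta$ to bound the excess loss by $\lVert \my_\theta \my_\theta^\top - \my^* \my^{*\top} \rVert_\rF^2$, then square the Frobenius bound from Corollary \ref{cor:ApproximationImproved} and absorb constants. The extra context you give about the role of $\delta = \tilde\delta\veps$ and the almost $C^{0,1}$ regularity is accurate but not needed for this step.
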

\begin{proof}
Let $\theta$ be as in Corollary \ref{cor:ApproximationImproved}. Then
\begin{align*}
 \lVert \my_{\theta} \my_{\theta}^{\top} - \tx  \rVert_\rF^2 -  \lVert \my^* \my^{*\top} - \tx  \rVert_\rF^2 \leq   \lVert \my_\theta\my_\theta^\top - \mys\my^{*\top} \rVert^2_\rF  \leq  C_r^2 \veps^2 (\tilde{\delta} + \veps)^2,
\end{align*}
where the second to last inequality follows from Lemma \ref{lem:LinearALgebraIneq}.
\end{proof}

In what follows we will write the SVD (eigendecomposition) of $\tx$ as $\overline{\mU}\mSigma\overline{\mU}^\top$. Using the fact that $\overline{\mU}$ is invertible, we can easily see that $\my_{\theta^*}$ can be written as $\my_{\theta^*}=\overline{\mU}(\E^1+\E^2)$ where $\E^1,\E^2\in\R^{n\times r}$ are matrices satisfying: the $i^{\mathrm{th}}$ row $\E^1_i=\0$ for $i\ge r+1$, $i^{\mathrm{th}}$ row $\E^2_i=\0$ for $i\le r$. We thus have $(\E^2)^\top\E^1=\0$.

In what follows we will make the following assumption. 
 \begin{assumption}\label{assum:eigenappr}
     $\eps$ and $\tilde{\delta}$ in Corollary \ref{cor:ApproximationImproved} satisfy the following condition:
     \begin{align}
         \eps^2 E <  \sigma_r^2(\tx)-\sigma_{r+1}^2(\tx),
     \end{align}
     where $E:= C_r(\tilde{\delta}+\veps)^2 $.
 \end{assumption}

 \begin{remark}
 \label{rem:epsSqSaddles}
     Assumption \ref{assum:eigenappr} is satisfied under the assumptions in the statement of Theorem \ref{thm:eigenappr}. To see this, notice that 
     $\sigma_r^2(\tx)-\sigma^2_{r+1}(\tx)\sim\eps^2$ according to Remark \ref{rem:OrderEigenvaluesAn} and the fact that $\sigma_r^\M < \sigma_{r+1}^\M$. Thus, taking $\tilde{\delta}$ to be sufficiently small, we can guarantee that indeed $\veps^2E < \sigma_r^2(\tx)-\sigma_{r+1}^2(\tx) $. 
 \end{remark}

 \begin{remark}
 \label{rem:ngt1}
Returning to Remark \ref{rem:AboutEpsSq}, if the correction term in the Lipschitz estimate for graph Laplacian eigenvectors had been $\veps$, and not $\veps^2$, the term $\veps^2 E $ would have to be replaced with the term $(C_r\veps\tilde \delta + C_r \veps)^2$, but the latter cannot be guaranteed to be smaller than $\sigma^2_r(\tx) - \sigma^2_{r+1}(\tx)$.     
\end{remark}

\begin{remark}
\label{rem:SaddleEnergyGap}
The energy gap between $\my^*$ and the constructed $\my_\theta$ is, according to Corollary \ref{lem:normDifference}, $\veps^2 E$, whereas the energy gap between $\my^*$ and any other critical point of $\ell$ that is not a global optimizer is in the order of $\veps^2$, as it follows from Remark \ref{rem:OrderEigenvaluesAn}. Continuing the discussion from Remark \ref{rem:ngt1}, it was thus relevant to use estimates that could guarantee that, at least energetically, our constructed $\my_\theta$ was closer to $\my^*$ than any other saddle of $\ell$. 
\end{remark}

\begin{proof}[Proof of Theorem \ref{thm:eigenappr}]
Due to the definition of $\theta^*$, we have
    \begin{align}
        \|\mys\my^{*\top}-\tx\|_\rF^2
        \le\|\my_{\theta^*}\my^\top_{\theta^*}-\tx\|_\rF^2
        \le \|\my_\theta\my_\theta^\top-\tx\|_\rF^2.
    \end{align}
    Also,
    \begin{equation}\label{eigenvector SNN0}
        \begin{split}
            0&\ge 
            \|\my_{\theta^*}\my^\top_{\theta^*}-\tx\|_\rF^2
        - \|\my_\theta\my_\theta^\top-\tx\|_\rF^2\\
        &=\|(\my_{\theta^*}\my^\top_{\theta^*}-\mys\my^{*\top})+(\mys\my^{*\top}-\tx)\|_\rF^2
        - \|\my_\theta\my_\theta^\top-\tx\|_\rF^2\\
        &=\|\my_{\theta^*}\my^\top_{\theta^*}-\mys\my^{*\top}\|_\rF^2+\|\mys\my^{*\top}-\tx\|_\rF^2 +2\langle \my_{\theta^*}\my^\top_{\theta^*}-\mys\my^{*\top},\mys\my^{*\top}-\tx\rangle
        - \|\my_\theta\my_\theta^\top-\tx\|_\rF^2\\
        &=\|\my_{\theta^*}\my^\top_{\theta^*}-\mys\my^{*\top}\|_\rF^2+\|\mys\my^{*\top}-\tx\|_\rF^2 
        +2\langle \my_{\theta^*}\my^\top_{\theta^*},\mys\my^{*\top}-\tx\rangle
        - \|\my_\theta\my_\theta^\top-\tx\|_\rF^2 
        \end{split}
    \end{equation}
    where the third equality follows from the fact that $\langle \mys\my^{*\top},\mys\my^{*\top}-\tx\rangle=0$. 
    Notice that
    \begin{equation}\label{eq:eigenvector_SNN2}
        \begin{split}
            \|\my_{\theta^*}\my^\top_{\theta^*}-\mys\my^{*\top}\|_\rF^2
        +2\langle \my_{\theta^*}\my^\top_{\theta^*},\mys\my^{*\top}-\tx\rangle
        =\|\my_{\theta^*}\my^\top_{\theta^*}\|_\rF^2+\|\mys\my^{*\top}\|_\rF^2
        -2\langle \my_{\theta^*}\my^\top_{\theta^*},\tx\rangle
        \end{split}
    \end{equation}
    By combining \eqref{eigenvector SNN0}, Lemma \ref{lem:normDifference} and \eqref{eq:eigenvector_SNN2}, we have
    \begin{equation}\label{eq:error bound}
        \|\my_{\theta^*}\my^\top_{\theta^*}\|_\rF^2+\|\mys\my^{*\top}\|_\rF^2
        -2\langle \my_{\theta^*}\my^\top_{\theta^*},\tx\rangle\le \eps^2 E 
    \end{equation}

    From $(\E^1)^\top\E^2=\0$ and $\mathrm{Tr}(AB)=\mathrm{Tr}(BA)$, we have
    \begin{equation}\label{eq:inner product is 0}
        \begin{split}
            \langle \E^1(\E^1)^\top, \E^2(\E^2)^\top\rangle=0\\
            \langle \E^1(\E^2)^\top, \E^2(\E^2)^\top\rangle=0\\
            \langle \E^1(\E^2)^\top, \E^1(\E^1)^\top\rangle=0\\
            \langle \E^2(\E^1)^\top, \E^1(\E^1)^\top\rangle=0\\
            \langle \E^2(\E^1)^\top, \E^2(\E^2)^\top\rangle=0\\
        \end{split}
    \end{equation}
    
    Let $\mSigma^{1}$ be the diagonal matrix such that $(\mSigma^{1})_{ii}=\mSigma_{ii}$ for $i\le r$, and $(\mSigma^{1})_{ii}=0$ for $i> r$; let $\mSigma^{2}$ be the diagonal matrix such that $(\mSigma^{2})_{ii}=0$ for $i\le r$, and $(\mSigma^{2})_{ii}=\mSigma_{ii}$ for $i> r$. By plugging the decomposition of $\my_{\theta^*}$ in \eqref{eq:error bound}, we deduce
    \begin{equation}\label{eq:first inequality}
        \begin{split}
           \eps^2 E\ge 
            &\|\my_{\theta^*}\my^\top_{\theta^*}\|_\rF^2+\|\mys\my^{*\top}\|_\rF^2
        -2\langle \my_{\theta^*}\my^\top_{\theta^*},\tx\rangle\\
        =&\|\overline{\mU}(\E^1+\E^2)(\E^1+\E^2)^\top\overline{\mU}^\top\|_\rF^2+\|\mys\my^{*\top}\|_\rF^2
        -2\langle \overline{\mU}(\E^1+\E^2)(\E^1+\E^2)^\top\overline{\mU}^\top,\tx\rangle\\
        =&\|(\E^1+\E^2)(\E^1+\E^2)^\top\|_\rF^2+\|\mys\my^{*\top}\|_\rF^2
        -2\langle (\E^1+\E^2)(\E^1+\E^2)^\top,\mSigma\rangle\\
        \stackrel{\text{\eqref{eq:inner product is 0}}}{=}&\|\E^1(\E^1)^\top\|_\rF^2+\|\E^2(\E^2)^\top\|_\rF^2 + 2 \|\E^2(\E^1)^\top\|_\rF^2 + 2\langle (\E^1)^\top\E^1, (\E^2)^\top\E^2\rangle\\
        &+\|\mys\my^{*\top}\|_\rF^2 -2\langle (\E^1+\E^2)(\E^1+\E^2)^\top,\mSigma\rangle\\
        \stackrel{(\E^1)^\top\mSigma\E^2=\0}{=}&\|\E^1(\E^1)^\top\|_\rF^2+\|\E^2(\E^2)^\top\|_\rF^2 + 2 \|\E^2(\E^1)^\top\|_\rF^2+ 2\langle (\E^1)^\top\E^1, (\E^2)^\top\E^2\rangle\\
        &+\|\mys\my^{*\top}\|_\rF^2 - 2\langle \E^1(\E^1)^\top+\E^2(\E^2)^\top,\mSigma\rangle\\
        =& \|\E^1(\E^1)^\top\|_\rF^2+\|\E^2(\E^2)^\top\|_\rF^2 + 2 \|\E^2(\E^1)^\top\|_\rF^2+ 2\langle (\E^1)^\top\E^1, (\E^2)^\top\E^2\rangle\\
        &+\|\mSigma^1\|_\rF^2 - 2\langle \E^1(\E^1)^\top,\mSigma^1\rangle - 2\langle \E^2(\E^2)^\top,\mSigma^2\rangle \\
        =&\|\E^1(\E^1)^\top-\mSigma^1\|_\rF^2 +\|\E^2(\E^2)^\top\|_\rF^2 + 2 \|\E^2(\E^1)^\top\|_\rF^2+ 2\langle (\E^1)^\top\E^1, (\E^2)^\top\E^2\rangle\\
        &- 2\langle \E^2(\E^2)^\top,\mSigma^2\rangle \\
        \ge& \|\E^1(\E^1)^\top-\mSigma^1\|_\rF^2 +\|\E^2(\E^2)^\top\|_\rF^2 + 2 \|\E^2(\E^1)^\top\|_\rF^2+ 2\langle (\E^1)^\top\E^1, (\E^2)^\top\E^2\rangle\\
        &- 2\|\E^2(\E^2)^\top\|_\rF \cdot \sigma_{r+1}(\tx)\\
        \ge & \|\E^1(\E^1)^\top-\mSigma^1\|_\rF^2 +\|\E^2(\E^2)^\top\|_\rF^2 + (2 \|\E^2\|_\rF^2+2\|\E^2(\E^2)^\top\|_\rF) \cdot \sigma^2_r(\E^1) \\
        &- 2\|\E^2(\E^2)^\top\|_\rF \cdot \sigma_{r+1}(\tx).
        \end{split}
    \end{equation}
    On the other hand, we have
    \begin{equation}\label{eq:my_theta^*my_theta^*-mysmys}
        \begin{split}
            \|\my_{\theta^*}\my_{\theta^*}^\top-\mys\my^{*\top}\|_\rF^2
            =& \|\my_{\theta^*}\my^\top_{\theta^*}\|_\rF^2+\|\mys\my^{*\top}\|_\rF^2
        -2\langle \my_{\theta^*}\my^\top_{\theta^*},\tx\rangle+2\langle \my_{\theta^*}\my^\top_{\theta^*},\tx-\mys\my^{*\top}\rangle\\
        = & \|\my_{\theta^*}\my^\top_{\theta^*}\|_\rF^2+\|\mys\my^{*\top}\|_\rF^2
        -2\langle \my_{\theta^*}\my^\top_{\theta^*},\tx\rangle+2\langle \E^2(\E^2)^\top,\mSigma^2\rangle\\
        \le & \eps^2 E+ 2\|\E^2(\E^2)^\top\|_\rF \cdot \sigma_{r+1}(\tx).
        \end{split}
    \end{equation}

    It remains to show that  $\|\E^2(\E^2)^\top\|_\rF$ can be controlled by a term of the form $C\veps^2 E$.  We split the following discussion into two cases. First, we assume that $\sigma^2_r(\E^1)$ is large compared with $\sigma_{r+1}(\tx)$. In this first case  $\|\E^2(\E^2)^\top\|_\rF$ can be guaranteed to be small according to \eqref{eq:first inequality}. Second, when $\sigma^2_r(\E^1)$ is small, we'll show that $\|\E^1(\E^1)^\top-\mSigma^1\|_\rF^2$ is large, which will contradict \eqref{eq:first inequality}.
    
    \textbf{Case 1: If $\sigma^2_r(\E^1)\ge\frac{2}{3}\sigma_{r+1}(\tx)$}.

We have $3 \|\E^2\|_\rF^2 \cdot \sigma^2_r(\E^1) - 2\|\E^2(\E^2)^\top\|_\rF \cdot \sigma_{r+1}(\tx)\ge 0$. Then, from \eqref{eq:first inequality} and the fact that $\|AB\|_\rF\le \|A\|_\rF \cdot \|B\|_\rF$, we have
    \begin{align}
        \|\E^1(\E^1)^\top-\mSigma^1\|_\rF^2 +\|\E^2(\E^2)^\top\|_\rF^2 + 2\|\E^2(\E^2)^\top\|_\rF \cdot \sigma_{r+1}(\tx)\le\eps^2 E.
    \end{align}
    This immediately implies
    \begin{align}\label{eq:E2 bound 1}
        \|\E^2(\E^2)^\top\|_\rF\le\frac{\eps^2 E}{\sigma_{r+1}(\tx)}.
    \end{align}
    Combining \eqref{eq:E2 bound 1} and \eqref{eq:my_theta^*my_theta^*-mysmys}, we obtain
    \begin{align}
        \|\my_{\theta^*}\my_{\theta^*}^\top-\mys\my^{*\top}\|_\rF^2\le \eps^2 E+ \|\E^2(\E^2)^\top\|_\rF \cdot \sigma_{r+1}(\tx) \le 2\eps^2 E.
    \end{align}

    \textbf{Case 2: If $0\le\sigma^2_r(\E^1)<\frac{2}{3}\sigma_{r+1}(\tx)$.}
    
    Invoking \eqref{eq:first inequality}, we have
    \begin{equation}\label{eq:fourth bound}
        \begin{split}
            \eps^2 E
            &\ge \|\E^1(\E^1)^\top-\mSigma^1\|_\rF^2 +\|\E^2(\E^2)^\top\|_\rF^2 + (2 \|\E^2\|_\rF^2+2 \|\E^2(\E^2)^\top\|_\rF) \cdot \sigma^2_r(\E^1) - 2\|\E^2(\E^2)^\top\|_\rF \cdot \sigma_{r+1}(\tx)\\
            &\ge (\sigma_r^2(\E^1)-\sigma_r(\tx))^2+\|\E^2(\E^2)^\top\|_\rF^2 + 4 \|\E^2(\E^2)^\top\|_\rF \cdot \sigma^2_r(\E^1) - 2\|\E^2(\E^2)^\top\|_\rF \cdot \sigma_{r+1}(\tx)\\
            &= (\sigma_r^2(\E^1)-\sigma_r(\tx))^2+\|\E^2(\E^2)^\top\|_\rF^2  - 2\|\E^2(\E^2)^\top\|_\rF \cdot (\sigma_{r+1}(\tx)-2\sigma^2_r(\E^1))\\
            &=(\sigma_r^2(\E^1)-\sigma_r(\tx))^2+\left(\|\E^2(\E^2)^\top\|_\rF-(\sigma_{r+1}(\tx)-2\sigma_r^2(\E^1))\right)^2-(\sigma_{r+1}(\tx)-2\sigma_r^2(\E^1))^2\\
            &\ge  (\sigma_r^2(\E^1)-\sigma_r(\tx))^2-(\sigma_{r+1}(\tx)-2\sigma_r^2(\E^1))^2,
        \end{split}
    \end{equation}
    where the second inequality follows from Weyl's inequality \cite{stewart1998matrix}.
    
    It is straightforward to check that $(\sigma_r^2(\E^1)-\sigma_r(\tx))^2-(\sigma_{r+1}(\tx)-2\sigma_r^2(\E^1))^2$ is an increasing function with respect to $\sigma_r^2(\E^1)$ in the range $0\le\sigma^2_r(\E^1)<\frac{2}{3}\sigma_{r+1}(\tx)$. The smallest value of $(\sigma_r^2(\E^1)-\sigma_r(\tx))^2-(\sigma_{r+1}(\tx)-2\sigma_r^2(\E^1))^2$ in this range is thus $\sigma_r^2(\tx)-\sigma_{r+1}^2(\tx)$. However, the resulting inequality contradicts Assumption \ref{assum:eigenappr}. Case 2 is thus void. 

    ~\\
    
    By combining the aforementioned two cases, we conclude
    \begin{align}
\|\my_{\theta^*}\my_{\theta^*}^\top-\mys\my^{*\top}\|_\rF^2 \le 2E\eps^2.
    \end{align}
 By using \eqref{App:eq:d([Y1,Y2])}, we have 
    \begin{align}
        d^2([\my_{\theta^*}],[\mys])\le\frac{1}{2(\sqrt{2}-1)\sigma_r^2(\mys)} \|\my_{\theta^*}\my_{\theta^*}^\top-\mys\my^{*\top}\|_\rF^2\le \frac{\eps^2 E}{(\sqrt{2}-1)\sigma_r^2(\mys)},
    \end{align}
    where $d([\my_{\theta^*}],[\mys])=\min_{\mO\in\mathbb{O}_r} \lVert \my_{\theta^*} - \my^* \mO \rVert_\rF$. This completes the proof.
\end{proof}


\section{Ambient Optimization}\label{appen:ambient problem}

This section contains the proof of the results from Section \ref{sec:Riemannian Optimization}.

\subsection{Setup from Main Text}

Let us recall the quotient manifold that we are interested in. Let $\overline{\NM}^n_{r+}$ be the space of $n \times r$ matrices with full column rank. 
To define the quotient manifold, we encode the invariance mapping, i.e., $\my \to \my\mO$, by defining the equivalence classes $[\my]=\{\my\mO:\mO\in\mathbb{O}_r\}$. Since the invariance mapping is performed via the Lie group $\mathbb{O}_r$ smoothly, freely and properly, we have $\NM_{r_+}^n\defeq \overline{\NM}_{r_+}^n / \mathbb{O}_r$ is a quotient manifold of $\overline{\NM}_{r+}^n$ \cite{lee2018introduction}.
Moreover, we equip the tangent space $T_{\mathbf{Y}} \overline{\NM}_{r_+}^n =\mathbb{R}^{n \times r}$ with the metric $\bar{g}_{\mathbf{Y}}\left(\eta_{\mathbf{Y}}, \theta_{\mathbf{Y}}\right)=\operatorname{tr}\left(\eta_{\mathbf{Y}}^{\top} \theta_{\mathbf{Y}}\right)$.

For convenience, we recall the following.
\begin{equation}\label{eq-def:gradHessian}
    \begin{split}
        \overline{\operatorname{grad} H([\mathbf{Y}])}&=2\left(\mathbf{Y} \mathbf{Y}^{\top}-\tx\right) \mathbf{Y},\\
\overline{\operatorname{Hess} H([\mathbf{Y}])}\left[\theta_{\mathbf{Y}}, \theta_{\mathbf{Y}}\right]&=\left\|\mathbf{Y} \theta_{\mathbf{Y}}^{\top}+\theta_{\mathbf{Y}} \mathbf{Y}^{\top}\right\|_{\mathrm{F}}^2+2\left\langle\mathbf{Y} \mathbf{Y}^{\top}-\tx, \theta_{\mathbf{Y}} \theta_{\mathbf{Y}}^{\top}\right\rangle
\end{split}
\end{equation}


\begin{equation}\label{eq-def:R}
    \begin{split}
&\mathcal{R}_1\defeq \left\{\mathbf{Y} \in \mathbb{R}_*^{n \times r} \middle| d\left([\mathbf{Y}],\left[\mathbf{Y}^*\right]\right) \leqslant \mu \sigma_r\left(\mathbf{Y}^*\right) / \kappa^*\right\}, \\
&\mathcal{R}_2\defeq \left\{\mathbf{Y} \in \mathbb{R}_*^{n \times r} \middle| \begin{array}{l}
d\left([\mathbf{Y}],\left[\mathbf{Y}^*\right]\right)>\mu \sigma_r\left(\mathbf{Y}^*\right) / \kappa^*,\|\overline{\operatorname{grad} H([\mathbf{Y}])}\|_{\mathrm{F}} \leqslant \alpha \mu \sigma_r^3\left(\mathbf{Y}^*\right) /\left(4 \kappa^*\right) \\
\|\mathbf{Y}\| \leqslant \beta\left\|\mathbf{Y}^*\right\|,\left\|\my\mathbf{Y}^{\top}\right\|_{\mathrm{F}} \leqslant \gamma\left\|\mathbf{Y}^* \mathbf{Y}^{* \top}\right\|_{\mathrm{F}}
\end{array}\right\}, \\
&\mathcal{R}_3^{\prime}\defeq \left\{\mathbf{Y} \in \mathbb{R}_*^{n \times r} \middle| \|\overline{\operatorname{grad} H([\mathbf{Y}])}\|_{\mathrm{F}}>\alpha \mu \sigma_r^3\left(\mathbf{Y}^*\right) /\left(4 \kappa^*\right),\|\mathbf{Y}\| \leqslant \beta\left\|\mathbf{Y}^*\right\|,\left\|\my\mathbf{Y}^{\top}\right\|_{\mathrm{F}} \leqslant \gamma\left\|\mathbf{Y}^* \mathbf{Y}^{* \top}\right\|_{\mathrm{F}}\right\}, \\
&\mathcal{R}_3^{\prime \prime}\defeq \left\{\mathbf{Y} \in \mathbb{R}_*^{n \times r} \middle| \| \mathbf{Y}\|>\beta\| \mathbf{Y}^*\|,\| \mathbf{Y} \mathbf{Y}^{\top}\left\|_{\mathrm{F}} \leqslant \gamma\right\| \mathbf{Y}^* \mathbf{Y}^{* \top} \|_{\mathrm{F}}\right\}, \\
&\mathcal{R}_3^{\prime \prime \prime}\defeq \left\{\mathbf{Y} \in \mathbb{R}_*^{n \times r} \middle| \|\mathbf{Y} \mathbf{Y}^{\top}\right\|_{\mathrm{F}}>\gamma\left\|\mathbf{Y}^* \mathbf{Y}^{* \top}|_{\mathrm{F}}\right\},
\end{split}
\end{equation}

\subsection{Some auxiliary inequalities}\label{Append:Auxiliary Lemma:Inequality}

In this section, we collect results from prior work that will be useful for us. First, we provide the characterization of and results about the geodesic distance on $\NM_{r+}^n$ from \cite{Massart2020Quotient} and \cite{luo2022nonconvex}. 

\begin{lemma}[Lemma 2, \cite{luo2022nonconvex}]\label{prop: def for d}
Let $\mathbf{Y}_1, \mathbf{Y}_2 \in \mathbb{R}_*^{n \times r}$, and $\mathbf{Q}_U \boldsymbol{\Sigma} \mathbf{Q}_V^{\top}$ be the SVD of $\mathbf{Y}_1^{\top} \mathbf{Y}_2$. Denote $\mathbf{Q}^*=\mathbf{Q}_V \mathbf{Q}_U^{\top}$. Then
\begin{enumerate}
    \item  $\mathbf{Y}_2 \mathbf{Q}^*-\mathbf{Y}_1 \in \mathcal{H}_{\mathbf{Y}_1} \overline{\NM}_{r+}^n, \mathbf{Q}^*$ is one of the best orthogonal matrices aligning $\mathbf{Y}_1$ and $\mathbf{Y}_2$, i.e., $\mathbf{Q}^* \in \arg \min _{\mathbf{Q} \in \mathbb{O}_r}\left\|\mathbf{Y}_2 \mathbf{Q}-\mathbf{Y}_1\right\|_{\mathrm{F}}$ and the geodesic distance between $\left[\mathbf{Y}_1\right]$ and $\left[\mathbf{Y}_2\right]$ is $d\left(\left[\mathbf{Y}_1\right],\left[\mathbf{Y}_2\right]\right)=\left\|\mathbf{Y}_2 \mathbf{Q}^*-\mathbf{Y}_1\right\|_{\mathrm{F}}$;
    \item if $\mathbf{Y}_1^{\top} \mathbf{Y}_2$ is nonsingular, then $\mathbf{Q}^*$ is unique and the Riemannian logarithm $\log _{\left[\mathbf{Y}_1\right]}\left[\mathbf{Y}_2\right]$ is uniquely defined and its horizontal lift at $\mathbf{Y}_1$ is given by $\overline{\log _{\left[\mathbf{Y}_1\right]}\left[\mathbf{Y}_2\right]}=\mathbf{Y}_2 \mathbf{Q}^*-\mathbf{Y}_1$; moreover, the unique minimizing geodesic from $\left[\mathbf{Y}_1\right]$ to $\left[\mathbf{Y}_2\right]$ is $\left[\mathbf{Y}_1+t\left(\mathbf{Y}_2 \mathbf{Q}^*-\mathbf{Y}_1\right)\right]$ for $t \in[0,1]$. 
\end{enumerate}
\end{lemma}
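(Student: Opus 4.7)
The plan is to handle both assertions in parallel by first making the horizontal/vertical decomposition of $T_{\mathbf{Y}_1}\overline{\NM}_{r+}^n$ explicit, then solving an orthogonal Procrustes problem, and finally exploiting the flatness of the total space to read off the geodesic distance and, under the nonsingularity hypothesis, the Riemannian logarithm.

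First, I would identify the horizontal space at $\mathbf{Y}_1$. Because the action $\mathbf{Y}\mapsto\mathbf{Y}\mathbf{O}$ is by isometries of the ambient Frobenius metric, the vertical space at $\mathbf{Y}_1$ is $\mathcal{V}_{\mathbf{Y}_1}=\{\mathbf{Y}_1\boldsymbol{\Omega}:\boldsymbol{\Omega}^{\top}=-\boldsymbol{\Omega}\}$, and its $\bar{g}$-orthogonal complement equals $\mathcal{H}_{\mathbf{Y}_1}\overline{\NM}_{r+}^n=\{\mathbf{Z}\in\mathbb{R}^{n\times r}:\mathbf{Y}_1^{\top}\mathbf{Z}\text{ is symmetric}\}$. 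To check that $\mathbf{Y}_2\mathbf{Q}^*-\mathbf{Y}_1$ lies in $\mathcal{H}_{\mathbf{Y}_1}\overline{\NM}_{r+}^n$, I would compute $\mathbf{Y}_1^{\top}\mathbf{Y}_2\mathbf{Q}^{*}=\mathbf{Q}_U\boldsymbol{\Sigma}\mathbf{Q}_V^{\top}\mathbf{Q}_V\mathbf{Q}_U^{\top}=\mathbf{Q}_U\boldsymbol{\Sigma}\mathbf{Q}_U^{\top}$, which is symmetric; since $\mathbf{Y}_1^{\top}\mathbf{Y}_1$ is symmetric as well, so is their difference.

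Next I would address the Procrustes minimization. Expanding $\|\mathbf{Y}_2\mathbf{Q}-\mathbf{Y}_1\|_F^2=\|\mathbf{Y}_1\|_F^2+\|\mathbf{Y}_2\|_F^2-2\operatorname{tr}(\mathbf{Y}_1^{\top}\mathbf{Y}_2\mathbf{Q})$ reduces the problem to maximizing $\operatorname{tr}(\mathbf{Q}_U\boldsymbol{\Sigma}\mathbf{Q}_V^{\top}\mathbf{Q})=\operatorname{tr}(\boldsymbol{\Sigma}\mathbf{R})$, where $\mathbf{R}:=\mathbf{Q}_V^{\top}\mathbf{Q}\mathbf{Q}_U$ ranges over $\mathbb{O}_r$ as $\mathbf{Q}$ does. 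Since $|R_{ii}|\le 1$ and $\sigma_i\ge 0$, the trace is maximized at $\mathbf{R}=\mathbf{I}$, that is, at $\mathbf{Q}=\mathbf{Q}_V\mathbf{Q}_U^{\top}=\mathbf{Q}^*$. When all $\sigma_i$ are strictly positive, the equality $\sum_i \sigma_i R_{ii}=\sum_i \sigma_i$ forces $R_{ii}=1$ for every $i$, and orthogonality of $\mathbf{R}$ then yields $\mathbf{R}=\mathbf{I}$, giving the uniqueness of $\mathbf{Q}^*$ claimed in part (2). I would also note that $\mathbf{Q}^*$ can be recognized as the orthogonal polar factor of $(\mathbf{Y}_1^{\top}\mathbf{Y}_2)^{\top}$, so it does not depend on the particular SVD used even when $\boldsymbol{\Sigma}$ has repeated entries.

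Finally, I would invoke that $\pi:\overline{\NM}_{r+}^n\to\NM_{r+}^n$ is a Riemannian submersion whose total space is an open subset of Euclidean space and therefore flat. For any smooth curve $\gamma$ in the quotient joining $[\mathbf{Y}_1]$ to $[\mathbf{Y}_2]$, its horizontal lift based at $\mathbf{Y}_1$ has the same length as $\gamma$ and terminates at some $\mathbf{Y}_2\mathbf{O}$ with $\mathbf{O}\in\mathbb{O}_r$; flatness of the total space then gives $\mathrm{length}(\gamma)\ge \|\mathbf{Y}_2\mathbf{O}-\mathbf{Y}_1\|_F\ge \min_{\mathbf{Q}\in\mathbb{O}_r}\|\mathbf{Y}_2\mathbf{Q}-\mathbf{Y}_1\|_F$. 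This infimum is realized by the straight segment $t\mapsto \mathbf{Y}_1+t(\mathbf{Y}_2\mathbf{Q}^*-\mathbf{Y}_1)$, whose initial velocity is horizontal by the first step, so its $\pi$-projection is a length-minimizing geodesic in the quotient and yields the distance formula. Under the nonsingularity assumption of part (2), uniqueness of $\mathbf{Q}^*$ translates into uniqueness of both $\log_{[\mathbf{Y}_1]}[\mathbf{Y}_2]$ and of the minimizing geodesic. The step requiring the most care is verifying that the straight segment remains in the full-rank open stratum $\overline{\NM}_{r+}^n$ rather than hitting the rank-drop locus; I would address this by observing that nonsingularity of $\mathbf{Y}_1^{\top}\mathbf{Y}_2$ places both endpoints strictly inside this stratum and then running a continuity/perturbation argument along the segment to rule out rank loss.
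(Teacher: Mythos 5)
The paper does not actually prove this statement; it is imported verbatim from \cite{luo2022nonconvex} (Lemma~2 there) in the appendix subsection that explicitly ``collect[s] results from prior work.'' So there is no in-paper proof to compare against, and your sketch should be judged on its own.

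Your outline follows the standard route and is essentially sound: the vertical/horizontal decomposition, the Procrustes trace argument, the Riemannian-submersion length comparison, and the observation that a straight line in the flat total space with horizontal initial velocity projects to a geodesic. The one place where your reasoning is not yet a proof is the final step, where you must show the segment $\mathbf{Y}(t) = (1-t)\mathbf{Y}_1 + t\,\mathbf{Y}_2\mathbf{Q}^*$ stays inside the full-rank stratum. A ``continuity/perturbation argument'' from full-rank endpoints does not rule out rank drop in the interior: a line segment between two full-rank matrices can perfectly well pass through a singular one. The correct argument is structural, and it does not even require nonsingularity of $\mathbf{Y}_1^\top\mathbf{Y}_2$: compute
$\mathbf{Y}_1^\top\mathbf{Y}(t) = (1-t)\,\mathbf{Y}_1^\top\mathbf{Y}_1 + t\,\mathbf{Q}_U\boldsymbol{\Sigma}\mathbf{Q}_U^\top$, a convex combination of a positive definite matrix and a positive semidefinite one, hence positive definite for $t\in[0,1)$; so $\mathbf{Y}(t)v=0$ would force $\mathbf{Y}_1^\top\mathbf{Y}(t)v=0$ and thus $v=0$, giving full column rank, and $\mathbf{Y}(1)=\mathbf{Y}_2\mathbf{Q}^*$ is full rank by hypothesis. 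With this replacement your proof is complete (and it also makes clear that part~(1) needs no nonsingularity hypothesis, while the nonsingularity in part~(2) is used solely for the uniqueness of $\mathbf{Q}^*$, the logarithm, and the minimizing geodesic).
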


\begin{lemma}[Lemma 12 in \cite{luo2022nonconvex}]\label{lem:d([Y1,Y2])}
For any $\mathbf{Y}_1, \mathbf{Y}_2 \in \mathbb{R}_*^{n \times r}$, we have
\begin{align}\label{App:eq:d([Y1,Y2])}
    d^2\left(\left[\mathbf{Y}_1\right],\left[\mathbf{Y}_2\right]\right) \leqslant \frac{1}{2(\sqrt{2}-1) \sigma_r^2\left(\mathbf{Y}_2\right)}\left\|\mathbf{Y}_1 \mathbf{Y}_1^{\top}-\mathbf{Y}_2 \mathbf{Y}_2^{\top}\right\|_{\mathrm{F}}^2
\end{align}
and
\begin{align}\label{app:eq:(Y1-Y2Q)}
\left\|\left(\mathbf{Y}_1-\mathbf{Y}_2 \mathbf{Q}\right)\left(\mathbf{Y}_1-\mathbf{Y}_2 \mathbf{Q}\right)^{\top}\right\|_{\mathrm{F}}^2 \leqslant 2\left\|\mathbf{Y}_1 \mathbf{Y}_1^{\top}-\mathbf{Y}_2 \mathbf{Y}_2^{\top}\right\|_{\mathrm{F}}^2 \text {, }
\end{align}
where $\mathbf{Q}=\arg\min_{\mathbf{O} \in \mathbb{O}_r}\left\|\mathbf{Y}_1-\mathbf{Y}_2 \mathbf{O}\right\|_{\mathrm{F}}$.

In addition, for any $\mathbf{Y}_1, \mathbf{Y}_2 \in \mathbb{R}_*^{n \times r}$ obeying $d\left(\left[\mathbf{Y}_1\right],\left[\mathbf{Y}_2\right]\right) \leqslant \frac{1}{3} \sigma_r\left(\mathbf{Y}_2\right)$, we have
\begin{align}\label{app:eq:||Y1Y1-Y2Y2||}
\left\|\mathbf{Y}_1 \mathbf{Y}_1^{\top}-\mathbf{Y}_2 \mathbf{Y}_2^{\top}\right\|_{\mathrm{F}} \leqslant \frac{7}{3}\left\|\mathbf{Y}_2\right\| d\left(\left[\mathbf{Y}_1\right],\left[\mathbf{Y}_2\right]\right)
\end{align}

\end{lemma}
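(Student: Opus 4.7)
The plan is to exploit the characterization in \cref{prop: def for d}. Denote by $\mathbf{Q}^* \in \arg\min_{\mathbf{Q}\in\mathbb{O}_r}\|\mathbf{Y}_1-\mathbf{Y}_2\mathbf{Q}\|_{\mathrm{F}}$ the optimal aligner and set $\Delta := \mathbf{Y}_2\mathbf{Q}^*-\mathbf{Y}_1$, so that $d([\mathbf{Y}_1],[\mathbf{Y}_2])=\|\Delta\|_{\mathrm{F}}$ and, by the horizontal-space condition, $\mathbf{Y}_1^\top\Delta$ is symmetric. Since $\mathbf{Q}^*(\mathbf{Q}^*)^\top = I_r$, the identity $\mathbf{Y}_2\mathbf{Y}_2^\top=(\mathbf{Y}_1+\Delta)(\mathbf{Y}_1+\Delta)^\top$ yields the master expression
\[
\mathbf{Y}_1\mathbf{Y}_1^\top-\mathbf{Y}_2\mathbf{Y}_2^\top = -\mathbf{Y}_1\Delta^\top-\Delta\mathbf{Y}_1^\top-\Delta\Delta^\top.
\]
All three claims then follow by manipulating this expression and crucially using that $\mathbf{Y}_1^\top\Delta$ is symmetric.

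The upper bound \eqref{app:eq:||Y1Y1-Y2Y2||} is the most straightforward. I would apply the triangle inequality to the master expression to get $\|\mathbf{Y}_1\mathbf{Y}_1^\top-\mathbf{Y}_2\mathbf{Y}_2^\top\|_{\mathrm{F}} \leq 2\|\mathbf{Y}_1\|\|\Delta\|_{\mathrm{F}}+\|\Delta\|_{\mathrm{F}}^2$, then use the Weyl bound $\|\mathbf{Y}_1\|\le\|\mathbf{Y}_2\|+\|\Delta\|$ together with the hypothesis $\|\Delta\|_{\mathrm{F}}\le \sigma_r(\mathbf{Y}_2)/3$ to control both summands by multiples of $\|\mathbf{Y}_2\|\|\Delta\|_{\mathrm{F}}$, producing the constant $7/3$.

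For \eqref{app:eq:(Y1-Y2Q)}, I would expand $\|\mathbf{Y}_1\mathbf{Y}_1^\top-\mathbf{Y}_2\mathbf{Y}_2^\top\|_{\mathrm{F}}^2$ via the master expression and collect cross terms. The symmetry of $\mathbf{Y}_1^\top\Delta$ provides the identity $\langle \mathbf{Y}_1\Delta^\top,\Delta\mathbf{Y}_1^\top\rangle = \mathrm{tr}((\mathbf{Y}_1^\top\Delta)^2) = \|\mathbf{Y}_1^\top\Delta\|_{\mathrm{F}}^2\ge 0$, which, combined with analogous control of the cross terms involving $\Delta\Delta^\top$, isolates $\|\Delta\Delta^\top\|_{\mathrm{F}}^2$ from below by $\tfrac12\|\mathbf{Y}_1\mathbf{Y}_1^\top-\mathbf{Y}_2\mathbf{Y}_2^\top\|_{\mathrm{F}}^2$, i.e.\ the factor $2$.

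The main challenge is \eqref{App:eq:d([Y1,Y2])}. I would first establish the clean identity $\|\mathbf{Y}_1\Delta^\top+\Delta\mathbf{Y}_1^\top\|_{\mathrm{F}}^2 = 2\|\mathbf{Y}_1\Delta^\top\|_{\mathrm{F}}^2 + 2\|\mathbf{Y}_1^\top\Delta\|_{\mathrm{F}}^2$, using the symmetry of $\mathbf{Y}_1^\top\Delta$, and then the trace bound $\|\mathbf{Y}_1\Delta^\top\|_{\mathrm{F}}^2 = \mathrm{tr}(\Delta\mathbf{Y}_1^\top\mathbf{Y}_1\Delta^\top)\ge\sigma_r^2(\mathbf{Y}_1)\|\Delta\|_{\mathrm{F}}^2$, giving $\|\mathbf{Y}_1\Delta^\top+\Delta\mathbf{Y}_1^\top\|_{\mathrm{F}}^2\ge 2\sigma_r^2(\mathbf{Y}_1)\|\Delta\|_{\mathrm{F}}^2$. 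Expanding the squared Frobenius norm of the master expression and handling the cross term $4\mathrm{tr}(\mathbf{Y}_1^\top\Delta\cdot\Delta^\top\Delta)$ by a Young-type inequality with a free parameter $\eta\in(0,1)$, then optimizing $\eta$ to balance the quadratic term $2\sigma_r^2(\mathbf{Y}_1)\|\Delta\|_{\mathrm{F}}^2$ against the quartic term $\|\Delta\Delta^\top\|_{\mathrm{F}}^2$, is expected to yield the sharp coefficient $2(\sqrt{2}-1)$. A final Weyl step converts $\sigma_r(\mathbf{Y}_1)$ into $\sigma_r(\mathbf{Y}_2)$. The main obstacle is precisely recovering this sharp constant: the $\sqrt{2}-1$ signals that the extremal Young parameter is the root of a quadratic involving $\sqrt{2}$, and one must verify that saturation is compatible with the horizontal-space constraint (a useful reformulation here is the decomposition $\Delta = \mathbf{Y}_1 M + N$ with $\mathbf{Y}_1^\top N=0$ and $\mathbf{Y}_1^\top\mathbf{Y}_1 M$ symmetric, reducing the optimization to scalar form).
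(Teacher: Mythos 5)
The paper does not prove this lemma; it is quoted verbatim as Lemma~12 of the cited reference \cite{luo2022nonconvex}, so there is no in-paper argument to compare against. Judged on its own terms, your plan has one concrete arithmetic error and one more fundamental structural gap.

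\textbf{The $7/3$ bound.} Your master identity centers the expansion at $\mathbf{Y}_1$: $\mathbf{Y}_1\mathbf{Y}_1^\top-\mathbf{Y}_2\mathbf{Y}_2^\top = -\mathbf{Y}_1\Delta^\top-\Delta\mathbf{Y}_1^\top-\Delta\Delta^\top$, and then you use Weyl to pass from $\|\mathbf{Y}_1\|$ to $\|\mathbf{Y}_2\|$. Carrying this out with $\|\Delta\|_\rF\le\sigma_r(\mathbf{Y}_2)/3$ gives $\|\mathbf{Y}_1\|\le\tfrac43\|\mathbf{Y}_2\|$ and hence a bound of $3\|\mathbf{Y}_2\|\,\|\Delta\|_\rF$, not $\tfrac73\|\mathbf{Y}_2\|\,\|\Delta\|_\rF$. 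To get the stated constant you should expand around $\mathbf{Y}_2\mathbf{Q}^*$ instead: $\mathbf{Y}_1\mathbf{Y}_1^\top-\mathbf{Y}_2\mathbf{Y}_2^\top=-\mathbf{Y}_2\mathbf{Q}^*\Delta^\top-\Delta\mathbf{Q}^{*\top}\mathbf{Y}_2^\top+\Delta\Delta^\top$, which immediately gives $\le 2\|\mathbf{Y}_2\|\|\Delta\|_\rF+\|\Delta\|\,\|\Delta\|_\rF\le\tfrac73\|\mathbf{Y}_2\|\,\|\Delta\|_\rF$ with no detour through $\|\mathbf{Y}_1\|$.

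\textbf{The structural gap in \eqref{App:eq:d([Y1,Y2])} and \eqref{app:eq:(Y1-Y2Q)}.} You rely solely on the horizontal-space consequence that $\mathbf{Y}_1^\top\Delta$ is symmetric, but that is a first-order stationarity condition for the Procrustes problem, not a sufficient one. The proofs actually require the stronger Procrustes optimality property $\mathbf{Y}_1^\top\mathbf{Y}_2\mathbf{Q}^*\succeq 0$ (which your cited \cref{prop: def for d} gives you via the SVD construction), equivalently $\mathbf{Y}_1^\top\Delta\succeq -\mathbf{Y}_1^\top\mathbf{Y}_1$, i.e.\ in your decomposition $\Delta=\mathbf{Y}_1 M+N$ the eigenvalues of $M$ are bounded below by $-1$. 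Without this, the cross term $4\operatorname{tr}\bigl(\mathbf{Y}_1^\top\Delta\cdot\Delta^\top\Delta\bigr)$ has no sign control: a symmetric-but-not-Procrustes alignment (for instance $\mathbf{Y}_2=\mathbf{Y}_1$, $\mathbf{Q}=-\mathbf{I}$, so $\Delta=-2\mathbf{Y}_1$, $\mathbf{Y}_1^\top\Delta$ symmetric) makes $\mathbf{Y}_1\mathbf{Y}_1^\top-\mathbf{Y}_2\mathbf{Y}_2^\top=0$ while $\|\Delta\Delta^\top\|_\rF>0$, so both inequalities fail. Your ``Young-type inequality with a free parameter'' cannot rescue this, because it would have to absorb $-2\eta\|\mathbf{Y}_1^\top\Delta\|_\rF^2-\tfrac{2}{\eta}\|\Delta\Delta^\top\|_\rF^2$, and the quadratic identity $\|\mathbf{Y}_1\Delta^\top+\Delta\mathbf{Y}_1^\top\|_\rF^2=2\|\mathbf{Y}_1\Delta^\top\|_\rF^2+2\|\mathbf{Y}_1^\top\Delta\|_\rF^2$ supplies only $2\|\mathbf{Y}_1^\top\Delta\|_\rF^2$, forcing $\eta\le 1$ and hence a negative coefficient on $\|\Delta\Delta^\top\|_\rF^2$. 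The standard argument instead diagonalizes via the SVD alignment so that $\mathbf{Y}_2^\top\mathbf{Y}_1$ becomes diagonal PSD, then exploits the eigenvalue lower bound on $M$; that is where $\sqrt{2}-1$ really comes from, and it cannot be recovered by symmetry alone.
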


Given any $\mathbf{Y} \in \mathbb{R}_*^{n \times r}$ and $x>0$, let $B_x([\mathbf{Y}]):=\left\{\left[\mathbf{Y}_1\right]: d\left(\left[\mathbf{Y}_1\right],[\mathbf{Y}]\right)<x\right\}$ be the geodesic ball centered at $[\mathbf{Y}]$ with radius $x$. For any Riemannian manifold, there exists a convex geodesic ball at every point (Chapter 3.4, \cite{do1992riemannian}). The next result quantifies the convexity radius around a point $[\mathbf{Y}]$ in the manifold $\NM_{r+}^n$.

\begin{lemma}[Theorem 2, \cite{luo2022nonconvex}]
     Given any $\mathbf{Y} \in \mathbb{R}_*^{n \times r}$, the geodesic ball centered at $[\mathbf{Y}]$ with radius $x \leqslant r_{\mathbf{Y}}:=\sigma_r(\mathbf{Y}) / 3$ is geodesically convex. In fact, for any two points $\left[\mathbf{Y}_1\right],\left[\mathbf{Y}_2\right] \in B_x([\mathbf{Y}])$, there is a unique shortest geodesic joining them, which is entirely contained in $B_x([\mathbf{Y}])$.

\end{lemma}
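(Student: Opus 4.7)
The plan is to exploit the Riemannian submersion structure $\pi : \overline{\NM}_{r+}^n \to \NM_{r+}^n$. Since the total space $\overline{\NM}_{r+}^n$ carries the flat Euclidean metric, geodesics in the quotient $\NM_{r+}^n$ lift to horizontal line segments in the total space, and Lemma \ref{prop: def for d} supplies an explicit formula for them via the polar decomposition of $\my_1^\top \my_2$. The theorem then reduces to two claims: (i) for any $[\my_1], [\my_2] \in B_x([\my])$, one can find representatives for which $\my_1^\top \my_2$ is nonsingular, so that Lemma \ref{prop: def for d} applies and yields a unique minimizing geodesic; and (ii) this geodesic remains inside $B_x([\my])$ for all $t \in [0,1]$.

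For (i), I would pick representatives $\my_1, \my_2$ that are both Procrustes-aligned with $\my$, so $\|\my_i - \my\|_{\rF} \le x$ and $\my^\top(\my_i - \my)$ is symmetric. Weyl's inequality then gives $\sigma_r(\my_i) \ge \sigma_r(\my) - x \ge 2\sigma_r(\my)/3$, hence $\sigma_r(\my_1^\top \my_1) \ge 4\sigma_r(\my)^2 / 9$. Writing $\my_1^\top \my_2 = \my_1^\top \my_1 + \my_1^\top (\my_2 - \my_1)$ and expanding with $E_i := \my_i - \my$, the alignment-induced symmetry of $\my^\top E_i$ lets one separate the symmetric-positive and skew contributions; the skew part is $O(x^2)$ while the symmetric part stays close to $\my^\top \my$, which forces $\sigma_r(\my_1^\top \my_2) > 0$ under $x \le \sigma_r(\my)/3$.

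For (ii), after replacing $\my_2$ by its alignment $\my_2 \mathbf{Q}^*$ to $\my_1$ (so $\gamma(t) = \my_1 + t(\my_2 \mathbf{Q}^* - \my_1)$ is the horizontal lift of the geodesic), I would introduce a continuously varying orthogonal $\mathbf{P}(t) \in \mathbb{O}_r$, taken to be the orthogonal factor in the polar decomposition of $\gamma(t)^\top \my$, so that $\mathbf{P}(0) = \mathbf{P}_1$ and $\mathbf{P}(1) = \mathbf{P}_2 \mathbf{Q}^*$ coincide with the endpoint Procrustes aligners. Then $d([\gamma(t)], [\my]) \le \|\gamma(t) - \my \mathbf{P}(t)\|_{\rF}$, and one needs to show this quantity is bounded above by $\max\bigl(d([\my_1],[\my]), d([\my_2],[\my])\bigr) < x$. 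The bound combines the convexity of the Euclidean norm on the straight line $\gamma(t)$ with an estimate on the variation of the polar factor $\mathbf{P}(t)$; the well-conditioning of the polar decomposition (guaranteed by $\sigma_r(\my) \gg x$) controls the second effect, and \eqref{app:eq:||Y1Y1-Y2Y2||} from Lemma \ref{lem:d([Y1,Y2])} is useful for converting between the quotient distance and the Frobenius distance of Gram matrices along the path.

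The principal obstacle is step (ii): because the quotient distance is a minimum over $\mathbb{O}_r$ of convex Frobenius norms, it is not automatically convex in $t$, so one cannot directly conclude the maximum is attained at an endpoint. The technical heart of the argument is to show that the concavity introduced by minimizing over rotations is dominated by the spectral gap $\sigma_r(\my)$, and it is precisely this tension that pins down the constant $1/3$ in the radius $r_{\my} = \sigma_r(\my)/3$.
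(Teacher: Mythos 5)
This lemma appears in the paper only as a citation of Theorem~2 of \cite{luo2022nonconvex}; the paper provides no proof of its own, so there is no in-paper argument to compare against and your sketch must stand on its own.

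Your outline identifies the right structural ingredients: the quotient submersion, Lemma~\ref{prop: def for d}'s description of the minimizing geodesic as a Procrustes-aligned straight segment in the total space, and the need to track $t \mapsto d([\gamma(t)],[\my])$ along that segment. There are, however, two genuine gaps. In step~(i), the proposed estimate does not close. Your observation about the skew part is correct: since the Procrustes alignment with $\my$ makes each $\my^\top E_i$ symmetric, the skew part of $\my_1^\top\my_2$ is $\tfrac{1}{2}(E_1^\top E_2 - E_2^\top E_1)$, which is indeed $O(x^2)$. But the symmetric part differs from $\my^\top\my$ by $\my^\top(E_1+E_2)$ plus $O(x^2)$ terms, whose operator norm is only controlled by $2x\|\my\|$; this can dwarf $\sigma_r(\my)^2$ whenever $\my$ is poorly conditioned, so Weyl applied to this decomposition does not force $\sigma_r(\my_1^\top\my_2)>0$. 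A cleaner route to (i): if $\my_1^\top\my_2 v=0$ for some unit $v$, with $\my_1,\my_2$ optimally aligned to each other, then $\my_2 v\perp\mathrm{range}(\my_1)$, so $\sigma_r(\my_2)\le\|\my_2 v\|\le\|(\my_2-\my_1)v\|\le d([\my_1],[\my_2])<2x$, while Weyl (applied to the representative of $[\my_2]$ aligned with $\my$) gives $\sigma_r(\my_2)>\sigma_r(\my)-x\ge 2x$, a contradiction; since nonsingularity of $\my_1^\top\my_2$ is invariant under right-multiplication by orthogonal matrices, this suffices for all representatives.

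The more serious gap is step~(ii). Showing $d([\gamma(t)],[\my])<x$ for all $t\in[0,1]$ is precisely the nontrivial content of the cited theorem, and your sketch stops at labeling this the ``technical heart'' without producing the inequality. You correctly flag that the quotient distance is a pointwise minimum of convex functions of $t$ and is therefore not obviously convex along $\gamma$, but the announced control on the variation of the polar factor $\mathbf{P}(t)$ is never carried out, and no quantitative argument is supplied showing that the resulting loss of convexity is dominated once the radius is restricted to $\sigma_r(\my)/3$. As written, the proof is incomplete at its central step.
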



Finally, we provide some useful inequalities. 

\begin{lemma}[Proposition 2 in \cite{luo2021geometric}]\label{lemma-opt: ||YthetaY+thetaYY||}
Let $\mathbf{Y} \in \mathbb{R}_*^{n \times r}$, and let $\mathbf{X}=\mathbf{Y} \mathbf{Y}^{\top}$. Then $2 \sigma_r^2(\mathbf{Y})\left\|\theta_{\mathbf{Y}}\right\|_{\mathrm{F}}^2 \leqslant$ $\left\|\mathbf{Y} \theta_{\mathbf{Y}}^{\top}+\theta_{\mathbf{Y}} \mathbf{Y}^{\top}\right\|_{\mathrm{F}}^2 \leqslant 4 \sigma_1^2(\mathbf{Y})\left\|\theta_{\mathbf{Y}}\right\|_{\mathrm{F}}^2$ holds for all $\theta_{\mathbf{Y}} \in \mathcal{H}_{\mathbf{Y}} \overline{\mathcal{M}}_{r+}^q$.
\end{lemma}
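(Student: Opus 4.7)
The plan is to prove the two-sided bound by first identifying the horizontal space $\mathcal{H}_{\mathbf{Y}}\overline{\NM}_{r+}^n$ explicitly, then expanding the Frobenius norm and estimating both resulting trace terms. The key point is that the horizontal constraint forces a certain symmetry that makes the cross term nonnegative (for the lower bound) and bounded by the same spectral quantities as the main term (for the upper bound).

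First I would identify the horizontal space. Since the group action is $\mathbf{Y} \mapsto \mathbf{Y}\mathbf{O}$ with $\mathbf{O} \in \mathbb{O}_r$, differentiating at the identity shows that the vertical space is $\mathcal{V}_{\mathbf{Y}}\overline{\NM}_{r+}^n = \{\mathbf{Y}\Omega : \Omega \in \R^{r\times r},\, \Omega^\top = -\Omega\}$. Under the ambient metric $\bar g_{\mathbf{Y}}(\eta,\theta) = \Tr(\eta^\top \theta)$, a tangent vector $\theta_{\mathbf{Y}}$ is horizontal iff $\Tr(\theta_{\mathbf{Y}}^\top \mathbf{Y}\Omega) = 0$ for every skew $\Omega$, which is equivalent to $\mathbf{Y}^\top \theta_{\mathbf{Y}}$ being symmetric. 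I will record this as the main structural ingredient.

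Next I would expand the Frobenius norm. A direct calculation using $\|A\|_F^2 = \Tr(A^\top A)$ gives
\begin{equation*}
\left\|\mathbf{Y}\theta_{\mathbf{Y}}^\top + \theta_{\mathbf{Y}}\mathbf{Y}^\top\right\|_F^2 = 2\,\Tr\!\left(\theta_{\mathbf{Y}}^\top \theta_{\mathbf{Y}} \mathbf{Y}^\top \mathbf{Y}\right) + 2\,\Tr\!\left((\theta_{\mathbf{Y}}^\top \mathbf{Y})^2\right).
\end{equation*}
For the lower bound, since $S := \theta_{\mathbf{Y}}^\top \mathbf{Y}$ is symmetric by horizontality, $\Tr(S^2) = \|S\|_F^2 \ge 0$, so the second term is nonnegative. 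For the first term, $\mathbf{Y}^\top \mathbf{Y} \succeq \sigma_r^2(\mathbf{Y})\, \mathbf{I}_r$, hence $\Tr(\theta_{\mathbf{Y}}^\top \theta_{\mathbf{Y}} \mathbf{Y}^\top \mathbf{Y}) \ge \sigma_r^2(\mathbf{Y})\, \Tr(\theta_{\mathbf{Y}}^\top \theta_{\mathbf{Y}}) = \sigma_r^2(\mathbf{Y})\|\theta_{\mathbf{Y}}\|_F^2$. Combining yields the lower bound $2\sigma_r^2(\mathbf{Y})\|\theta_{\mathbf{Y}}\|_F^2$.

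For the upper bound, I would bound each term by $2\sigma_1^2(\mathbf{Y})\|\theta_{\mathbf{Y}}\|_F^2$. The first term satisfies $\Tr(\theta_{\mathbf{Y}}^\top \theta_{\mathbf{Y}} \mathbf{Y}^\top \mathbf{Y}) \le \|\mathbf{Y}^\top \mathbf{Y}\|\,\|\theta_{\mathbf{Y}}\|_F^2 = \sigma_1^2(\mathbf{Y})\|\theta_{\mathbf{Y}}\|_F^2$, and the second satisfies $\Tr((\theta_{\mathbf{Y}}^\top \mathbf{Y})^2) = \|S\|_F^2 = \|\theta_{\mathbf{Y}}^\top \mathbf{Y}\|_F^2 \le \|\mathbf{Y}\|^2\,\|\theta_{\mathbf{Y}}\|_F^2 = \sigma_1^2(\mathbf{Y})\|\theta_{\mathbf{Y}}\|_F^2$. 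Adding gives $\|\mathbf{Y}\theta_{\mathbf{Y}}^\top + \theta_{\mathbf{Y}}\mathbf{Y}^\top\|_F^2 \le 4\sigma_1^2(\mathbf{Y})\|\theta_{\mathbf{Y}}\|_F^2$, completing the proof. The only nontrivial step is recognizing the horizontal-space characterization (symmetry of $\mathbf{Y}^\top \theta_{\mathbf{Y}}$), which is what makes the cross term $\Tr((\theta_{\mathbf{Y}}^\top\mathbf{Y})^2)$ a square of a Frobenius norm rather than a signed quantity; once that is in hand the rest is routine spectral bookkeeping.
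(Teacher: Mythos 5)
Your proof is correct. The paper does not prove this lemma at all — it imports it verbatim as Proposition 2 of the cited reference \cite{luo2021geometric} — so there is no internal argument to compare against; your write-up supplies the standard proof that the cited source uses. The two key points are exactly right: horizontality with respect to the vertical space $\{\mathbf{Y}\Omega : \Omega^\top=-\Omega\}$ is equivalent to symmetry of $\mathbf{Y}^\top\theta_{\mathbf{Y}}$, and the expansion $\|\mathbf{Y}\theta_{\mathbf{Y}}^\top+\theta_{\mathbf{Y}}\mathbf{Y}^\top\|_{\mathrm{F}}^2 = 2\Tr(\theta_{\mathbf{Y}}^\top\theta_{\mathbf{Y}}\mathbf{Y}^\top\mathbf{Y})+2\Tr\bigl((\theta_{\mathbf{Y}}^\top\mathbf{Y})^2\bigr)$ then gives the lower bound because symmetry turns the cross term into $\|\theta_{\mathbf{Y}}^\top\mathbf{Y}\|_{\mathrm{F}}^2\ge 0$ (without horizontality it could be negative), while the upper bound follows from $\Tr(\theta_{\mathbf{Y}}^\top\theta_{\mathbf{Y}}\mathbf{Y}^\top\mathbf{Y})\le\sigma_1^2(\mathbf{Y})\|\theta_{\mathbf{Y}}\|_{\mathrm{F}}^2$ and $\|\theta_{\mathbf{Y}}^\top\mathbf{Y}\|_{\mathrm{F}}\le\sigma_1(\mathbf{Y})\|\theta_{\mathbf{Y}}\|_{\mathrm{F}}$, and indeed holds for any tangent direction.
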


\begin{lemma}
\label{lem:Frobenius norm inequality}
For $\ma\in\R^{m\times n}$, $\mb\in \R^{n\times n}$ where $\mb$ is positive semi-definite, we have
\begin{align}
    \|\ma\|_\mathrm{F}\cdot\sigma_n(\mb)\le\|\ma\mb\|_\mathrm{F}\le \|\ma\|_\mathrm{F}\cdot\sigma_1(\mb)
\end{align}
\end{lemma}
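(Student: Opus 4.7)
The plan is to reduce the claim to a diagonal calculation via the spectral theorem. Since $\mb$ is positive semi-definite, write its eigendecomposition as $\mb = \mU\mSigma\mU^\top$, where $\mU \in \R^{n\times n}$ is orthogonal and $\mSigma = \mathrm{diag}(\sigma_1(\mb),\dots,\sigma_n(\mb))$ with the eigenvalues (equal to the singular values here, since $\mb \succeq 0$) arranged in nonincreasing order.

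Using the orthogonal invariance of the Frobenius norm under right multiplication by an orthogonal matrix, I first observe that
\begin{equation*}
\|\ma\mb\|_\rF = \|\ma\mU\mSigma\mU^\top\|_\rF = \|\ma\mU\mSigma\|_\rF,
\end{equation*}
and also $\|\ma\mU\|_\rF = \|\ma\|_\rF$. Setting $\tilde\ma := \ma\mU \in \R^{m\times n}$, the problem reduces to bounding $\|\tilde\ma\mSigma\|_\rF$ in terms of $\|\tilde\ma\|_\rF$ when $\mSigma$ is diagonal with nonnegative entries. This is immediate from the column-wise expansion
\begin{equation*}
\|\tilde\ma\mSigma\|_\rF^2 = \sum_{i=1}^{m}\sum_{j=1}^{n} \tilde\ma_{ij}^{\,2}\, \sigma_j(\mb)^2,
\end{equation*}
since $\sigma_n(\mb)^2 \le \sigma_j(\mb)^2 \le \sigma_1(\mb)^2$ for every $j$, so that
\begin{equation*}
\sigma_n(\mb)^2 \|\tilde\ma\|_\rF^2 \;\le\; \|\tilde\ma\mSigma\|_\rF^2 \;\le\; \sigma_1(\mb)^2 \|\tilde\ma\|_\rF^2.
\end{equation*}

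Taking square roots and using $\|\tilde\ma\|_\rF = \|\ma\|_\rF$ and $\|\tilde\ma\mSigma\|_\rF = \|\ma\mb\|_\rF$ yields the desired sandwich inequality. There is no real obstacle here; the only thing to watch is that positive semi-definiteness is what ensures $\sigma_j(\mb)\ge 0$ and that the singular values coincide with the eigenvalues, so that the diagonal factor $\mSigma$ can be pulled through without introducing absolute values or sign issues.
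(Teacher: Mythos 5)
Your proof is correct. It does, however, take a different route from the paper's: the paper argues row by row, writing $\|\ma\mb\|_\rF^2=\sum_{i}\|\ma_i\mb\|^2$ for the rows $\ma_i$ of $\ma$ and invoking the vector case (the $m=1$ statement, which it treats as immediate from the definition) to bound each term by $\sigma_1(\mb)^2\|\ma_i\|^2$ from above and $\sigma_n(\mb)^2\|\ma_i\|^2$ from below, whereas you diagonalize $\mb=\mU\mSigma\mU^\top$, use the invariance of the Frobenius norm under right multiplication by an orthogonal matrix, and then read off both bounds from the entrywise identity $\|\tilde\ma\mSigma\|_\rF^2=\sum_{i,j}\tilde\ma_{ij}^2\,\sigma_j(\mb)^2$. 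Your version packages the whole two-sided estimate in one transparent calculation and avoids leaning on an unstated vector-case inequality; it also makes clear that positive semi-definiteness is inessential (replacing the eigendecomposition by an SVD gives the same bounds for arbitrary $\mb$, with $\sigma_1$ and $\sigma_n$ its extreme singular values). The paper's row reduction is marginally shorter if one grants the vector case, and as a side remark its displayed chain drops the squares on the right-hand sides, a slip your entrywise bookkeeping does not make.
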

\begin{proof}
    When $m=1$, this statement is direct by the definition of the Frobenius norm. When $m>1$, we denote $\ma_i$ to be the $i^{\mathrm{th}}$ row of $\ma$, and then 
    \begin{align*}
        \|\ma\mb\|_\mathrm{F}^2=\sum_{i=1}^m \|\ma_i\mb\|_\mathrm{F}^2\le \sum_{i=1}^m\|\ma_i\|_\mathrm{F}\cdot\sigma_1(\mb)  = \|\ma\|_\mathrm{F}\cdot\sigma_1(\mb)
    \end{align*}
    Similarly, 
    \begin{align*}
        \|\ma\mb\|_\mathrm{F}^2=\sum_{i=1}^m \|\ma_i\mb\|_\mathrm{F}^2\ge \sum_{i=1}^m\|\ma_i\|_\mathrm{F}\cdot\sigma_n(\mb)  = \|\ma\|_\mathrm{F}\cdot\sigma_n(\mb)
    \end{align*}
\end{proof}

\subsection{Proof of Results}\label{sec-opt:global convergence}




In this section, we provide the proofs for Theorems \ref{thm:R1intext}, \ref{Thm:FOSP in text}, \ref{thm-R2:Region with Negative Eigenvalue in the Riemannian Hessian in text}, and \ref{thm:R3}.

\geodesicallyConvex*
\begin{proof}
    Denote by $\mathbf{Q}$ the best orthogonal matrix that aligns $\mathbf{Y}$ and $\mathbf{Y}^*$. Then by the assumption on $\mathbf{Y}\in \mathcal{R}_1$ as defined in \eqref{eq-def:R}, we have
\begin{align}\label{eq:Thm2-||Y-Y*Q||<sigma_r(Y)}
    \left\|\mathbf{Y}-\mathbf{Y}^* \mathbf{Q}\right\| \leqslant\left\|\mathbf{Y}-\mathbf{Y}^* \mathbf{Q}\right\|_{\mathrm{F}}=d\left([\mathbf{Y}],\left[\mathbf{Y}^*\right]\right) \leqslant \mu \sigma_r\left(\mathbf{Y}^*\right) / \kappa^*.
\end{align}
Thus
\begin{equation}\label{eq-Thm2:sigma(Y)bound}
    \begin{split}
    &\sigma_r(\mathbf{Y})=\sigma_r\left(\mathbf{Y}-\mathbf{Y}^* \mathbf{Q}+\mathbf{Y}^* \mathbf{Q}\right) \geqslant \sigma_r\left(\mathbf{Y}^*\right)-\left\|\mathbf{Y}-\mathbf{Y}^* \mathbf{Q}\right\| \stackrel{\text{\eqref{eq:Thm2-||Y-Y*Q||<sigma_r(Y)}}}{\geqslant}\left(1-\mu / \kappa^*\right) \sigma_r\left(\mathbf{Y}^*\right) \\
&\sigma_1(\mathbf{Y})=\sigma_1\left(\mathbf{Y}-\mathbf{Y}^* \mathbf{Q}+\mathbf{Y}^* \mathbf{Q}\right) \leqslant \sigma_1\left(\mathbf{Y}^*\right)+\left\|\mathbf{Y}-\mathbf{Y}^* \mathbf{Q}\right\| \stackrel{\text{\eqref{eq:Thm2-||Y-Y*Q||<sigma_r(Y)}}}{\leqslant} \sigma_1\left(\mathbf{Y}^*\right)+\mu \sigma_r\left(\mathbf{Y}^*\right) / \kappa^*
    \end{split}
\end{equation}
where the first inequalities follow from Weyl's theorem \cite{stewart1998matrix}. Then,
\begin{center}
\begin{align*}
\overline{\operatorname{Hess} H([\mathbf{Y}])}\left[\theta_{\mathbf{Y}}, \theta_{\mathbf{Y}}\right] &= \left\|\mathbf{Y} \theta_{\mathbf{Y}}^{\top}+\theta_{\mathbf{Y}} \mathbf{Y}^{\top}\right\|_{\mathrm{F}}^2+2\left\langle\mathbf{Y} \mathbf{Y}^{\top}-\tx, \theta_{\mathbf{Y}} \theta_{\mathbf{Y}}^{\top}\right\rangle &[\text{\eqref{eq-def:gradHessian}}]\\
&\geqslant 2 \sigma_r^2(\mathbf{Y})\left\|\theta_{\mathbf{Y}}\right\|_{\mathrm{F}}^2+2\left\langle\mathbf{Y} \mathbf{Y}^{\top}-\tx, \theta_{\mathbf{Y}} \theta_{\mathbf{Y}}^{\top}\right\rangle &[\text { Lemma  \ref{lemma-opt: ||YthetaY+thetaYY||}}]\\
&= 2\sigma_r^2(\mathbf{Y})\left\|\theta_{\mathbf{Y}}\right\|_{\mathrm{F}}^2 + 2\left\langle\mathbf{Y} \mathbf{Y}^{\top}, \theta_{\mathbf{Y}}\theta_{\mathbf{Y}}^\top \right\rangle -2\left\langle\mys\my^{*\top},\theta_{\mathbf{Y}}\theta_{\mathbf{Y}}^\top\right\rangle \\
&\ \ \ \ -2\left\langle \mz\mz^\top, \theta_{\mathbf{Y}}\theta_{\mathbf{Y}}^\top \right\rangle & [\tx=\mys\my^{*\top}+\mz\mz^\top] \\
&\geqslant 2 \sigma_r^2(\mathbf{Y})\left\|\theta_{\mathbf{Y}}\right\|_{\mathrm{F}}^2-2 \left\|\mathbf{Y} \mathbf{Y}^{\top}-\mathbf{Y}^*\mathbf{Y}^{*\top}\right\|\left\|\theta_{\mathbf{Y}}\theta_{\mathbf{Y}}^\top\right\|_{\rF} & \\
&\ \ \ \ -2\|\mz\mz^\top\|\|\theta_{\mathbf{Y}}\theta_{\mathbf{Y}}^\top\|_{\rF} &[\langle A,B \rangle \le \|A\| \|B\|_{\rF}]\\
&\geqslant 2 \sigma_r^2(\mathbf{Y})\left\|\theta_{\mathbf{Y}}\right\|_{\mathrm{F}}^2-2 \left\|\mathbf{Y} \mathbf{Y}^{\top}-\mathbf{Y}^*\mathbf{Y}^{*\top}\right\|\left\|\theta_{\mathbf{Y}}\right\|_{\rF}^2 & \\
&\ \ \ \ -2\|\mz\mz^\top\|\|\theta_{\mathbf{Y}}\|_{\rF}^2 &[\|\theta_{\mathbf{Y}}\theta_{\mathbf{Y}}^\top\|_{\rF} = \|\theta_{\mathbf{Y}}\|_\rF^2]\\
&\geqslant 2\left(1-\frac{\mu}{\kappa^*}\right)^2 \sigma_r^2\left(\mathbf{Y}^*\right)\left\|\theta_{\mathbf{Y}}\right\|_{\mathrm{F}}^2 -2\|\mz\mz^\top\|\|\theta_{\mathbf{Y}}\|_{\rF}^2  & \\
&\ \ \ \ -2 \left\|\mathbf{Y} \mathbf{Y}^{\top} -\mathbf{Y}^*\mathbf{Y}^{*\top}\right\|\left\|\theta_{\mathbf{Y}}\right\|_{\rF}^2 &[\text{\eqref{eq-Thm2:sigma(Y)bound}}] \\
&\geqslant 2\left(1-\frac{\mu}{\kappa^*}\right)^2 \sigma_r^2\left(\mathbf{Y}^*\right)\left\|\theta_{\mathbf{Y}}\right\|_{\mathrm{F}}^2 -2\|\mz\mz^\top\|\|\theta_{\mathbf{Y}}\|_{\rF}^2 & \\
&\ \ \ \ -2 \cdot \frac{7}{3}\left\|\mathbf{Y}^*\right\| \frac{\mu \sigma_r\left(\mathbf{Y}^*\right)}{\kappa^*}\left\|\theta_{\mathbf{Y}}\right\|_{\mathrm{F}}^2 &[ \text {Lemma \ref{lem:d([Y1,Y2])}}, \mathbf{Y} \in \mathcal{R}_1 ]\\
&= 2\left(1-\frac{\mu}{\kappa^*}\right)^2 \sigma_r^2\left(\mathbf{Y}^*\right)\left\|\theta_{\mathbf{Y}}\right\|_{\mathrm{F}}^2 -2 \cdot \frac{7}{3}\left\|\mathbf{Y}^*\right\| \frac{\mu \sigma_r\left(\mathbf{Y}^*\right)}{\kappa^*}\left\|\theta_{\mathbf{Y}}\right\|_{\mathrm{F}}^2 & \\
&\ \ \ \ -2\sigma_{r+1}(\tx)\|\theta_{\mathbf{Y}}\|_{\rF}^2 &[\|\mz\mz^\top\|=\sigma_{r+1}(\tx)]\\
&=\left(\left(2\left(1-\frac{\mu}{\kappa^*}\right)^2-\frac{14}{3} \mu\right) \sigma_r\left(\tx\right)-2\sigma_{r+1}(\tx)\right)\|\theta_\mathbf{Y}\|_{\mathrm{F}}^2 &\left[\kappa^*=\frac{\|\mys\|}{\sigma_r(\mys)}\right]
\end{align*}
\end{center}

Likewise,
\begin{center}
\begin{align*}
    \overline{\operatorname{Hess} H([\mathbf{Y}])}\left[\theta_{\mathbf{Y}}, \theta_{\mathbf{Y}}\right]
    =&\left\|\mathbf{Y} \theta_{\mathbf{Y}}^{\top}+\theta_{\mathbf{Y}} \mathbf{Y}^{\top}\right\|_{\mathrm{F}}^2+2\left\langle\mathbf{Y} \mathbf{Y}^{\top}-\tx, \theta_{\mathbf{Y}} \theta_{\mathbf{Y}}^{\top}\right\rangle &[ \text{\eqref{eq-def:gradHessian}}]\\
    \le& 4 \sigma_1^2(\mathbf{Y})\left\|\theta_{\mathbf{Y}}\right\|_{\mathrm{F}}^2+2\left\langle\mathbf{Y} \mathbf{Y}^{\top}-\tx, \theta_{\mathbf{Y}} \theta_{\mathbf{Y}}^{\top}\right\rangle &[\text { Lemma } \ref{lemma-opt: ||YthetaY+thetaYY||}]\\
    \le& 4 \sigma_1^2(\mathbf{Y})\left\|\theta_{\mathbf{Y}}\right\|_{\mathrm{F}}^2+2\left\langle\mathbf{Y} \mathbf{Y}^{\top}-\mathbf{Y}^*\mathbf{Y}^{*\top}, \theta_{\mathbf{Y}} \theta_{\mathbf{Y}}^{\top}\right\rangle &[\text {$\tx - \mathbf{Y}^*\mathbf{Y}^{*\top}$ is PSD} ]\\
    \leqslant& 4 \sigma_1^2(\mathbf{Y})\left\|\theta_{\mathbf{Y}}\right\|_{\mathrm{F}}^2+2\left\|\mathbf{Y} \mathbf{Y}^{\top}-\mathbf{Y}^*\mathbf{Y}^{*\top}\right\|\left\|\theta_{\mathbf{Y}}\right\|_{\mathrm{F}}^2 & \\
    \leqslant& 4 \sigma_1^2(\mathbf{Y})\left\|\theta_{\mathbf{Y}}\right\|_{\mathrm{F}}^2 + 2\left\|\mathbf{Y} \mathbf{Y}^{\top}-\mathbf{Y}^*\mathbf{Y}^{*\top}\right\|_{\mathrm{F}}\left\|\theta_{\mathbf{Y}}\right\|_{\mathrm{F}}^2 & \\
    \leqslant& 4\left(\sigma_1\left(\mathbf{Y}^*\right)+\frac{\mu \sigma_r\left(\mathbf{Y}^*\right)}{\kappa^*}\right)^2\left\|\theta_{\mathbf{Y}}\right\|_{\mathrm{F}}^2 \\
    &+ 2\left\|\mathbf{Y} \mathbf{Y}^{\top}-\mathbf{Y}^*\mathbf{Y}^{*\top}\right\|_{\mathrm{F}}\left\|\theta_{\mathbf{Y}}\right\|_{\mathrm{F}}^2 &[ \text{\eqref{eq-Thm2:sigma(Y)bound}} ] \\
    \leqslant& \left(4\left(\sigma_1\left(\mathbf{Y}^*\right)+\frac{\mu \sigma_r\left(\mathbf{Y}^*\right)}{\kappa^*}\right)^2+\frac{14}{3} \mu \sigma_r^2\left(\mathbf{Y}^*\right)\right)\left\|\theta_{\mathbf{Y}}\right\|_{\mathrm{F}}^2 &[\text {Lemma \ref{lem:d([Y1,Y2])}}]
\end{align*}
\end{center}

From the above we conclude that when $\mu$ is chosen such that 
\[
    \left(2\left(1-\frac{\mu}{\kappa^*}\right)^2- \frac{14}{3} \mu\right) \sigma_r\left(\tx\right)-2\sigma_{r+1}(\tx)>0,
\]
we have $H([\my])$ in \eqref{eq:optimizationRiemannian} is geodesically strongly convex and smooth in $\mathcal{R}_1$ as $\mathcal{R}_1$ is a geodesically convex
set by \cite{luo2022nonconvex}. Note that this is equivalent to 
\[
    \left(\left(1-\frac{\mu}{\kappa^*}\right)^2- \frac{7}{3} \mu\right) > \frac{\sigma_{r+1}(\tx)}{\sigma_{r}(\tx)}.
\]
Then note as $\mu \to 0$, the left hand side approaches $1$ and the inequality becomes true as $\sigma_{r}(\tx) > \sigma_{r+1}(\tx)$.
\end{proof}

\begin{remark}
    Compared with the bound in Theorem 8 of \cite{luo2022nonconvex}, the smoothness and geodesically strongly convexity are as follows,
    \begin{align*}
        \begin{aligned}
& \sigma_{\min }(\overline{\operatorname{Hess} H([\mathbf{Y}])}) \geqslant\left(2\left(1-\mu / \kappa^*\right)^2-(14 / 3) \mu\right) \sigma_r^2\left(\mathbf{Y}^*\right), \\
& \sigma_{\max }(\overline{\operatorname{Hess} H([\mathbf{Y}])}) \leqslant 4\left(\sigma_1\left(\mathbf{Y}^*\right)+\mu \sigma_r\left(\mathbf{Y}^*\right) / \kappa^*\right)^2+14 \mu \sigma_r^2\left(\mathbf{Y}^*\right) / 3 .
\end{aligned}
    \end{align*}
    There is an extra term $-2\sigma_{r+1}(\tx)$ in our lower bound of the strong convexity because even if $d([\my],[\mys])$ is small, $\tx-\my\my^\top$ is not close to $\0$, which leads to the extra error term.
\end{remark}

In the next three theorems, we show that for $\mathbf{Y} \notin \mathcal{R}_1$, either the Riemannian Hessian evaluated at $\mathbf{Y}$ has a large negative eigenvalue, or the norm of the Riemannian gradient is large. Let us recall that $\mathbf{Y} = \mU\mD\mV^\top$, $\mathbf{Y}^* = \mU^*\boldsymbol{\Sigma}^{*1/2}$. Also, recall $\tx=\overline{\mathbf{U}} \boldsymbol{\Sigma} \overline{\mathbf{U}}^{\top}$, and $\mLambda = \boldsymbol{\Sigma}^{1/2}$.

\fosp*
\begin{proof}
    From \eqref{eq-def:gradHessian}, the gradient can be written down as,
    \begin{align*}
         \overline{\operatorname{grad} H([\mathbf{Y}])}&=2\left(\mathbf{Y} \mathbf{Y}^{\top}-\tx\right) \mathbf{Y}=2\left(\mU\mD\mV^\top(\mU\mD\mV^\top)^\top-\tx\right)\mU\mD\mV^\top\\
         &=2\left(\mU\mD^3\mV^\top-\tx\mU\mD\mV^\top \right)
    \end{align*}
    Therefore, whenever $\overline{\operatorname{grad} H([\mathbf{Y}])}=\0$, we have $\mU\mD^3\mV^\top-\tx\mU\mD\mV^\top=\0$. Since both $\mV$ and $\mD$ are of full rank, the condition is equivalent to 
    \begin{align}\label{eq:FOSP condition}
        \mU\mD^2-\tx\mU=\0
    \end{align}
    Since $\mD^2$ is also a diagonal matrix, to satisfy \eqref{eq:FOSP condition}, the columns of $\mU$ have to be the eigenvectors of $\tx$, and the diagonal of $\mD^2$ has to be the eigenvalues of $\tx$. This completes the proof.
\end{proof}

Before we can prove the next main result, Theorem \ref{thm-R2:Region with Negative Eigenvalue in the Riemannian Hessian in text}, we need to discuss some of the assumptions. Specifically, we want to quantify the statement $\alpha$ is sufficiently small. 

\begin{assumption}[Parameters Settings]\label{assum:R2:Region with Negative Eigenvalue in the Riemannian Hessian} $ $
Denote $\eee,\e$ and $\ee$ to be some error terms.
\[
    \eee\defeq\frac{\alpha \mu \sigma_r^3\left(\mathbf{Y}^*\right)} {2\sqrt{2} \kappa^*\sigma_{r+1}(\mLambda)}, \ \ \ \ 
    \e=\frac{\eee}{\sqrt{2}},  \ \ \ \ \text{ and } \ \ \ \ 
    \ee=\e\cdot\sigma_{r+1}(\mLambda)
\]

Note that $e_1, e_2, e_3 \to 0$ and $\alpha \to 0$. Hence, pick $\alpha$ small enough such that the following are true. 
    \begin{enumerate}[nosep,leftmargin=*]
        \item $\sigma_r^2(\mLambda) - 2\eee -\sigma_{r+1}^2(\mLambda)> 0$.
        \item $\sigma_r^2(\mLambda) \left(1-\frac{\eee^2}{\left|\sigma_r^2(\mLambda) - \eee-\sigma_{r+1}^2(\mLambda)\right|^2}\right)-\eee-\sigma_{r+1}^2(\mLambda)> 0$.\\
   \item $(\alpha-2(\sqrt{2}-1)) \sigma_r^2\left(\mathbf{Y}^*\right)+6\frac{\alpha^2 \sigma_r^4\left(\mathbf{Y}^*\right)\sigma^2_{r+1}(\mLambda) /16}{\left|\sigma_{r}^2(\mLambda) - \e-\sigma_{r+1}^2(\mLambda)\right|^2 }<0$.
    \end{enumerate}
\end{assumption}

Note that for the first two, we have that as $\alpha \to 0$. They converge to $\sigma_r^2(\mathbf{\sigma}) - \sigma_{r+1}^2(\mathbf{\sigma})$ which is positive due to the eigengap assumption. For the last condition, we have that as $\alpha \to 0$, it converges to $-2(\sqrt{2}-1)\sigma_r^2\left(\mathbf{Y}^*\right)$ which is negative.

Hence, notice that this assumption is only related to the eigengap assumption $\sigma_r(\mLambda)$ and $\sigma_{r+1}(\mLambda)$ in Assumption \ref{assum-riemannian:eigengap in text}. As soon as $\alpha$ is small enough, Assumption \ref{assum:R2:Region with Negative Eigenvalue in the Riemannian Hessian} is satisfied. 

\begin{theorem}[Region with Negative Eigenvalue in the Riemannian Hessian of Equation~\ref{eq:optimization} (formal Theorem \ref{thm-R2:Region with Negative Eigenvalue in the Riemannian Hessian in text})]\label{thm-R2:Region with Negative Eigenvalue in the Riemannian Hessian}
Assume that Assumption \ref{assum-riemannian:eigengap in text} holds. Given any $\mathbf{Y} \in \R_*^{n \times r}$ , let $\theta_{\mathbf{Y}}^1 = [\0,\0,\dots,\0,\va,\0,\dots,\0]\mV^\top$ where $\va$ such that
\begin{align}\label{eq-def:va}
        \va = \argmax_{\va:\my^\top\va=\0} \frac{\va^\top \tx\va}{\|\va\|^2}
    \end{align}
and $[\0,\0,\dots,\0,\va,\0,\dots,\0]\in\R^{n\times r}$ such that the $\tilde{i}^{\mathrm{th}}$ columns is $\va$ and other columns are $\0$ where 
\begin{align}\label{eq-def:tildei}
    \tilde{i}\defeq\argmin_{j\in [r]} \mD_{jj}.
\end{align}
 
Denote $\theta_{\mathbf{Y}}^2=\mathbf{Y}-\mathbf{Y}^* \mathbf{Q}$, where $\mathbf{Q} \in \mathbb{O}_r$ is the best orthogonal matrix aligning $\mathbf{Y}^*$ and $\mathbf{Y}$. We choose $\theta_\my$ to be either $\theta_\my^1$ or $\theta_\my^2$. Then 
\begin{equation*}
    \begin{split}
        \overline{\operatorname{Hess} H([\mathbf{Y}])}\left[\theta_{\mathbf{Y}}, \theta_{\mathbf{Y}}\right] \leqslant
        &\min\left\{-\frac{\sigma_{r+1}^2(\mLambda)}{2}\|\theta_\my\|^2,\right.\\
        &-2\left(\sigma_r^2(\mLambda)\left(1-\frac{\eee^2}{\left|\sigma_r^2(\mLambda) - \eee-\sigma_{r+1}^2(\mLambda)\right|^2}\right)-\eee-\sigma_{r+1}^2(\mLambda)\right)\|\theta_\my\|^2,\\
        &\left.\left((\alpha-2(\sqrt{2}-1)) \sigma_r^2\left(\mathbf{Y}^*\right)+6\frac{\alpha^2 \sigma_r^4\left(\mathbf{Y}^*\right)\sigma^2_{r+1}(\sigma) /16}{\left|\sigma_{r}^2(\mLambda) - \e-\sigma_{r+1}^2(\mLambda)\right|^2}\right)\left\|\theta_{\mathbf{Y}}\right\|_{\mathrm{F}}^2\right\}
    \end{split}
\end{equation*}
In particular, if $\alpha$ and $\mu$ satisfies Assumption \ref{assum:R2:Region with Negative Eigenvalue in the Riemannian Hessian}, we have $\overline{\operatorname{Hess} H([\mathbf{Y}])}$ has at least one negative eigenvalue and $\theta_{\mathbf{Y}}$ is the escaping direction.
\end{theorem}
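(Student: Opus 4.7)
The plan is to evaluate the Hessian formula from Equation \ref{eq-def:gradHessian} on each of the two candidate directions $\theta_\my^1$ and $\theta_\my^2$ and extract a negative upper bound from each, giving rise to the three terms inside the $\min$. The conceptual picture, guided by Theorem \ref{Thm:FOSP in text}, is that a point $\my \in \mathcal{R}_2$ has small Riemannian gradient but is separated from $[\mys]$, so it must lie near a saddle FOSP of the form $\overline{\mU}_S \mLambda_S \mV_S^\top$ with some index $k \in \{1,\ldots,r\}$ missing from $S$. The direction $\theta_\my^1$ captures escape toward the missing eigenvector $\overline{\mU}_k$ of $\tx$, while $\theta_\my^2$ captures escape along the chord to $[\mys]$; each is effective in a different sub-regime, which is why the theorem takes a $\min$ of bounds.

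For $\theta_\my^1$, I would use the SVD $\my = \mU\mD\mV^\top$ together with the orthogonality $\mU^\top \va = 0$ (equivalent to $\my^\top \va = 0$) to simplify $\my \theta_\my^{1\top} = \mD_{\tilde i \tilde i}\, \mU_{\tilde i} \va^\top$ and $\theta_\my^1(\theta_\my^1)^\top = \va\va^\top$. Since $\mU_{\tilde i} \perp \va$, the cross term vanishes and $\|\my\theta_\my^{1\top} + \theta_\my^1 \my^\top\|_\rF^2 = 2\mD_{\tilde i \tilde i}^2 \|\va\|^2$, while the inner product term reduces to $-2\va^\top \tx \va$. The crucial perturbation step uses the gradient identity from Theorem \ref{Thm:FOSP in text}, namely $\overline{\operatorname{grad}H([\my])} = 2(\mU\mD^2 - \tx\mU)\mD\mV^\top$, together with Weyl's inequality and the gradient smallness $\|\overline{\operatorname{grad}H([\my])}\|_\rF \leq \alpha\mu\sigma_r^3(\mys)/(4\kappa^*)$ to derive $\mD_{\tilde i \tilde i}^2 \leq \sigma_{r+1}^2(\mLambda) + \ee$ (since $\tilde i$ is the argmin and the gradient forces the singular values of $\my$ to approximate a size-$r$ subset of eigenvalues of $\tx$), and to derive $\va^\top \tx \va / \|\va\|^2 \geq \sigma_r^2(\mLambda) - \eee$ using the max characterization of $\va$ on the $(n-r)$-dimensional orthogonal complement (which must contain a vector of Rayleigh quotient at least $\sigma_r^2(\mLambda)$ whenever $\mU$'s columns do not span the top-$r$ eigenspace of $\tx$). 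Substituting gives the first and second bounds, with the sharper one invoking the explicit $\eee^2/|\sigma_r^2(\mLambda)-\eee-\sigma_{r+1}^2(\mLambda)|^2$ correction that tracks how close $\va$ and $\mU_{\tilde i}$ are to being true eigenvectors of $\tx$.

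For $\theta_\my^2 = \my - \mys\mathbf{Q}$, I would expand $\|\my\theta_\my^{2\top} + \theta_\my^2 \my^\top\|_\rF^2$ using $\mathbf{Q}^\top\mathbf{Q} = \I$ and rewrite it in terms of $\my\my^\top$ and $\mys\my^{*\top}$. For the inner product $\langle \my\my^\top - \tx,\, \theta_\my^2(\theta_\my^2)^\top\rangle$, I would split $\tx = \mys\my^{*\top} + \mz\mz^\top$ with $\|\mz\mz^\top\| = \sigma_{r+1}(\tx)$, invoke Equation \ref{app:eq:(Y1-Y2Q)} to bound $\|\theta_\my^2(\theta_\my^2)^\top\|_\rF^2 \leq 2\|\my\my^\top - \mys\my^{*\top}\|_\rF^2$, and extract the dominant negative contribution $-2(\sqrt{2}-1)\sigma_r^2(\mys)\|\theta_\my\|_\rF^2$ from the identity $\|\my\my^\top - \mys\my^{*\top}\|_\rF^2 \geq 2(\sqrt{2}-1)\sigma_r^2(\mys)\,d^2([\my],[\mys])$ in Equation \ref{App:eq:d([Y1,Y2])}. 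The positive $\alpha\sigma_r^2(\mys)$ contribution comes from converting $|\langle \overline{\operatorname{grad}H([\my])},\theta_\my^2\rangle|$ using $d([\my],[\mys]) \geq \mu\sigma_r(\mys)/\kappa^*$, and the $6\alpha^2\sigma_r^4(\mys)\sigma_{r+1}^2(\mLambda)/16 \cdot |\sigma_r^2(\mLambda) - \e - \sigma_{r+1}^2(\mLambda)|^{-2}$ term quantifies the residual perturbation allowed by the gradient smallness. The main obstacle is the careful bookkeeping of the perturbation quantities $\eee,\e,\ee$: one must relate the norm of the Riemannian gradient to quantitative perturbations of the singular structure of $\my$ away from the nearby saddle, and ensure that every error term is dominated by the spectral gap $\sigma_r^2(\mLambda) - \sigma_{r+1}^2(\mLambda)$ via the calibrations in Assumption \ref{assum:R2:Region with Negative Eigenvalue in the Riemannian Hessian}. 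Once this perturbation lemma is in place, negativity of each of the three bounds follows directly from the three enumerated inequalities in that assumption.
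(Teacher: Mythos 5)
Your proposal captures the high-level architecture of the argument (two candidate escape directions, each effective in a different regime; the Hessian in the direction $\theta_\my^1$ reduces to $2\min_i \mD_{ii}^2\|\va\|^2 - 2\va^\top \tx \va$; the gradient smallness forces the singular values of $\my$ to cluster near eigenvalues of $\tx$). But there is a genuine gap in the way you propose to extract the middle bound for $\theta_\my^1$. You state that Weyl's inequality plus the gradient constraint and the fact that $\tilde i$ is the argmin would deliver $\mD_{\tilde i \tilde i}^2 \le \sigma_{r+1}^2(\mLambda) + e_3$ (and hence $\va^\top \tx \va / \|\va\|^2 \ge \sigma_r^2(\mLambda) - e_1$). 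Neither inequality is deducible from those ingredients alone: the gradient condition only forces each $\mD_{ii}^2$ to be close to \emph{some} eigenvalue of $\tx$, and it is entirely consistent with the $\mathcal{R}_2$ hypotheses that all $r$ of them are close to the top $r$ eigenvalues $\sigma_1^2(\mLambda),\dots,\sigma_r^2(\mLambda)$. In that regime $\mD_{\tilde i\tilde i}^2 \approx \sigma_r^2(\mLambda)$, not $\sigma_{r+1}^2(\mLambda)$, the columns of $\mU$ approximately span the top-$r$ eigenspace of $\tx$, the $(n-r)$-dimensional null space of $\my^\top$ no longer contains a large-Rayleigh-quotient vector, and $\theta_\my^1$ ceases to be an escape direction. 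The paper resolves this by a trichotomy on $\min_i \mD_{ii}^2$: the value below $\sigma_{r+1}^2(\mLambda)/2$ gives the first bound with $\theta_\my^1$ directly; the value in $[\sigma_{r+1}^2(\mLambda)/2,\, \sigma_{r+1}^2(\mLambda)+e_1]$ requires the finer argument with the decomposition $\mC_{\tilde i}=\xi^1+\xi^2$ and a constructed test vector $\vb$ to lower-bound $\va^\top\tx\va$, giving the second bound; and the value above $\sigma_{r+1}^2(\mLambda)+e_1$ forces all $\mD_{ii}^2$ near the top-$r$ spectrum, at which point one must switch to $\theta_\my^2$, giving the third bound. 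You present the three bounds as if they were parallel consequences rather than mutually exclusive cases, and the claimed inequalities for the $\theta_\my^1$ branch are false in the third case.

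A secondary gap: your lower bound $\va^\top\tx\va/\|\va\|^2 \ge \sigma_r^2(\mLambda) - e_1$ is not how the explicit $e_1^2/|\sigma_r^2(\mLambda)-e_1-\sigma_{r+1}^2(\mLambda)|^2$ correction arises. The paper writes $\mU=\overline{\mU}\mC$, decomposes the $\tilde i$-th column of $\mC$ into its top-$r$ and tail components $\xi^1,\xi^2$, uses the gradient constraint to bound $\|\xi^1\|$, and then constructs a specific feasible direction $\vb$ orthogonal to all columns of $\mC$ whose Rayleigh quotient is $\ge \|\xi^2\|^2\sigma_r^2(\mLambda)$. Appealing to the max characterization ``on the orthogonal complement'' without that explicit construction does not yield the stated quantitative form, nor does it handle the degenerate sub-case where $\mC_{1:r}$ is rank-deficient, which the paper treats separately. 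Your outline for the $\theta_\my^2$ branch is essentially on the right track but similarly leaves unstated the key estimate controlling $\langle \tx-\mxs,\,\my\my^\top\rangle$ via the bound on the tail mass $\sum_i\sum_{j>r}\mD_{ii}^2\mC_{ji}^2$, which is what produces the $6\alpha^2\sigma_r^4(\mys)\sigma_{r+1}^2(\mLambda)/16$ term; this estimate in turn depends on first being in the case $\mD_{\tilde i\tilde i}^2 > \sigma_{r+1}^2(\mLambda)+e_1$, again pointing back to the missing trichotomy.
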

\begin{proof}
    
    By the definition of $\va$, $\va\in \mathrm{Span}\{\overline{\mU}_{1,\dots,r+1}\}$. This is because the null space of $\mathbf{Y}$ has dimension $n-r$. Hence, its intersection with a dimension $r+1$ space has a dimension of at least 1. 

    Using the SVD decomposition of $\my$, we have, $\mU^\top \va =\0$. Then, by using \eqref{eq-def:gradHessian}, we have
    \begin{align*}
        \overline{\operatorname{Hess} H([\mathbf{Y}])}\left[\theta_{\mathbf{Y}}^1, \theta_{\mathbf{Y}}^1\right] &= \left\|\mathbf{Y} (\theta_{\mathbf{Y}}^1)^{\top}+\theta_{\mathbf{Y}}^1 \mathbf{Y}^{\top}\right\|_{\mathrm{F}}^2+2\left\langle\mathbf{Y} \mathbf{Y}^{\top}-\tx, \theta_{\mathbf{Y}}^1 (\theta_{\mathbf{Y}}^1)^{\top}\right\rangle &[\text{\eqref{eq-def:gradHessian}}]\\
        &= \left\|\mathbf{Y} (\theta_{\mathbf{Y}}^1)^{\top}+\theta_{\mathbf{Y}}^1 \mathbf{Y}^{\top}\right\|_{\mathrm{F}}^2 - 2\left\langle\tx, \theta_{\mathbf{Y}}^1 (\theta_{\mathbf{Y}}^1)^{\top}\right\rangle &[\mathbf{Y}^\top \va = 0]\\
        &=2\langle \my^\top\my,(\theta_{\my}^1)^\top\theta_{\my}^1 \rangle + 2\langle \mathbf{Y}(\theta^1_{\mathbf{Y}})^\top, \theta^1_{\mathbf{Y}}\mathbf{Y}^\top\rangle - 2\left\langle\tx, \theta_{\mathbf{Y}}^1 (\theta_{\mathbf{Y}}^1)^{\top}\right\rangle &[\|A\|_\rF^2 = \langle A, A\rangle] \\
        &=2\langle \my^\top\my,(\theta_{\my}^1)^\top\theta_{\my}^1 \rangle - 2\left\langle\tx, \theta_{\mathbf{Y}}^1 (\theta_{\mathbf{Y}}^1)^{\top}\right\rangle &[\mathbf{Y}^\top \va = 0]\\
        &= 2\langle \mV\mD^2\mV^\top,(\theta_{\my}^1)^\top\theta_{\my}^1 \rangle - 2\left\langle\tx, \theta_{\mathbf{Y}}^1 (\theta_{\mathbf{Y}}^1)^{\top}\right\rangle\\
        &= 2\mD^2_{\tilde{i}\tilde{i}}\|\va\|^2 - 2\va^\top \tx\va
    \end{align*}
    
    where the last equality comes from the definition of $\va$ and the fact that the $\mV^\top \mV = \I$ in $\theta_{\mathbf{Y}}^1 (\theta_{\mathbf{Y}}^1)^{\top}$. Recall $\tilde{i}=\argmin \mD_{ii}$, then 
    \begin{align}\label{eq:Hessian_theta_Y^1}
        \overline{\operatorname{Hess} H([\mathbf{Y}])}\left[\theta^1_{\mathbf{Y}}, \theta_{\mathbf{Y}}^1\right]=2\min_{i}\mD^2_{ii}\|\va\|^2 - 2\va^\top \tx \va
    \end{align}

    In the following, we separate the proof into three regimes of $\min_{i}\mD^2_{ii}$, corresponding to different escape directions.
    
    \textbf{Case 1: \big(When $\min_{i}\mD^2_{ii}<\frac{\sigma_{r+1}^2(\mLambda)}{2}$\big)}. For this case we must have that 
    \[
        \overline{\operatorname{Hess} H([\mathbf{Y}])}\left[\theta^1_{\mathbf{Y}}, \theta^1_{\mathbf{Y}}\right]\le -\frac{\sigma_{r+1}^2(\mLambda)}{2}\|\theta_\my^1\|^2.
    \]
    This is because $\va^\top \tx\va\ge \sigma_{r+1}^2(\mLambda)\|\va\|^2$ and $\|\va\|=\|\theta_\my^1\|$.
    
    \textbf{Case 2: \big(When $\min_{i}\mD^2_{ii}\ge\frac{\sigma_{r+1}^2(\mLambda)}{2}$\big)}.
    
    From the proof of Theorem \ref{Thm:FOSP in text}, the gradient condition of $\mathcal{R}_2$ can be written as 
    \begin{align*}
        \alpha \mu \sigma_r^3\left(\mathbf{Y}^*\right) /\left(4 \kappa^*\right)
        &\ge \|\overline{\operatorname{grad} H([\mathbf{Y}])}\|_{\mathrm{F}} &[\my\in \mathcal{R}_2]\\
        &=\|2\left(\mU\mD^3\mV^\top-\tx\mU\mD\mV^\top \right)\|_{\mathrm{F}}&[\text{\eqref{eq-def:gradHessian}}]\\
        &=\|2\left(\mU\mD^2-\tx\mU \right)\mD\|_{\mathrm{F}}
    \end{align*}
    Assume $\mU=\overline{\mU}\mC $ where $\mC\in \R^{n\times r}$. Since $\mU^\top\mU=\I_r$ and $\overline{\mU}^\top \overline{\mU}=\I_n$, we have $\mC^\top \mC=\I_r$. Furthermore, 
    \begin{align*}
    \|2\left(\mU\mD^2-\tx\mU \right)\mD\|_{\mathrm{F}}&=\|2\left(\overline{\mU}\mC\mD^2-\tx \overline{\mU}\mC \right)\mD\|_{\mathrm{F}}&[\mU=\overline{\mU}\mC]\\
    &= \|2\left(\overline{\mU}\mC\mD^2-\overline{\mU}\mSigma \mC \right)\mD\|_{\mathrm{F}}&[\tx=\overline{\mU}\mSigma \overline{\mU}^\top] \\
    &=2\|\left(\mC\mD^2-\mSigma \mC \right)\mD\|_{\mathrm{F}}.
    \end{align*}
    Here the third equality follows from $\overline{\mU}^\top \overline{\mU}=\I_n$. By a direct computation, the $i^{\mathrm{th}}$ column of $\left(\mC\mD^2-\mSigma \mC \right)\mD$ is $\mD_{ii}^3 \mC_i-\mD_{ii}\mSigma\mC_i$. Therefore, the gradient condition of $\mathcal{R}_2$ can be written as 
    \begin{align}\label{eq-condition:D_ii^3C_i-D_iiSigmaC_i}
        \sum_{i,j} \left( \mD_{ii}^3 \mC_{ji}-\mD_{ii}\mSigma_{jj}\mC_{ji} \right)^2\le \alpha^2 \mu^2 \sigma_r^6\left(\mathbf{Y}^*\right) /\left(4 \kappa^*\right)^2
    \end{align}
    We fix $i$ in the left hand side of \eqref{eq-condition:D_ii^3C_i-D_iiSigmaC_i}, we have
    \begin{align}\label{eq-R2:mD_ii^2-Sigma_jj}
        \sum_{j} \left( \mD_{ii}^2-\mSigma_{jj} \right)^2 \mD_{ii}^2 \mC_{ji}^2\le \alpha^2 \mu^2 \sigma_r^6\left(\mathbf{Y}^*\right) /\left(4 \kappa^*\right)^2
    \end{align}
    where $\sum_{j}\mC^2_{ji}=1$. From $\mD_{ii}^2\ge \frac{\sigma_{r+1}^2(\mLambda)}{2}$, we must have 
    \begin{align}\label{eq:min|D_ii-sigma_ii|}
        \min_{j}|\mD_{ii}^2-\mSigma_{jj}|^2\le\sum_{j} \left( \mD_{ii}^2-\mSigma_{jj} \right)^2 \mC_{ji}^2 \le \frac{\alpha^2 \mu^2 \sigma_r^6\left(\mathbf{Y}^*\right)} {(4 \kappa^*)^2\frac{\sigma_{r+1}^2(\mLambda)}{2}}.
    \end{align}
    We use \eqref{eq-R2:mD_ii^2-Sigma_jj} for the second inequality. Equation~\ref{eq:min|D_ii-sigma_ii|} is important in the proof because this essentially guarantees that $\mD_{ii}^2$ must be close to some $\mSigma_{jj}$. This is because $\frac{\alpha^2 \mu^2 \sigma_r^6\left(\mathbf{Y}^*\right)} {(4 \kappa^*)^2\frac{\sigma_{r+1}^2(\mLambda)}{2}}$ is guaranteed small according to Assumption \ref{assum:R2:Region with Negative Eigenvalue in the Riemannian Hessian}.
    
    We decompose $\mC_{\tilde{i}}$ into $\xi^1+\xi^2$ where $\xi^1_{j}=0$ for all $j\ge r+1$ and $\xi^2_{j}=0$ for all $j\in [r]$. Since $\langle \xi^1,\xi^2\rangle=0$ and $\mC^\top\mC =\I$, 
    \begin{align}\label{eq:xi_1^2+xi_2^2}
        \|\xi^1\|^2+\|\xi^2\|^2=1
    \end{align}

    In the following, we divide all the cases into different regimes based on which of the eigenvalues of $\mLambda$ is close to $\mD_{\tilde{i}\tilde{i}}$.
     
     \textbf{Case 2.1: \big(When $\frac{\sigma_{r+1}^2(\mLambda)}{2}\le\mD^2_{\tilde{i}\tilde{i}}\le \frac{\alpha \mu \sigma_r^3\left(\mathbf{Y}^*\right)} {2\sqrt{2} \kappa^*\sigma_{r+1}(\mLambda)}+\sigma_{r+1}^2(\mLambda)$\big)}.
     
     Notice that the first assumption in Assumption \ref{assum:R2:Region with Negative Eigenvalue in the Riemannian Hessian} essentially guarantees a small $\eee=\frac{\alpha \mu \sigma_r^3\left(\mathbf{Y}^*\right)} {2\sqrt{2} \kappa^*\sigma_{r+1}(\mLambda)}$.
    
    Hence, we have
    \begin{align*}
        \alpha^2 \mu^2 \sigma_r^6\left(\mathbf{Y}^*\right) /\left(4 \kappa^*\right)^2&\ge \sum_{j} \left( \mD_{\tilde{i}\tilde{i}}^2-\mSigma_{jj} \right)^2 \mD_{\tilde{i}\tilde{i}}^2 \mC_{j\tilde{i}}^2 &[\text{\eqref{eq-condition:D_ii^3C_i-D_iiSigmaC_i}}]\\ 
        &\ge  \sum_{j \le r}\left|\sigma_j^2(\mLambda) - \mD^2_{\tilde{i}\tilde{i}}\right|^2\cdot \mD_{\tilde{i}\tilde{i}}^2 \cdot \mC_{j\tilde{i}}^2 \\
        &\ge \left|\sigma_r^2(\mLambda) - \mD^2_{\tilde{i}\tilde{i}}\right|^2\cdot \mD_{\tilde{i}\tilde{i}}^2 \cdot \|\xi^1\|^2 \\
        &\ge \left|\sigma_r^2(\mLambda) - \eee -\sigma_{r+1}^2(\mLambda)\right|^2\cdot\frac{\sigma_{r+1}^2(\mLambda)}{2}\cdot \|\xi^1\|^2.
    \end{align*}
    Where in the last two inequalities, we use the condition $\frac{\sigma_{r+1}^2(\mLambda)}{2}\le\mD^2_{\tilde{i}\tilde{i}}\le \eee+\sigma_{r+1}^2(\mLambda)$ and that  $ e_1 < (\sigma_{r}^2(\mLambda) - \sigma_{r+1}^2(\mLambda))/2$ (follows from  Assumption \ref{assum:R2:Region with Negative Eigenvalue in the Riemannian Hessian}). 
    
    
    By reordering the inequality, we have
    \begin{align}\label{eq:xi1}
        \|\xi^1\|\le \frac{\eee}{\left|\sigma_r^2(\mLambda) - \eee-\sigma_{r+1}^2(\mLambda)\right|}
    \end{align}
    
    Recall that $\my=\mU\mD\mV^\top$, then $\va^\top \my=\0$ reduces to $\va^\top\mU\mD\mV^\top=\0$. Since both $\mD,\mV\in \R^{r*r}$ are full rank, then we have $\va^\top \mU=\0$, in turn $\va^\top \overline{\mU}\mC=\0$ because $\mU=\overline{\mU}\mC$. Denote $\vb^\top\defeq \va^\top \overline{\mU}$, then 
    \begin{equation}\label{eq:va to vb}
        \begin{aligned}
            \max_{\va:\my^\top\va=\0} \frac{\va^\top \tx\va}{\|\va\|^2}
        &=\max_{\va:\va^\top \overline{\mU}\mC=\0} \frac{\va^\top \tx\va}{\|\va\|^2} & \\
        &= \max_{\va:\va^\top \overline{\mU}\mC=\0} \frac{\va^\top \overline{\mU}\mLambda\overline{\mU}^\top\va}{\|\va\|^2} &[\tx=\overline{\mU}\mLambda\overline{\mU}^\top] \\
        &= \max_{\vb:\vb^\top\mC=\0} \frac{\vb^\top\mLambda\vb}{\|\vb\|^2}  &[\overline{\mU}^\top\overline{\mU}=\I]
        \end{aligned}
    \end{equation}
    Since $\va\in \mathrm{Span}\{\overline{\mU}_{1,\dots,r+1}\}$, we have $\vb_{j}=0$ for $j> r+1$. From $\vb^\top\mC=\0$, we have $\vb^\top \mC_{\tilde{i}}=0$, which can be written as $\vb^\top (\xi^1+\xi^2)=0$. 
    Since there are in total $r$ constraints in $\vb^\top\mC=\0$, there must exist a $\vb$ satisfying the constraints $\vb^\top\mC=\0$, and the norm of $\vb_{r+1:n}$ is relatively small compared with the norm of $\vb_{1:r}$. Specifically, denote $\mC_{1:r}$ to be the $1^{\mathrm{st}}$ to $r^{\mathrm{th}}$ rows of $\mC$. 
    We consider $\vb$ to be $\vb^1+\vb^2$ such that $\vb^1_i=0$ for $i>r$, and $\vb^2_i=0$ for $i\in[r]$. We discuss two cases of $\mC_{1:r}\in\R^{r*r}$ in the following.

    \textbf{Case 2.1.1: If $\mC_{1:r}$ is not full rank}.
    
    In this case, there exists $\tilde{\vb}^1\in \R^{r}$ such that $\|\tilde{\vb}^1\|>0$  and $(\tilde{\vb}^1)^\top \mC_{1:r}=\0$. Therefore, by denoting $\bar{\vb}_{1:r}=t\tilde{\vb}^1+\vb^1_{1:r}$, and $\bar{\vb}_{r+1:n}=\vb^2_{r+1:n}$. From the definition of $\bar{\vb}$ and the fact that $\vb^\top\mC=\0$, we have $\bar{\vb}^\top\mC=\0$. By letting $t\to\infty$, we have
    \begin{align}\label{eq:C_1:r not full rank}
        \max_{\vb^\top\mC=\0} \frac{\vb^\top\mLambda\vb}{\|\vb\|^2}\ge \frac{\bar{\vb}^\top\mLambda\bar{\vb}}{\|\bar{\vb}\|^2}\ge  \sigma^2_r(\mLambda)
    \end{align}
    Combining \eqref{eq:C_1:r not full rank}, \eqref{eq:Hessian_theta_Y^1} and the Assumption that $\mD^2_{\tilde{i}\tilde{i}}\le \eee+\sigma_{r+1}^2(\mLambda)$, this implies,
    \begin{align}
        \overline{\operatorname{Hess} H([\mathbf{Y}])}\left[\theta^1_{\mathbf{Y}}, \theta^1_{\mathbf{Y}}\right] \leqslant -(\sigma^2_r(\mys)-\sigma_{r+1}(\mLambda)-\eee) \|\theta^1_{\mathbf{Y}}\|_\rF^2
    \end{align}
    According to Assumption \ref{assum:R2:Region with Negative Eigenvalue in the Riemannian Hessian}, this satisfies the bound in Theorem \ref{thm-R2:Region with Negative Eigenvalue in the Riemannian Hessian} with $\theta^1_\my $ being a negative escaping direction.
    
   \textbf{Case 2.1.2 : If $\mC_{1:r}$ is full rank}.
   In this case, we denote $\vb^2=\xi^2$. Since $\mC_{1:r}$ is full rank, there exists $\vb^1$ to have $(\vb^1_{1:r})^\top \mC_{1:r}=-(\vb^2)^\top \mC$; this is because $(\vb^1_{1:r})^\top \mC_{1:r}=-(\xi^2)^\top\mC$ has in total $r$ constraints, and there are in total $r$ parameters in $\vb^1_{1:r}$. Specifically, one can choose $\vb^1$ to be $\vb^1_{1:r}=-\xi^2 \mC (\mC_{1:r})^{-1}$ to satisfy $\vb^\top\mC=\0$. 
   In addition, from the specific condition $\vb^\top \mC_{\tilde{i}}=\0$, we know that 
    \begin{align}
        \vb^1\cdot\xi^1+\|\xi^2\|^2=0
    \end{align}
    By using the Cauchy inequality, this further implies that
    \begin{align}\label{eq:|b^1|>=}
        \|\vb^1\|\ge \frac{\|\xi^2\|^2}{\|\xi^1\|}
    \end{align}
    Since we only choose a specific $\vb$ such that $\vb^\top\mC=\0$ holds, we have
    \begin{equation}
        \begin{split}
            \max_{\vb^\top\mC=\0} \frac{\vb^\top\mLambda\vb}{\|\vb\|^2}&\ge \frac{(\vb^1+\vb^2)^\top\mLambda(\vb^1+\vb^2)}{\|\vb^1+\vb^2\|^2}\\
            &=\frac{(\vb^1)^\top\mLambda \vb^1 + (\vb^2)^\top\mLambda\vb^2}{\|\vb^1\|^2+\|\vb^2\|^2}\\
            &\ge \frac{(\vb^1)^\top\mLambda \vb^1}{\|\vb^1\|^2+\|\xi^2\|^2}\\
            &\ge \frac{\|\vb^1\|^2\cdot \sigma_r^2(\mLambda)}{\|\vb^1\|^2+\|\xi^2\|^2}\\
            &\ge \frac{\frac{\|\xi^2\|^4}{\|\xi^1\|^2}\cdot \sigma_r^2(\mLambda)}{\frac{\|\xi^2\|^4}{\|\xi^1\|^2}+\|\xi^2\|^2}\\
            &=\|\xi^2\|^2  \cdot \sigma_r^2(\mLambda)
        \end{split}
    \end{equation}
    where the first equality follows from the definition of $\vb^1$ and $\vb^2$; the second inequality follows from the assumption that $\mLambda$ is PSD, and $\vb^2=\xi^2$; the third inequality follows from the fact that $\vb^1_i=0$ for $i>r$; the fourth inequality follows from \eqref{eq:|b^1|>=}; the last equality follows from \eqref{eq:xi_1^2+xi_2^2}. By using \eqref{eq:va to vb}, this can be written as    
    \begin{align}\label{eq-a^top mlambda a>= bound}
    \max_{\va:\my^\top\va=\0} \frac{\va^\top \tx\va}{\|\va\|^2}\ge \|\xi^2\|^2  \cdot \sigma_r^2(\mLambda)
    \end{align}
    
    By the definition in \eqref{eq-def:va} and \eqref{eq:Hessian_theta_Y^1}, we have
    \begin{align*}
        \overline{\operatorname{Hess} H([\mathbf{Y}])}\left[\theta^1_{\mathbf{Y}}, \theta_{\mathbf{Y}}^1\right]
        &=2\min_{i}\mD^2_{ii}\cdot\|\va\|^2 - 2\va^\top \tx\va &[\text{\eqref{eq:Hessian_theta_Y^1}}]\\
        &\le 2\mD^2_{\tilde{i}\tilde{i}}\cdot\|\va\|^2-2\sigma_r^2(\mLambda) \|\xi^2\|^2\cdot\|\va\|^2&[\text{\eqref{eq-a^top mlambda a>= bound}}]\\
        &= 2\mD^2_{\tilde{i}\tilde{i}}\cdot\|\va\|^2-2\sigma_r^2(\mLambda) (1-\|\xi^1\|^2)\cdot\|\va\|^2 &[\|\xi^1\|^2+\|\xi^2\|^2=1]\\
        &\le \left(-2\sigma_r^2(\mLambda) (1-\|\xi^1\|^2)+2\eee+2\sigma_{r+1}^2(\mLambda) \right)\|\theta^1_\my\|^2
    \end{align*}
    where the last inequality follows from $\mD^2_{\tilde{i}\tilde{i}}\le \eee+\sigma_{r+1}^2(\mLambda)$ and the fact that $\|\theta_\my^1\|=\|\va\|$. Finally, by applying \eqref{eq:xi1} to control $\|\xi^1\|$, we conclude that
    \begin{align}\label{eq:second bound}
        \overline{\operatorname{Hess} H([\mathbf{Y}])}\left[\theta^1_{\mathbf{Y}}, \theta_{\mathbf{Y}}^1\right]
        \le -2\left(\sigma_r^2(\mLambda) \left(1-\frac{\eee^2}{\left|\sigma_r^2(\mLambda) - \eee-\sigma_{r+1}^2(\mLambda)\right|^2}\right)-\eee-\sigma_{r+1}^2(\mLambda)\right)\|\theta^1_\my\|^2
    \end{align}
    According to the second assumption in Assumption \ref{assum:R2:Region with Negative Eigenvalue in the Riemannian Hessian}, \eqref{eq:second bound} guarantees an escape direction. 
    
    \textbf{Case 2.2: \big(When $\mD_{\tilde{i}\tilde{i}}^2> \eee+\sigma_{r+1}^2(\mLambda)$\big)}.
    
    Recall the first assumption in Assumption \ref{assum:R2:Region with Negative Eigenvalue in the Riemannian Hessian}, we have $\eee$ is small enough, which is viewed as an error term.  In the following, we will show that $\theta_\my^2$ is the escaping direction.
    We have
    \begin{align}\label{eq:R2-2b-D_ii|D_ii-Sigma_jj|}
        \min_{j} \mD_{\tilde{i}\tilde{i}}^2 |\mD_{\tilde{i}\tilde{i}}^2-\mSigma_{jj}|^2
        \le \mD_{\tilde{i}\tilde{i}}^2 \sum_{j} \left( \mD_{\tilde{i}\tilde{i}}^2-\mSigma_{jj} \right)^2 \mC_{j\tilde{i}}^2 
        \le \frac{\alpha^2 \mu^2 \sigma_r^6\left(\mathbf{Y}^*\right)} {(4 \kappa^*)^2}
    \end{align}
    where we use \eqref{eq-R2:mD_ii^2-Sigma_jj} in the last inequality.

Recall that Assumption \ref{assum:R2:Region with Negative Eigenvalue in the Riemannian Hessian} guarantees small  $\eee$ and $\e$ , by combining \eqref{eq:R2-2b-D_ii|D_ii-Sigma_jj|} and the assumption $\mD^2_{\tilde{i}\tilde{i}}>\sigma_{r+1}^2(\mLambda)+\eee$, we must have 
\begin{align}\label{eq:R2:2b:D_ii bound}
    \mD^2_{\tilde{i}\tilde{i}}\ge  \sigma_r^2(\mLambda)-\e
\end{align}
where $\e$ is defined in Assumption \ref{assum:R2:Region with Negative Eigenvalue in the Riemannian Hessian}. Otherwise, if $\sigma^2_{r+1}(\mLambda)+\eee<\mD^2_{\tilde{i}\tilde{i}}<\sigma^2_{r}(\mLambda)-\e$, this contradicts to \eqref{eq:R2-2b-D_ii|D_ii-Sigma_jj|}; see an illustration of this fact in Figure \ref{fig:thm proof}.

\begin{figure}
		\centering
		\includegraphics[scale=0.5]{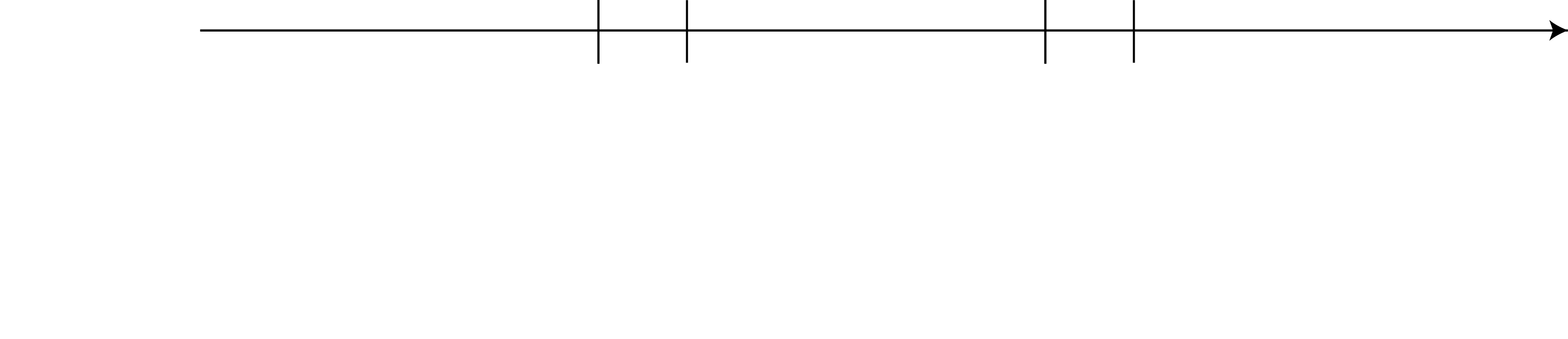}
        \put(-138,90){\tiny{$\sigma^2_{r}-\e$}}
        \put(-104,90){\tiny{$\sigma^2_{r}$}}
        \put(-213,90){\tiny{$\sigma^2_{r+1}+\eee$}}
        \put(-240,90){\tiny{$\sigma^2_{r+1}$}}
		\caption{The value of $\mD_{\tilde{i}\tilde{i}}$ must be close to some $\sigma_i(\mLambda)$ according to \eqref{eq:R2-2b-D_ii|D_ii-Sigma_jj|}. If $\mD^2_{\tilde{i}\tilde{i}}>\sigma^2_{r+1}+\eee$, then we must have $\mD^2_{\tilde{i}\tilde{i}}\ge \sigma^2_{r}-\e$.}
		\label{fig:thm proof}
	    \end{figure}
    
   In this scenario, we consider the escaping direction $\theta_\my^2$ to be $\mathbf{Y}-\mathbf{Y}^* \mathbf{Q}$. From the fact that $\mD_{ii}\ge \mD_{\tilde{i}\tilde{i}}$, we have
    %
    \begin{align*}
        \alpha^2 \mu^2 \sigma_r^6\left(\mathbf{Y}^*\right) /\left(4 \kappa^*\right)^2
        &\ge \sum_{i=1}^n\sum_{j=1}^n \left( \mD_{ii}^2-\mSigma_{jj} \right)^2 \mD_{ii}^2 \mC_{ji}^2 &[\text{\eqref{eq-condition:D_ii^3C_i-D_iiSigmaC_i}}]\\ 
        &\ge \sum_{i=1}^n\sum_{j=r+1}^n \left|\sigma_{j}^2(\mLambda) + \e -\sigma_r^2(\mLambda)\right|^2\cdot \mD_{ii}^2 \mC_{ji}^2
    \end{align*}
    where \eqref{eq:R2:2b:D_ii bound} and the first assumption in Assumption \ref{assum:R2:Region with Negative Eigenvalue in the Riemannian Hessian} guarantees the last inequality because $\e$ is small with respect to $\sigma_{r}^2(\mLambda) -\sigma_{r+1}^2(\mLambda)$. Therefore, 
    \begin{align}\label{eq-R2-2b:xi^2 bound}
        \sum_{i=1}^n\sum_{j=r+1}^n \mD_{ii}^2 \mC_{ij}^2 \le \frac{\ee^2}{\left|\sigma_{r}^2(\mLambda) - \e-\sigma_{r+1}^2(\mLambda)\right|^2} 
    \end{align}
    where $\ee$ is defined in Assumption \ref{assum:R2:Region with Negative Eigenvalue in the Riemannian Hessian}. Recall that $\ee$ is small enough, guaranteed in Assumption \ref{assum:R2:Region with Negative Eigenvalue in the Riemannian Hessian}. Also recall that $\e=\frac{\eee}{\sqrt{2}}$, which is guaranteed to be small enough as in the first assumption in Assumption \ref{assum:R2:Region with Negative Eigenvalue in the Riemannian Hessian}, so $\sigma_r(\mLambda)^2-\ee-\sigma_{r+1}^2(\mLambda)>0$.
    
    Denote $\mSigma_{(r+1):n}$ to be a diagonal matrix with only $r+1^\mathrm{th}$ to $n^{\mathrm{th}}$ eigenvalues of $\mSigma$, then we have
    %
    \begin{equation}\label{eq:<Delta_n-mxs,YY>}
        \begin{split}
            \langle \tx-\mxs,\my\my^\top\rangle
            &=\langle \tx-\mxs,\mU\mD^2\mU^\top\rangle\\
            &=\langle \tx-\mxs,\overline{\mU}\mC\mD^2\mC^\top\overline{\mU}^\top\rangle\\
            &=\langle \mSigma_{(r+1):n},\mC\mD^2\mC^\top\rangle\\
            &\le \sigma_{r+1}^2(\mLambda)\sum_{j=r+1}^n\sum_{i}\mC_{ij}^2\mD^2_{ii}\\
            &\le  \frac{\ee^2\sigma_{r+1}^2(\mLambda)}{\left|\sigma_{r}^2(\mLambda) - \e -\sigma_{r+1}^2(\mLambda)\right|^2}
        \end{split}
    \end{equation}
    where the last inequality follows from \eqref{eq-R2-2b:xi^2 bound}. \eqref{eq:<Delta_n-mxs,YY>} directly implies,
    \begin{equation}\label{eq:<Delta_n-mxs,theta_Y theta_Y>}
        \begin{split}
            \langle \tx-\mxs,\theta^2_\my(\theta_\my^2)^\top\rangle
        &=
        \langle \tx-\mxs,\my\my^\top\rangle \\
        &\le
        \frac{\ee^2\sigma_{r+1}^2(\mLambda)}{\left|\sigma_{r}^2(\mLambda) - \e -\sigma_{r+1}^2(\mLambda)\right|^2}
        \end{split}
    \end{equation}
    because $(\tx-\mxs)\mys=\0$ and $\theta_\my^2=\my-\my^*\boldsymbol{Q}$.
    
    Recall $\mxs=\mys\my^{*\top}$. A simple calculation yields 
    \begin{align}\label{eq:YthetaY+thetaYY}
        \mathbf{Y} (\theta_{\mathbf{Y}}^2)^{\top}-\mathbf{X}^*+\theta_{\mathbf{Y}}^2(\theta_{\mathbf{Y}}^2)^{\top}=\mathbf{Y} (\theta_{\mathbf{Y}}^2)^{\top}+\theta_{\mathbf{Y}}^2\mathbf{Y}^{\top}
    \end{align}
    and by using \eqref{eq-def:gradHessian}, 
    \begin{equation}\label{eq:<grad,theta>}
        \begin{split}
            \begin{aligned}
\langle\overline{\operatorname{grad} H([\mathbf{Y}])}, \theta_\mathbf{Y}^2\rangle &=\left\langle 2\left(\mathbf{Y} \mathbf{Y}^{\top}-\tx\right) \mathbf{Y}, \theta_{\mathbf{Y}}^2\right\rangle \\
&= \left\langle 2(\my\my^\top - \tx), \theta_\my^2 \my^\top\right\rangle \\
&=\left\langle\mathbf{Y} \mathbf{Y}^{\top}-\tx, \theta_{\mathbf{Y}}^2 \mathbf{Y}^{\top}+\mathbf{Y} (\theta_{\mathbf{Y}}^2)^{\top}\right\rangle &[\text{first argument is symmetric}]\\& =\left\langle\mathbf{Y} \mathbf{Y}^{\top}-\tx, \theta_{\mathbf{Y}}^2 (\theta_{\mathbf{Y}}^2)^{\top}+\mathbf{Y} \mathbf{Y}^{\top}-\mathbf{X}^*\right\rangle.
\end{aligned}
        \end{split}
    \end{equation} 
where the last equality follows from \eqref{eq:YthetaY+thetaYY}.
    \begin{align*}
        \begin{aligned}
&\overline{\operatorname{Hess} H([\mathbf{Y}])}\left[\theta^2_{\mathbf{Y}}, \theta^2_{\mathbf{Y}}\right]
=\left\|\mathbf{Y} (\theta^2_{\mathbf{Y}})^{\top}+\theta^2_{\mathbf{Y}} \mathbf{Y}^{\top}\right\|_{\mathrm{F}}^2
+2\left\langle\mathbf{Y} \mathbf{Y}^{\top}-\tx, \theta^2_{\mathbf{Y}} (\theta^2_{\mathbf{Y}})^{\top}\right\rangle &[\text{\eqref{eq-def:gradHessian}}]\\
&=\left\|\mathbf{Y} \mathbf{Y}^{\top}-\mathbf{X}^*+\theta^2_\mathbf{Y} (\theta^2_{\mathbf{Y}})^{\top}\right\|_{\mathrm{F}}^2+2\left\langle\mathbf{Y} \mathbf{Y}^{\top}-\tx, \theta^2_{\mathbf{Y}} (\theta^2_{\mathbf{Y}})^{\top}\right\rangle &[\text{\eqref{eq:YthetaY+thetaYY}}] \\
&=\left\|\theta^2_{\mathbf{Y}} (\theta^2_{\mathbf{Y}})^{\top}\right\|_{\mathrm{F}}^2+\left\|\mathbf{Y} \mathbf{Y}^{\top}-\mathbf{X}^*\right\|_{\mathrm{F}}^2+ 4\left\langle\mathbf{Y} \mathbf{Y}^{\top}-\mxs, \theta^2_{\mathbf{Y}} (\theta^2_{\mathbf{Y}})^{\top}\right\rangle\\
&-2\left\langle \tx-\mxs, \theta^2_{\mathbf{Y}} (\theta^2_{\mathbf{Y}})^{\top}\right\rangle\\
&=\left\|\theta^2_{\mathbf{Y}} (\theta^2_{\mathbf{Y}})^{\top}\right\|_{\mathrm{F}}^2-3\left\|\mathbf{Y} \mathbf{Y}^{\top}-\mathbf{X}^*\right\|_{\mathrm{F}}^2+4\left\langle\mathbf{Y} \mathbf{Y}^{\top}-\mathbf{X}^*, \mathbf{Y} \mathbf{Y}^{\top}-\mathbf{X}^*+\theta^2_{\mathbf{Y}} (\theta^2_{\mathbf{Y}})^{\top}\right\rangle\\
&-2\left\langle \tx-\mxs, \theta^2_{\mathbf{Y}} (\theta^2_{\mathbf{Y}})^{\top}\right\rangle\\
&=\left\|\theta^2_{\mathbf{Y}} (\theta^2_{\mathbf{Y}})^{\top}\right\|_{\mathrm{F}}^2-3\left\|\mathbf{Y} \mathbf{Y}^{\top}-\mathbf{X}^*\right\|_{\mathrm{F}}^2+4\left\langle\mathbf{Y} \mathbf{Y}^{\top}-\tx, \mathbf{Y} \mathbf{Y}^{\top}-\mathbf{X}^*+\theta^2_{\mathbf{Y}} (\theta^2_{\mathbf{Y}})^{\top}\right\rangle\\
&+2\left\langle \tx-\mxs, \theta^2_{\mathbf{Y}} (\theta^2_{\mathbf{Y}})^{\top}\right\rangle+4\left\langle \tx-\mxs,\mathbf{Y} \mathbf{Y}^{\top}-\mathbf{X}^*\right\rangle\\
&=\left\|\theta^2_{\mathbf{Y}} (\theta^2_{\mathbf{Y}})^{\top}\right\|_{\mathrm{F}}^2-3\left\|\mathbf{Y} \mathbf{Y}^{\top}-\mathbf{X}^*\right\|_{\mathrm{F}}^2+4\left\langle \tx-\mxs,\mathbf{Y} \mathbf{Y}^{\top}-\mathbf{X}^*\right\rangle\\
& +2\left\langle \tx-\mxs, \theta^2_{\mathbf{Y}} (\theta^2_{\mathbf{Y}})^{\top}\right\rangle+4\left\langle\overline{\operatorname{grad} H([\mathbf{Y}])}, \theta^2_{\mathbf{Y}}\right\rangle &[\text{\eqref{eq:<grad,theta>}}]
\end{aligned}
\end{align*}
This decomposes $\overline{H([\mathbf{Y}])}\left[\theta^2_{\mathbf{Y}}, \theta^2_{\mathbf{Y}}\right]$ into 2 parts, which will be bounded separately.

First, for $\left\|\theta^2_{\mathbf{Y}} (\theta^2_{\mathbf{Y}})^{\top}\right\|_{\mathrm{F}}^2-3\left\|\mathbf{Y} \mathbf{Y}^{\top}-\mathbf{X}^*\right\|_{\mathrm{F}}^2+2\left\langle \tx-\mxs, \theta^2_{\mathbf{Y}} (\theta^2_{\mathbf{Y}})^{\top}\right\rangle+4\left\langle \tx-\mxs,\mathbf{Y} \mathbf{Y}^{\top}-\mathbf{X}^*\right\rangle$, we have
\begin{align*}
\begin{aligned}
&\left\|\theta^2_{\mathbf{Y}} (\theta^2_{\mathbf{Y}})^{\top}\right\|_{\mathrm{F}}^2-3\left\|\mathbf{Y} \mathbf{Y}^{\top}-\mathbf{X}^*\right\|_{\mathrm{F}}^2+2\left\langle \tx-\mxs, \theta^2_{\mathbf{Y}} (\theta^2_{\mathbf{Y}})^{\top}\right\rangle\\
&+4\left\langle \tx-\mxs,\mathbf{Y} \mathbf{Y}^{\top}-\mathbf{X}^*\right\rangle\\
&\le-\left\|\mathbf{Y} \mathbf{Y}^{\top}-\mathbf{X}^*\right\|_{\mathrm{F}}^2+2\left\langle \tx-\mxs, \theta^2_{\mathbf{Y}} (\theta^2_{\mathbf{Y}})^{\top}\right\rangle\\
&+4\left\langle \tx-\mxs,\mathbf{Y} \mathbf{Y}^{\top}-\mathbf{X}^*\right\rangle &[\text{\eqref{app:eq:(Y1-Y2Q)}} ]\\
&=-\left\|\mathbf{Y} \mathbf{Y}^{\top}-\mathbf{X}^*\right\|_{\mathrm{F}}^2+2\left\langle \tx-\mxs, \theta^2_{\mathbf{Y}} (\theta^2_{\mathbf{Y}})^{\top}\right\rangle+4\left\langle \tx-\mxs,\mathbf{Y} \mathbf{Y}^{\top}\right\rangle &[\left\langle \tx-\mxs, \mxs\right\rangle =0 ] \\
&\le -\left\|\mathbf{Y} \mathbf{Y}^{\top}-\mathbf{X}^*\right\|_{\mathrm{F}}^2+6\frac{\ee^2\sigma_{r+1}^2(\mLambda)}{\left|\sigma_{r}^2(\mLambda) - \e +\sigma_{r+1}^2(\mLambda)\right|^2}
&[\text{\eqref{eq:<Delta_n-mxs,YY>},\eqref{eq:<Delta_n-mxs,theta_Y theta_Y>}}] \\
&\le -2(\sqrt{2}-1) \sigma_r^2\left(\mathbf{Y}^*\right)\left\|\theta^2_{\mathbf{Y}}\right\|_{\mathrm{F}}^2 +6\frac{\ee^2\sigma_{r+1}^2(\mLambda)}{\left|\sigma_{r}^2(\mLambda) - \e +\sigma_{r+1}^2(\mLambda)\right|^2} 
&[ \text{\eqref{App:eq:d([Y1,Y2])}}]\\
\end{aligned}
    \end{align*}
Second, for $\left\langle\overline{\operatorname{grad} H([\mathbf{Y}])}, \theta^2_{\mathbf{Y}}\right\rangle$,
\begin{align*}
    \left\langle\overline{\operatorname{grad} H([\mathbf{Y}])}, \theta^2_{\mathbf{Y}}\right\rangle\le &\|\overline{\operatorname{grad} H([\mathbf{Y}])}\|_{\mathrm{F}}\left\|\theta^2_{\mathbf{Y}}\right\|_{\mathrm{F}} \\
&\le\alpha \sigma_r^2\left(\mathbf{Y}^*\right)\left\|\theta^2_{\my}\right\|_{\mathrm{F}}^2 
\end{align*}
where the last inequality is because $\|\overline{\operatorname{grad} H([\mathbf{Y}])}\|_{\mathrm{F}} \leqslant \alpha \mu \sigma_r^3\left(\mathbf{Y}^*\right) /\left(4 \kappa^*\right)$. According to the definition of $\mathcal{R}_2$ in \eqref{eq-def:R in text}, $\my\in \mathcal{R}_2$ also implies $d\left([\mathbf{Y}],\left[\mathbf{Y}^*\right]\right)>\mu \sigma_r\left(\mathbf{Y}^*\right) / \kappa^*$, then
$$
\|\overline{\operatorname{grad} H([\mathbf{Y}])}\|_{\mathrm{F}} \leqslant \alpha d\left([\mathbf{Y}],\left[\mathbf{Y}^*\right]\right) \sigma_r^2\left(\mathbf{Y}^*\right) / 4=\alpha\left\|\theta^2_{\mathbf{Y}}\right\|_{\mathrm{F}} \sigma_r^2\left(\mathbf{Y}^*\right) / 4
$$

By combining the above three inequalities, we have
\begin{align*}
    &\overline{\operatorname{Hess} H([\mathbf{Y}])}\left[\theta^2_{\mathbf{Y}}, \theta^2_{\mathbf{Y}}\right]=\left\|\theta^2_{\mathbf{Y}} (\theta^2_{\mathbf{Y}})^{\top}\right\|_{\mathrm{F}}^2-3\left\|\mathbf{Y} \mathbf{Y}^{\top}-\mathbf{X}^*\right\|_{\mathrm{F}}^2\\
    &+4\left\langle\overline{\operatorname{grad} H([\mathbf{Y}])}, \theta^2_{\mathbf{Y}}\right\rangle+2\left\langle \tx-\mxs, \theta^2_{\mathbf{Y}} (\theta^2_{\mathbf{Y}})^{\top}\right\rangle
    +4\left\langle \tx-\mxs,\mathbf{Y} \mathbf{Y}^{\top}-\mathbf{X}^*\right\rangle \\
    &\le(\alpha-2(\sqrt{2}-1)) \sigma_r^2\left(\mathbf{Y}^*\right)\left\|\theta^2_{\my}\right\|_{\mathrm{F}}^2 +6\frac{\ee^2\sigma_{r+1}^2(\mLambda)}{\left|\sigma_{r}^2(\mLambda) - \e -\sigma_{r+1}^2(\mLambda)\right|^2} \\
    &\le \left((\alpha-2(\sqrt{2}-1)) \sigma_r^2\left(\mathbf{Y}^*\right)+6\frac{\alpha^2 \sigma_r^4\left(\mathbf{Y}^*\right)\sigma^2_{r+1}(\mLambda) /16}{\left|\sigma_{r}^2(\mLambda) - \e-\sigma_{r+1}^2(\mLambda)\right|^2 }\right)\left\|\theta^2_{\my}\right\|_{\mathrm{F}}^2
\end{align*}
where the last inequality follows from $\mu\sigma_r\left(\mathbf{Y}^*\right)/\kappa^*\le d\left([\mathbf{Y}],\left[\mathbf{Y}^*\right]\right)=\|\theta_\my\|_{\mathrm{F}} $ and the definition of $\ee$ in Assumption \ref{assum:R2:Region with Negative Eigenvalue in the Riemannian Hessian}.

Finally, according to the third assumption in Assumption \ref{assum:R2:Region with Negative Eigenvalue in the Riemannian Hessian}, one can guarantee the right-hand side of this bound is negative, which implies that $\theta_\my^2$ is the escaping direction in this scenario.

Combining all the discussion, this finishes the proof of this theorem.
\end{proof}

\begin{remark}
    Theorem \ref{thm-R2:Region with Negative Eigenvalue in the Riemannian Hessian} suggests that if some spectral values of $\my$ are small, then the descent direction $\theta_\my^1$ should increase them, If all of the spectral values of $[\my]$ are large enough compared with $\sigma_r(\mLambda)$, then $\theta_\my^2$ should directly point $[\my]$ to $[\mys]$. Theorem \ref{thm-R2:Region with Negative Eigenvalue in the Riemannian Hessian} fully characterizes the regime of $\my$ with respect to different minimum spectral values of $\my$.
    
    \begin{itemize}[nosep, leftmargin=*]
        \item If any spectral value of $\my\my^\top$ is smaller than $\frac{\sigma_{r+1}^2(\mLambda)}{2}$, then we have 
        \begin{align*}
            \overline{\operatorname{Hess} H([\mathbf{Y}])}\left[\theta^1_{\mathbf{Y}}, \theta_{\mathbf{Y}}^1\right]\le -\frac{\sigma_{r+1}^2(\mLambda)}{2}\|\theta^1_\my\|^2
        \end{align*}
        \item When the smallest absolute spectral value of $\my\my^\top$ is larger than $\frac{\sigma_{r+1}^2(\mLambda)}{2}$ and smaller than $\eee+\sigma_{r+1}^2(\mLambda)$, then we have 
        \begin{align*}
            \overline{\operatorname{Hess} H([\mathbf{Y}])}\left[\theta^1_{\mathbf{Y}}, \theta_{\mathbf{Y}}^1\right]\le
            -2\left(\sigma_r^2(\mLambda) \left(1-\frac{\eee^2}{\left|\sigma_r^2(\mLambda) - \eee-\sigma_{r+1}^2(\mLambda)\right|^2}\right)-\eee-\sigma_{r+1}^2(\mLambda)\right)\|\theta^1_\my\|^2
        \end{align*}
        \item If all of the spectral values of $\my\my^\top$ is larger than $\frac{\alpha \mu \sigma_r^3\left(\mathbf{Y}^*\right)} {2\sqrt{2} \kappa^*\sigma_{r+1}(\mLambda)}+\sigma_{r+1}^2(\mLambda)$, then we have $\overline{\operatorname{Hess} H([\mathbf{Y}])}\left[\theta^2_{\mathbf{Y}}, \theta_{\mathbf{Y}}^2\right]$ is smaller than 
        \begin{align*}
         \left((\alpha-2(\sqrt{2}-1)) \sigma_r^2\left(\mathbf{Y}^*\right)+6\frac{\alpha^2 \sigma_r^4\left(\mathbf{Y}^*\right)\sigma^2_{r+1}(\sigma) /16}{\left|\sigma_{r}^2(\mLambda) - \e-\sigma_{r+1}^2(\mLambda)\right|^2}\right)\left\|\theta^2_{\mathbf{Y}}\right\|_{\mathrm{F}}^2
        \end{align*}
    \end{itemize}
\end{remark}

\begin{remark}
    The eigengap assumption is crucial in discussing the three regions of the minimum singular value of $\my$. Without this eigengap assumption and under the current quotient geometry, the third regime cannot lead to a strong convexity result because any span on the eigenspace are all global solutions. We comment that it is possible to change the quotient geometry to show a new strong convexity result when the eigengap assumption does not hold.
\end{remark}



Finally, we look at the last main result. Theorem \ref{thm:R3} guarantees that when $\my\in \mathcal{R}_3$, the magnitude of the Riemannian gradient descent is large. The proof of Theorem \ref{thm:R3} directly follows from the proof of \cite{luo2022nonconvex} without any modification. Hence, we do not repeat it here. Notice that $\my\in \mathcal{R}_3$ does not require Assumption \ref{assum-riemannian:eigengap in text} because $\mathcal{R}_3$ describes the case that $[\my]$ is far away from the FOSP.

\largeGradient*

\end{document}